\documentclass[, ]{informs3}
\OneAndAHalfSpacedXI 

\usepackage{breqn}
\usepackage{amsmath}
\usepackage{amssymb}
\usepackage{subcaption}
\usepackage{bm}
\usepackage{url}
\usepackage{graphicx}
\usepackage{epstopdf}
\usepackage{epsfig,psfrag}
\usepackage{placeins}
\usepackage{verbatim}  
\usepackage{enumerate}
\usepackage{color}
\usepackage{pdfsync}
\usepackage{hyperref}
\usepackage{algorithm}
\usepackage{algpseudocode}
\usepackage{pifont}
\usepackage{placeins}
\usepackage{bbm}
\usepackage{mathrsfs}

\usepackage[T2A]{fontenc}
\usepackage[utf8]{inputenc}

\usepackage{geometry}

\usepackage{tikz}\usetikzlibrary{patterns,angles,calc,arrows.meta,decorations.markings}
\usepackage{tkz-euclide}
\usepackage{thmtools,thm-restate}
\usepackage{empheq}

\global\long\def\E{\mathbb{E}}
\global\long\def\bproof{\emph{Proof.}~}

\global\long\def\R{\mathit{R}}

\global\long\def\L{\mathit{L}}
\global\long\def\U{\mathit{U}}

\global\long\def\tK{\mathit{\tilde{K}}}
\global\long\def\n{\mathsf{n}}
\global\long\def\Nm{\mathsf{N^-}}
\global\long\def\Np{\mathsf{N^+}}

\global\long\def\1{\mathbf{1}}

\global\long\def\mu{\nu}
\global\long\def\sigma{s}
\global\long\def\pi{\zeta}

\global\long\def\LNEW{\underline{\mathcal{B}}}
\global\long\def\UNEW{\overline{\mathcal{B}}}
\global\long\def\MNEW{\mathcal{R}}
\global\long\def\barLNEW{\mathcal{B}^c}
\global\long\def\aNEW{\mathsf{a}}
\global\long\def\hNEW{\mathsf{h}}

\global\long\def\I{\mathcal{D}}
\global\long\def\Lambda{\Pi}
\global\long\def\Psi{\Sigma}

\declaretheorem{selfthm}

\usepackage{natbib}
 \bibpunct[, ]{(}{)}{,}{a}{}{,}%
 \def\bibfont{\small}%
\TheoremsNumberedThrough     

\EquationsNumberedThrough

\MANUSCRIPTNO{ }
\renewcommand{\theARTICLETOP}{}
\begin{document}

\RUNAUTHOR{Zhengli Wang}

\RUNTITLE{Strategies for Milestone-driven Start-ups}

\ARTICLEAUTHORS{
\AUTHOR{Zhengli Wang}
\AFF{Faculty of Business \& Economics, the University of Hong Kong, \EMAIL{wzl1@hku.hk}} 
}

\TITLE{Strategies for Milestone-driven Start-ups \\  in  Multi-activity Settings}
\ABSTRACT{\textbf{Problem definition:} New venture start-ups need to ``survive'' through multiple stages of reaching milestone targets. We investigate the strategies for start-ups in a milestone-oriented setting. We examine a model of an entrepreneurial start-up firm, where its state is captured by a diffusion process. The entrepreneur can choose between multiple activities (or controls), which incur different cost and determine the drift and the variance of the process. Depending on whether the process reaches a fixed upper boundary or a lower one, the start-up firm succeeds or fails. \textbf{Methodology/results:} 
Continuous-time stochastic models with multiple ($\ge 3$) controls are typically very challenging to deal with. In this work, we are able to completely solve for the optimal policy and provide an explicit characterization of its structure. In particular, the optimal policy only uses controls from a set characterized by a so-called \textit{efficient frontier} curve that orders the controls by two intuitive measures: \textit{riskiness} (drift-to-volatility ratio) and \textit{cost-effectiveness} (drift-to-cost ratio). A unique feature of our model is that depending on the model parameters, the \textit{efficient frontier} curves can be of different types, resulting in qualitatively different structures of the optimal policy. As far as we know, this is the first study that analyzes a stochastic control model which admits \textit{efficient frontier} curves of different types. \textbf{Managerial implications:} Our work provides start-up firms with intuitive measures to evaluate their activities and offers valuable insights on how the optimal strategies in a milestone-oriented setting change qualitatively contingent upon the specific scenario. We believe the results provide a foundational block in the study of entrepreneurial decision-making.}
\KEYWORDS{entrepreneurship, start-up, venture creation, milestone, stochastic control, stochastic differential equation, Hamilton-Jacobian-Bellman equation, dynamic programming}

\maketitle

\section{Introduction}

There have been substantial interest in studying and understanding how to build a successful venture start-up, as the amount of global venture investment has reached $\$643$ billion in $2021$, which represents an average of more than $60\%$ yearly growth since the last $10$ years ([\cite{teare2022global}]). At a high level, new venture start-ups need to ``survive'' through multiple stages, with the investors providing the capital in each stage ([\cite{lerner2020venture}], [\cite{colombo2021use}]). 

In particular, a start-up firm aims to reach a certain milestone or target ([\cite{mas2017investor}]): if it manages to do so, then the investors provide additional funding (otherwise, the firm may have to be abandoned ([\cite{eisenmann2021startups}])). For instance, Calm is a startup that provides guided meditations and other relaxation exercises through its meditation and sleep app ([\cite{nytimes2017calm}]). In the early days, the milestone targets were focused on customer acquisition. The company raised a \$88 million round of funding after reaching 1 million paying subscribers ([\cite{forbes2019meditation}]). For another instance, Slack is a team collaboration and communication tool that enables users to chat, share files, and work together in groups ([\cite{ftimes2015slack}]). The company's milestone targets were focused on increasing the number of daily active users (DAUs). The company raised \$427 million after reaching 8 million DAUs ([\cite{techcrunch2018slack}]). To achieve the milestone target, a startup firm typically engages in different activities that depend on the firm's stage ([\cite{wang2016key}]). These activities help to create value and reach milestones ([\cite{schindehutte2001understanding}]). For instance, in the case of Calm and Slack, the activities can be different marketing and promotional activities, such as displaying advertisements in various platforms ([\cite{insider2022slack}]), offering credits to selected users after successful referral to their friends ([\cite{saasquatch2022slack}]) and offering discounts to certain group of targeted users ([\cite{wethrift2023save}]). 

The goal of this work is to investigate the strategies for start-ups in a milestone-oriented setting. The central question is: what strategy may be best-suited for the startup in such a setting? Inherently this is a very difficult question as the entrepreneurial process represents a very complex phenomenon and (1) there is a lack of framework to guide the entrepreneur within a start-up and (2) it is probably very difficult to model every aspect of the process. Therefore as a first step, researchers analyze simpler and stylized models that capture some of the most salient features in the process (see e.g.  [\cite{crama2008milestone}], [\cite{besbes2012dynamic}], [\cite{harrison2015investment}]), which include but are not limited to uncertainty, limited resources and different activities that can be undertaken to influence outcomes of the firm.

This paper develops a general framework for start-ups operating in a milestone-driven environment. In the framework, the state of the start-up firm is modeled as a diffusion process. The entrepreneur chooses among different costly activities that affect the drift and variance of the process. The entrepreneur receives a reward when the process hits an upper boundary (which represents a predetermined milestone target) and receives nothing when the process hits a lower boundary. The objective is to maximize the expected terminal payoff minus the total costs incurred by the chosen activities.

\textbf{Main Results.} We explicitly characterize the multi-control setting's optimal policy, which can be represented by a sequence of disjoint intervals on the state-axis where within each interval the entrepreneur uses a certain control (i.e. the optimal policy is Markovian and the optimal control to use only depends on the state). The optimal policy has two features: (1) it uses only the controls from a set characterized by a so-called \textit{efficient frontier} curve; (2) its \textit{control sequence}, which is the order arrangement of the controls as the state increases, follows the a natural ordering specified by the \textit{efficient frontier} curve. The \textit{efficient frontier} can be obtained via a simple numerical procedure that selects and sorts the controls by two measures: drift-to-volatility ratio (which we call \textit{riskiness}) and drift-to-cost ratio (which we call \textit{cost-effectiveness}), which we show represent sufficient statistics.

We show that the \textit{efficient frontier} and the natural ordering specified by it has two different forms that belong to two regimes, which are characterized by the distance between the fixed lower and upper boundaries. When the two boundaries are relatively near, the optimal policy uses increasingly \textit{riskier} and more \textit{cost-effective} controls on a type-I \textit{efficient frontier} as the state of the firm increases. On the other hand, when the two boundaries are relatively far away, the optimal policy uses increasingly less \textit{risky} controls and more \textit{cost-effective} ones on a type-II \textit{efficient frontier}, as the state increases. We further illustrate the value of the optimal policy by comparing it with the myopic policy (the policy that only uses the most \textit{cost-effective} control) and show that there can be a large gap between the two policies' expected payoffs.

\textbf{Technical contributions.} Continuous-time stochastic models with multiple ($\ge 3$) controls are typically very challenging to deal with, and very few papers are able to provide a closed-form characterization of the optimal policy under fully general model parameters. To find a tractable solution to our formulation, we overcome two main difficulties. 

The first difficulty is to be able to prescribe the consideration set of optimal controls and be able to specify the order arrangement of them as the state varies (assuming the solution is Markovian and the optimal control used only depends on the state). By exploiting the structure of the sequence of ordinary differential equations (ODEs) associated with the value function, we can show two key observations: (1) when the value function has steeper slope at a state, then controls with a higher drift-to-cost ratio and a higher drift-to-volatility ratio will be used at that state; (2) the value function can be either strictly convex or strictly concave, depending on the distance between the two boundaries. These two observations help us identify the two distinctly different forms of \textit{efficient frontier} curves (which result in the two different consideration sets of controls that can potentially be used in the optimal policy), each associated with a different control order arrangement as the state varies. As far as we know, this is the first study that analyzes a stochastic control model that admits \textit{efficient frontier} curves of different types. This is in contrast with previous studies (such as [\cite{radner1996risk}]), where the \textit{efficient frontier}  curve is always of the same type and does not depend on the underlying parameters of the problem. 

Even with the prescription of an \textit{efficient frontier} curve and the order arrangement of the controls, another difficulty is that we do not know which control is the first one to be used in the optimal policy (here, ``first control'' means the control used when the state is very close to the lower boundary). This issue adds a significant degree of complication to the model as we cannot directly characterize the value function and verify that it is optimal. This is also in stark contrast with previous literature. For instance in [\cite{harrison2015investment}], the first control used in the optimal policy is always the one with the highest cost-to-information quality ratio. This is not the case for our setting, where the optimal control sequence can start with \textit{any} control on the \textit{efficient frontier}. To deal with the problem, we apply an adaptation of [\cite{harrison2015investment}]'s approach. In particular, we design a set of sufficiently smooth candidate value functions parametrized by a parameter (which represents the initial slope of the value function), where varying its value can lead to different initial control used in the control sequence but keeping the order arrangement of the controls on the \textit{efficient frontier}. We show that the set of trial value functions is ordered by the specified parameter, and there exists a unique one which leads to the optimal value function that solves the HJB equation. The method we present here can potentially be applied to other fixed-boundary stochastic control problems.

\textbf{Managerial Contributions.} This paper has many managerial contributions. 
First, we show that all controls can be characterized by two measures: \textit{riskiness} (drift-to-volatility ratio) and \textit{cost-effectiveness} (drift-to-cost ratio). These two measures can be easily interpreted by the entrepreneur. They are sufficient to prescribe the optimal policy, and they provide a clear illustration of the trade-off between drift, variance and cost.

Second, we prescribe the set of controls that can potentially be used in the optimal policy. This can be of great value to the entrepreneur because when there are many controls available, it is sufficient for the entrepreneur to focus only on those controls that belong to this set. This can potentially reduce the complexity of decision making significantly, as the entrepreneur can possibly get rid of a large number of controls in the consideration set. 

Lastly, we obtain the optimal policy and explicitly illustrate its structure. At a high level, the structure depends on how far the fixed lower boundary is from the upper one: when it is relatively near, the entrepreneur should use increasingly \textit{riskier} and more \textit{cost-effective} controls as the firm's performance metric improves; when it is relatively far away, the entrepreneur should shift towards less \textit{risky} and more \textit{cost-effective} controls as the firm's performance metric improves. The optimal policy is simple, intuitive, and can be easily understood by the entrepreneur.
        
\textbf{Free Lower Boundary Variation. } We also solve the free lower boundary analog of the multi-control problem, which represents a setting where the entrepreneur has the flexibility to decide when to abandon the start-up firm. We show that in the free lower boundary case, there emerges yet another type of \textit{efficient frontier} curve that is different from the two types in the fixed lower boundary case, based on which we characterize the optimal policy. In addition, we provide tight lower and upper bounds for the optimal lower boundary, which we show can be characterized by the \textit{effective-drift} of just two controls. This can help provide guidance for the entrepreneur to set the lower boundary at an appropriate place.

\textbf{Organization of the Paper. } The remainder of the paper is organized as follows. \S \ref{sec:related lit} discusses the related literature and \S \ref{sec:Model Formulation} presents the model formulation. \S \ref{sec: prelim def} introduces some necessary definitions and provides numerical examples on why the intuitions from the $2$-control setting may not hold in the multi-control one. \S \ref{sec:FxLB} presents the optimal policy for the fixed lower boundary case and provides the intuition behind it. \S \ref{sec:FrLB} presents the optimal policy for the free lower boundary variation and contrasts it with the fixed lower boundary one. \S  \ref{sec:Implications} summarizes the key insights for entrepreneurs and investors, and finally \S \ref{sec:conclusion} presents the concluding remarks.

\section{Related literature}
\label{sec:related lit}

Continuous-time stochastic models with multiple ($\ge 3$) controls are typically intractable to deal with, and very few papers are able to provide a closed-form characterization of the optimal policy under fully general model parameters. To our knowledge, only two such papers exist in the Operations Management community: [\cite{radner1996risk}]  and [\cite{harrison2015investment}]. [\cite{radner1996risk}] solves the optimal operating strategy for a dividend-paying firm whose net earnings are uncertain and evolve as a Brownian motion.  [\cite{harrison2015investment}] studies a firm that can employ different controls to obtain information about project prospects with a continuous-time Bayesian framework. Similar to these two papers, we solve the problem by explicitly constructing an \textit{efficient frontier} curve, which specifies the set of controls that may be used in the optimal policy. Differentiating from the two papers, our formulation gives rise to a richer structure of the optimal policy. This is mainly manifested in two ways. First, depending on the model parameters, the \textit{efficient frontier} curves can be of different types, resulting in qualitatively different structures of the optimal policy. Second, within the same type of \textit{efficient frontier}, the first control used in the optimal control sequence (order arrangement of the controls as the state increases) may not be fixed. In fact, in our formulation, the first control can be \textit{any} one on the \textit{efficient frontier}. This results in a more complex yet richer structure of the optimal policy and is in stark contrast with [\cite{radner1996risk}] and [\cite{harrison2015investment}] (e.g. for [\cite{harrison2015investment}], the first control used in the optimal policy is always the one with the highest cost-to-information quality ratio). 


A closely related study, [\cite{wang2022new}], models a new venture with diffusion control. This work can be viewed as a generalization with multiple controls. The generalization is highly non-trivial. Relative to [\cite{wang2022new}], the current paper demonstrates the limitation of their results and show that their insights may be misleading in the multi-control setting (see \S\ref{sec:mtvEgTwoCtrlInsightsFail}). The current paper also differentiates from [\cite{wang2022new}] by explicitly introducing a new metric, the \textit{cost-effectiveness}. This new metric possesses a highly intuitive interpretation, helps provide valuable insights into the optimal policy’s structure and illustrates a clear trade-off among each control’s characteristics (namely drift, variance and cost). In addition, the current paper utilizes entirely different and novel proof techniques that involve prescribing the \textit{efficient frontier} set of optimal controls.

Recent works in Operations Management have used drift-variance diffusion control to study decision-making under uncertainty, mainly in investment behavior and product launch. Investment-focused works include [\cite{kwon2011acquisition}], which examines firm exit decisions, and [\cite{sunar2021competitive}], which explores investment timing and size. Product launch studies include [\cite{lobel2016optimizing}], which optimizes launch time and pricing, [\cite{sunar2019optimal}], which analyzes launch timing and effort in influencing technological evolution, and [\cite{araman2022diffusion}], which studies the pre-launch assortment selection. Other applications include inventory system management ([\cite{araman2009dynamic}], [\cite{wu2014optimal}]), healthcare diagnostic services ([\cite{alizamir2013diagnostic}]) or clinical trials ([\cite{wang2020adaptive}]), effort allocation or incentive design ([\cite{chen2020optimal}], [\cite{li2024leveling}]), creator economy ([\cite{ma2025user}], [\cite{wang2025dynamic}]), Bayesian learning ([\cite{kuang2024weak}], [\cite{sunar2024optimal}]), resource allocation ([\cite{ata2019dynamic}]), customer churn prevention ([\cite{kanoria2023managing}]), and operational risk management ([\cite{kim2024operational}]). These works justify our modeling approach and highlight the challenges of solving drift-variance diffusion control problems. 

Our work also draws inspiration from the literature on lean startups and milestone financing. Specifically, under the lean startup framework of venture creation (see [\cite{blank2013lean}], [\cite{yoo2021theoretical}]), a start-up firm develops prototypes of increasing sophistication, receives responses from customers by introducing these prototypes to the market, and then confirms the product prospects once these prototypes gain sufficient traction from customers. In this regard, our model offers a mathematical abstraction of the problem of deciding between different prototypes. Another inspiration for our formulation comes from the literature on milestone financing (see [\cite{crama2008milestone}], [\cite{besbes2012dynamic}]), where firms either receive payment or incur penalties when they succeed or fail to reach a pre-specified milestone. They help motivate the modeling approach in this paper.

\section{Model Formulation}
\label{sec:Model Formulation}

We study a continuous-time model of a start-up firm, where its state is represented by a diffusion process $X(t)$ and an entrepreneur controls how the process evolves. We let $(\Omega, \mathcal{F}, \mathcal{P})$ denote a probability space upon which a standard Brownian Motion $B = \{B(t)\}_{t \ge 0}$ is defined, and let $\mathcal{F}_t$ denote the smallest sub-$\sigma$-algebra of $\mathcal{F}$ such that $\{B(s)\}_{0 \le s \le t}$ is measurable. At every $t \ge 0$, the entrepreneur chooses between multiple controls $i \in \I = \{1,2,3,...,N\}$, where control $i$ is associated with a cost rate $c(i) > 0$, an infinitesimal drift $\mu(i) > 0$ and a volatility $\sigma(i) > 0$. For the convenience of exposition, we will also use the notation $c(i)=c_i$, $\mu(i)=\mu_i$, $\sigma(i)=\sigma_i$, $\Lambda = \{c(i): i \in \I \}$, $\Theta = \{ \mu(i): i \in \I \}$ and $\Psi = \{ \sigma(i): i \in \I \}$. A control policy is defined as a right continuous process $\mathcal{A}=\{\mathcal{A}(t)\}_{t\ge 0}$ on $(\Omega, \mathcal{F}, \mathcal{P})$ that takes value in $\I$. Given a particular control policy $\mathcal{A}(t)$, the process $X(t)$ evolves in the following manner:
\begin{equation}
\label{eq:SDE}
dX(t) = \mu_{\mathcal{A}(t)} dt + \sigma_{\mathcal{A}(t)} dB(t).
\end{equation}

The entrepreneur receives a reward $\MNEW$ when $X(t)$ hits an upper boundary $\UNEW$. The upper boundary represents the milestone target, which is usually pre-determined and commonly assumed as an exogenous variable (see for example [\cite{besbes2012dynamic}]). To determine when the entrepreneur will abandon the startup, our model includes a lower boundary upon hitting which the entrepreneur will exit. We take two approaches regarding this boundary: for the fixed lower boundary case (\S\ref{sec:FxLB}), we treat the lower boundary as a fixed variable that is predetermined; for the free lower boundary extension (\S\ref{sec:FrLB}), we consider the lower boundary as a variable that the entrepreneur can adjust and optimize as necessary. 

The process $X(t)$ terminates when it hits either the lower boundary or the upper one (see Figure \ref{fig:diffProcessHitUpper}). Let $T  = \inf \{t \mid X(t) \not\in (\LNEW,\UNEW) \}$ denote the (random) stopping time. Let $\{\mathcal{F}_t^X\}_{t \ge 0}$ be the filtration generated by $\{X(t)\}_{t \ge 0}$ and $\mathcal{A} = \{\mathcal{A}(t)\}_{t \ge 0}$ is called an admissible control policy if it is adapted to $\{\mathcal{F}_t^X\}_{t \ge 0}$. For the remainder of the paper, we only focus on control policies that are admissible. The entrepreneur wants to find an admissible control policy $\{\mathcal{A}(t)\}_{t\ge 0}$ that maximizes the expected payoff
\begin{equation}
\label{func:objFunc}
V(x) = \E \left[  \int_{0}^{T} -c\left( \mathcal{A}(t) \right)  dt + \MNEW \mathbf{1}_{ \{X(T) = \UNEW \} } \bigg| X(0)=x 
\right].
\end{equation}

\begin{figure}[h]
    \centering
    \begin{subfigure}[b]{0.43\textwidth}
        \includegraphics[scale=0.55]{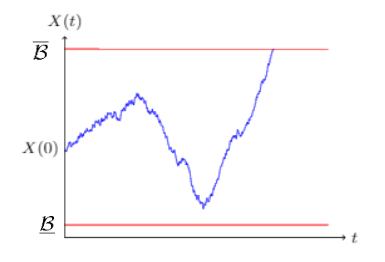}
        \caption{If the diffusion process hits $\UNEW$ first, the entrepreneur is rewarded with $\MNEW$.}
    \end{subfigure}
    \quad
    \quad
    \begin{subfigure}[b]{0.43\textwidth}
        \includegraphics[scale=0.55]{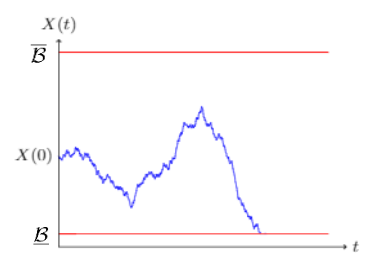}
        \caption{If the diffusion process hits $\LNEW$ first, the entrepreneur receives nothing.}
    \end{subfigure}
 \caption{Outcomes of the diffusion process when it hits the boundaries.}
 \label{fig:diffProcessHitUpper}
\end{figure}


\subsection{Discussion about Model Assumptions} 

The process described by (\ref{eq:SDE}) provides an abstraction of the venture creation process. The state variable $X(t)$ represents a measure of the value of the venture start-up. The controls represent activities that the start-up firm takes in the process of creating value. Such mathematical abstraction using a Brownian motion whose drift and variance depend on the control is a standard modeling technique commonly used in the Operations Management community, see for example [\cite{lobel2016optimizing}],  [\cite{sunar2019optimal}],  [\cite{kim2022investment}], [\cite{kanoria2023managing}].


In addition, we assume that the reward is $0$ when $X(t)$ hits the lower boundary $\LNEW$. This assumption is without loss of generality. If there is a non-zero reward $\tilde{\mathcal{R}}$ of hitting $\LNEW$, we can replace $\tilde{\mathcal{R}}$ by $0$ and $\MNEW$ with $\MNEW-\tilde{\mathcal{R}}$.
 
We assume that the reward $\MNEW$ is deterministic, which is also without loss of generality. The results of this paper do not change if the deterministic reward is replaced by a random variable with mean $\MNEW$.

In our model, we do not assume discounting, which imposes a time-based penalty for obtaining the reward. This is because our model focuses on the startup phase, which is typically a relatively short duration (ranging from one to two years). To account for the time-based effect of penalty, we note that we can incorporate a cost of delay into the model formulation. In particular, a cost of delay per unit time $e$ can be introduced, i.e. if the stopping time is $T$ then there is an additional penalty of $eT$. In this case, we can replace the cost $c_i$ by $c_i+e$ for all controls $i \in \I$ and the results of our model remain valid.


\subsection{Summary of Model Elements}

To summarize, for the fixed lower boundary problem, the following quantities are given:
\begin{itemize}
\item $\{\mu_i, \sigma_i, c_i\}_{i=1}^N$,
\item $\LNEW$, $\UNEW > 0$,
\item $\MNEW > 0$.
\end{itemize}
For the free lower boundary problem, the following quantities are given:
\begin{itemize}
\item $\{\mu_i, \sigma_i, c_i\}_{i=1}^N$,
\item $\UNEW > 0$,
\item $\MNEW > 0$,
\end{itemize}
where the lower boundary $\LNEW$ has become a decision variable.

\section{Preliminaries}
\label{sec: prelim def}

Since the state $X(t)$ summarizes all the information needed for purposes of decision-making, we restrict the policies under consideration to be Markovian (i.e. $\cal{A}(t)$ is a function of $X(t)$), in the forms of interval policies.

\begin{definition}
\label{def:kItvlPlc}
A set $\pi = \{(i_j,I_j)\}_{j=1}^k $ is a \textit{$k$-interval policy} for the fixed lower boundary problem, if the followings hold:

(i) $\forall j \in \{1,...,k\}, i_j \in \I$,

(ii) $\forall j \in \{1,...,k\}$, $I_j = [S_{j-1},S_j)$, where $\LNEW=S_0<S_1<...< S_{k-1}<S_k = \UNEW$,

(iii)  $\forall j \in \{1,...,k-1\}, i_j \not= i_{j+1}$.

Similarly, we will say a set $\hat{\pi} = \{(i_j,I_j)\}_{j=1}^k \cup \{\LNEW\}$ is a $k$-interval policy for the free lower boundary problem if (i), (ii) and (iii) above are satisfied. We will also refer a $k$-interval policy to be \it{static} if $k = 1$, and \it{dynamic} if $k \ge 2$.
\newline
\end{definition}

Essentially for a $k$-interval policy, $i_1$ is the control used when the state $X(t) \in [S_0, S_1)$, $i_2$ is the control used when the state $X(t) \in [S_1, S_2)$, and so on. Figure \ref{fig:k_intervalPolicies_def} presents examples of $k$-interval policies for $k=1,2,3,4$: the quantities below the axis represent switching or terminating boundaries and those above the axis represent the index of the control used under this policy when $X(t)$ is between these boundaries. We have two remarks. First, each disjoint interval's length must be strictly positive and neighboring intervals must involve different controls. Second, for the convenience of exposition we will refer to the policies in (a) ``3 only'', ``1 only''; in (b) ``31'', ``12''; and in (c) ``123'', ``2131''.

\begin{figure}[h]
    \centering
    
        \begin{subfigure}[b]{0.4\textwidth}
    \begin{tikzpicture}[xscale = 1.7]
    
        \draw[->] (-0.5,1.5) -- (2.5,1.5) node[pos=1, right, black] {$X(t)$};
\draw[-, blue] (0, 1.5)-- (2, 1.5)
node[pos=0.5, above] {use $3$};
\draw[-] (0,1.5+0.1) -- (0,1.5-0.1) node[pos=1, below, black] {$0$};
\draw[-] (2,1.5+0.1) -- (2,1.5-0.1) node[pos=1, below, black] {$\UNEW$};

\draw[->] (-0.5,0) -- (2.5,0) node[pos=1, right, black] {$X(t)$};
\draw[-, cyan] (0, 0)-- (2, 0)
node[pos=0.5, above] {use $1$};

\draw[-] (0,0.1) -- (0,-0.1) node[pos=1, below, black] {$0$};

\draw[-] (2,0.1) -- (2,-0.1) node[pos=1, below, black] {$\UNEW$};

    \end{tikzpicture}
    
    \caption{Examples of 1-interval policy}
    \end{subfigure}
    \qquad
    \begin{subfigure}[b]{0.4\textwidth}
    \begin{tikzpicture}[xscale = 1.7]
    
    \draw[->] (-0.5,1.5) -- (2.5,1.5) node[pos=1, right, black] {$X(t)$};
\draw[-, blue] (0, 1.5)-- (1.15, 1.5)
node[pos=0.5, above] {use $3$};
\draw[-, cyan] (1.15, 1.5) -- (2, 1.5)
node[pos=0.5, above] {use $1$};
\draw[-] (0,1.5+0.1) -- (0,1.5-0.1) node[pos=1, below, black] {$0$};
\draw[-] (1.15,1.5+0.1) -- (1.15,1.5-0.1) node[pos=1, below, black] {$S_1$};
\draw[-] (2,1.5+0.1) -- (2,1.5-0.1) node[pos=1, below, black] {$\UNEW$};

\draw[->] (-0.5,0) -- (2.5,0) node[pos=1, right, black] {$X(t)$};
\draw[-, cyan] (0, 0)-- (0.66, 0)
node[pos=0.5, above] {use $1$};
\draw[-, red] (0.66, 0) -- (2, 0)
node[pos=0.5, above] {use $2$};
\draw[-] (0,0.1) -- (0,-0.1) node[pos=1, below, black] {$0$};
\draw[-] (0.66,0.1) -- (0.66,-0.1) node[pos=1, below, black] {$S_1$};
\draw[-] (2,0.1) -- (2,-0.1) node[pos=1, below, black] {$\UNEW$};

    \end{tikzpicture}
    
    \caption{Examples of 2-interval policy}
    \end{subfigure}

        \begin{subfigure}[b]{0.4\textwidth}
    \begin{tikzpicture}[xscale = 2.1]
    \draw[-] (0,2) -- (0,2);
    \draw[->] (-0.5,1.5) -- (2.5,1.5) node[pos=1, right, black] {$X(t)$};
\draw[-, cyan] (0, 1.5)-- (0.5, 1.5) node[pos=0.5, above] {use $1$};
\draw[-, red] (0.5, 1.5)-- (1.15, 1.5) node[pos=0.5, above] {use $2$};
\draw[-, blue] (1.15, 1.5) -- (2, 1.5) node[pos=0.5, above] {use $3$};
\draw[-] (0,1.5+0.1) -- (0,1.5-0.1) node[pos=1, below, black] {$0$};
\draw[-] (0.5,1.5+0.1) -- (0.5,1.5-0.1) node[pos=1, below, black] {$S_1$};
\draw[-] (1.15,1.5+0.1) -- (1.15,1.5-0.1) node[pos=1, below, black] {$S_2$};
\draw[-] (2,1.5+0.1) -- (2,1.5-0.1) node[pos=1, below, black] {$\UNEW$};
\draw[-] (2,3) -- (2,3);

\draw[->] (-0.5,0) -- (2.5,0) node[pos=1, right, black] {$X(t)$};
\draw[-, red] (0, 0)-- (0.4, 0)
node[pos=0.5, above] {use $2$};
\draw[-, cyan] (0.4, 0) -- (0.9, 0)
node[pos=0.5, above] {use $1$};
\draw[-, blue] (0.9, 0) -- (1.6, 0)
node[pos=0.5, above] {use $3$};
\draw[-, cyan] (1.6, 0) -- (2.0, 0)
node[pos=0.5, above] {use $1$};
\draw[-] (0,0.1) -- (0,-0.1) node[pos=1, below, black] {$0$};
\draw[-] (0.4,0.1) -- (0.4,-0.1) node[pos=1, below, black] {$S_1$};
\draw[-] (0.9,0.1) -- (0.9,-0.1) node[pos=1, below, black] {$S_2$};
\draw[-] (1.6,0.1) -- (1.6,-0.1) node[pos=1, below, black] {$S_3$};
\draw[-] (2,0.1) -- (2,-0.1) node[pos=1, below, black] {$\UNEW$};

    \end{tikzpicture}
    
    \caption{Examples of $k$-interval policy, $k \ge 3$}
    \end{subfigure}
    \caption{Different types of interval-policies}
    \label{fig:k_intervalPolicies_def}
\end{figure}

To figure out which controls will be used in the optimal policy for the multi-control setting, we will need to define a few more useful quantities.

\begin{definition}
The \textit{effective-drift (ED)} $\aNEW_i$, for each control $i \in \I$, is defined as 
\begin{equation*}
\aNEW_i= 2 \mu_i/\sigma_i^2,
\end{equation*}
\end{definition}
We will call a control with a relatively large \textit{ED} a \textit{safe} control, a control with a relatively small \textit{ED} a \textit{risky} control. Intuitively, using a \textit{safer} control will make the process $X(t)$ advance towards the upper boundary with more certainty (the precise statement is shown by Lemma \ref{lem:HighEDHighProbHitU} in the Appendix).

\begin{definition}
\label{def:CE}
The \textit{cost-effectiveness (CE)} $CE_i$, for each control $i \in \I$, is defined as 
\begin{equation*}
CE_i= \mu_i/c_i.
\end{equation*}
\end{definition}

A control with a higher \textit{CE} means that: for per unit of cost, the control achieves higher drift. Intuitively, if all controls have $\sigma$ values that are very small and the reward $\MNEW$ is large, then controls with a higher \textit{CE} will be favored more. 
\newline

\textbf{\textit{ED} \& \textit{CE} are sufficient statistics. } Although a control is specified by three quantities: drift, variance and cost, it essentially can be represented just by its \textit{effective-drift} and \textit{cost-effectiveness}. In other words, if two controls $i$ and $j$ have $\aNEW_i = \aNEW_j$ and $CE_i = CE_j$, then the two controls are equivalent (in the sense that the resulting value function is the same). Below we provide a high-level intuition. 

\textbf{Intuition (Law of Terminal State).} Given that we are at state $X(t) = x_0$, and suppose control $i$ has parameters $(\mu_i, \sigma_i, c_i)$ and control $j$ has parameters $(\mu_j, \sigma_j, c_j)$ and they satisfy $\frac{\mu_i}{\sigma_i^2} = \frac{\mu_j}{\sigma_j^2}$ and $\frac{\mu_i}{c_i} = \frac{\mu_j}{c_j}$. Then if we use control $i$ for $\tau$ unit of time, then the terminal state $X(t+\tau) \sim N(x_0 + \mu_i \tau, \sigma_i^2 \tau)$ and we incur a total cost of $c_i \tau$. If we use control $j$ for $\frac{\mu_i}{\mu_j} \tau$ unit of time, then the terminal state $X(t+ \frac{\mu_i}{\mu_j} \tau) \sim N(x_0 + \mu_j \frac{\mu_i}{\mu_j} \tau, \sigma_j^2 \frac{\mu_i}{\mu_j} \tau ) = N(x_0 + \mu_i \tau, \sigma_i^2 \tau)$ and we incur a total cost of  $c_j \frac{\mu_i}{\mu_j} \tau = c_i \tau$, i.e. the distribution of the terminal state and the total cost incurred are the same. In other words, using control $i$ for $\tau$ unit of time is equivalent to using control $j$ for $\frac{\mu_i}{\mu_j} \tau$ unit of time. 


\begin{definition}
\label{def:h}
The \textit{scaled cost} $\hNEW_i$, for each control $i \in \I$, is defined as 
\begin{equation*}
\hNEW_i \triangleq \aNEW_i / CE_i = \frac{2 c_i}{\sigma_i^2}.
\end{equation*}
\end{definition}

The scaled cost of a control can be thought of as a scaled version of the cost that corresponds to the \textit{effective-drift}. It can also be interpreted as cost relative to volatility. In particular, a control's \textit{cost-effectiveness} can be computed as either $\mu_i/c_i$ or $\aNEW_i/\hNEW_i$.

Since $\aNEW_i$ and $CE_i$ (or equivalently $\aNEW_i$ and $\hNEW_i$) are sufficient statistics for each control $i$, from now on we can represent any quantity specific to a control to be in terms of $\aNEW_i$ and $CE_i$ (or equivalently $\aNEW_i$ and $\hNEW_i$). To avoid unnecessary technical complications, we will assume the \textit{effective-drift}, \textit{cost-effectiveness} and \textit{scaled cost} are different for all controls throughout the whole paper, unless specified otherwise.

\section{Fixed Lower Boundary Case}
\label{sec:FxLB}

In this section, we present the optimal policy for the fixed lower boundary case. \S \ref{subsec:OptPlcStrtrFxLB} provides a high-level description of the optimal policy's general structure. \S \ref{subsec:EF_FxLB} presents the optimal policy in detail. In particular, we will see that depending on whether the lower boundary $\LNEW$ is close to or far away from $\UNEW$, only controls in a certain subset will be used in the optimal policy (and this subset will be specified by the \textit{efficient frontier}). \S\ref{subsec:myoPlcBad_FxLB} compares the optimal policy with the myopic one and shows that the latter can perform arbitrarily worse than the former (with respect to the milestone reward $\MNEW$).

\subsection{High-Level: The Structure of the Optimal Policy}
\label{subsec:OptPlcStrtrFxLB}

We begin by presenting a high-level description of the optimal policy's general structure in Proposition \ref{prop:Kctrls_FxLB}. We will then provide an interpretation of it and explain the intuition behind why the optimal policy has such a structure.

\begin{restatable}{proposition}{OptPlcStrtrFxLB}
\label{prop:Kctrls_FxLB}
Fix $\{c_i, \mu_i, \sigma_i\}_{i=1}^N > 0$, $\LNEW$, $\UNEW$ with $\LNEW < \UNEW$, $\MNEW > 0$, there is an optimal policy that is at most $N $-interval. In particular, there exist $\tK \le N$ and switch thresholds $\LNEW = S^*_0 < S^*_1 < ... < S^*_\tK = \UNEW$ such that the optimal policy uses control $i^*_j$ in the interval $[S^*_{j-1}, S^*_j)$, $j = 1,2,...,\tK$. Moreover define $\barLNEW \triangleq \UNEW- \MNEW \max_i{CE_i}$, we have

\textbf{Case (1)} if $\LNEW > \barLNEW$ then $i^*_1, i^*_2, ..., i^*_{\tK}$ satisfy
\begin{align}
    &\aNEW(i^*_1) > \aNEW(i^*_2) > ... > \aNEW(i^*_{\tK}), \label{ineq:orderCrt1_FixedLB_AtLst1Eff} \\
    &CE(i^*_1) < CE(i^*_2) < ... < CE(i^*_{\tK}), \label{ineq:orderCrt2_FixedLB_AtLst1Eff}
\end{align}

\textbf{Case (2)} if $\LNEW < \barLNEW$ then $i^*_1, i^*_2, ..., i^*_{\tK}$ satisfy
\begin{align}
    &\aNEW(i^*_1) < \aNEW(i^*_2) < ... < \aNEW(i^*_{\tK}),  \label{ineq:orderCrt1_FixedLB_AllIneff} \\
    &CE(i^*_1) < CE(i^*_2) < ... < CE(i^*_{\tK}). \label{ineq:orderCrt2_FixedLB_AllIneff}
\end{align}
\end{restatable}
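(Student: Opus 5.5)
The plan is to work directly with the Hamilton--Jacobi--Bellman equation and reduce it to a one-dimensional autonomous ODE for the slope $p = V'$. The HJB equation for (\ref{func:objFunc}) is
\begin{equation*}
\max_{i\in\I}\Big\{ \tfrac{1}{2}\sigma_i^2 V''(x) + \mu_i V'(x) - c_i \Big\} = 0, \qquad V(\LNEW)=0,\ V(\UNEW)=\MNEW .
\end{equation*}
Since every $\sigma_i^2>0$, dividing the $i$-th term by $\sigma_i^2/2$ does not change its sign. The equation is therefore equivalent to $V'' = -F(V')$, where
\begin{equation*}
F(p) \triangleq \max_{i\in\I}\{\aNEW_i\, p - \hNEW_i\}.
\end{equation*}
Moreover, a control attains the HJB maximum at $x$ exactly when it attains the maximum defining $F(V'(x))$. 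This step shows that only $(\aNEW_i,\hNEW_i)$, equivalently $(\aNEW_i, CE_i)$, matter.

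The function $F$ is the upper envelope of lines with positive slopes $\aNEW_i$. Hence it is convex, piecewise linear and strictly increasing. Its unique zero is $p^* = \min_i \hNEW_i/\aNEW_i = 1/\max_i CE_i$. Along the upper envelope, as $p$ increases, the active line has strictly larger slope $\aNEW$; here I use the standing genericity assumption of distinct parameters.

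I would then compare the roots $\hNEW_i/\aNEW_i = 1/CE_i$ of consecutive envelope lines. Let $q$ be a breakpoint between lines $i$ and $j$, with $\aNEW_j<\aNEW_i$.
\begin{itemize}
\item If $F(q)>0$, then $1/CE_j = q - F(q)/\aNEW_j < q - F(q)/\aNEW_i = 1/CE_i$, so $CE_j > CE_i$.
\item If $F(q)<0$, the same computation gives $1/CE_i = q + |F(q)|/\aNEW_i < 1/CE_j$, so $CE_i > CE_j$.
\end{itemize}
So on the region $p>p^*$, moving to smaller $p$ decreases $\aNEW$ and increases $CE$. On the region $p<p^*$, moving to larger $p$ increases both $\aNEW$ and $CE$.

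Next I would use a shooting construction. For each initial slope $p_0>0$, solve $p' = -F(p)$ with $p(\LNEW)=p_0$, and set $V_{p_0}(x)=\int_{\LNEW}^x p$. Because $p^*$ is an equilibrium and $F$ is Lipschitz, trajectories never cross $p^*$. Thus either $p_0>p^*$, in which case $p$ decreases and $V$ is concave with $V_{p_0}(\UNEW) > p^*(\UNEW-\LNEW)$, or $p_0<p^*$, in which case $p$ increases and $V$ is convex with $V_{p_0}(\UNEW)<p^*(\UNEW-\LNEW)$. The terminal value $V_{p_0}(\UNEW)$ is continuous and strictly increasing in $p_0$ by ODE comparison. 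It tends to $0$ as $p_0\downarrow 0$, using that $-F(0) = \min_i \hNEW_i>0$ pushes the slope up only boundedly over a finite interval. It tends to $\infty$ as $p_0\to\infty$. Hence there is a unique $p_0$ with $V(\UNEW)=\MNEW$.

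The condition $\LNEW>\barLNEW$ is exactly $\MNEW/(\UNEW-\LNEW) > p^*$, which forces $p_0>p^*$. This gives Case (1): $V'$ decreases along $x$, and the envelope analysis yields (\ref{ineq:orderCrt1_FixedLB_AtLst1Eff})--(\ref{ineq:orderCrt2_FixedLB_AtLst1Eff}). The reverse inequality forces $p_0<p^*$ and gives Case (2) with (\ref{ineq:orderCrt1_FixedLB_AllIneff})--(\ref{ineq:orderCrt2_FixedLB_AllIneff}). Since $V'$ is strictly monotone, each envelope line is active on a single interval, at most once. This produces at most $N$ intervals with thresholds $S^*_j$, namely the points where $V'$ crosses the breakpoints of $F$.

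Finally, verification. The constructed $V$ is $C^1$ on $[\LNEW,\UNEW]$ and piecewise $C^2$, with $V''$ bounded and only finitely many kinks of $V''$. For an arbitrary admissible policy, apply the generalized Itô formula (Itô--Tanaka, or mollification, which is legitimate because $V''$ is bounded) to $V(X(t\wedge T))$. Then $\frac12\sigma_i^2V''+\mu_iV'-c_i\le 0$ gives
\begin{equation*}
V(x)\ge \E\Big[-\int_0^{T\wedge t}c(\mathcal{A})\,ds + V(X(T\wedge t))\Big].
\end{equation*}
Let $t\to\infty$ using $T<\infty$ a.s. (positive drifts bounded below, $\E T<\infty$) and boundedness of $V$. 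Equality holds for the interval policy, which is Markov and admissible because the SDE with piecewise-constant coefficients has a unique strong solution.

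I expect the main obstacle to be this verification step, specifically the generalized Itô argument at the switching points and the fact that $T$ has finite expectation under arbitrary admissible controls. The envelope/ordering argument and the shooting monotonicity should be comparatively routine. The boundary case $\LNEW=\barLNEW$, with $p_0 = p^*$ and a static policy using the most cost-effective control, is excluded by the statement.
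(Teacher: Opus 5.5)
Your approach is essentially the paper's argument in dual form. You rewrite the HJB as $V''=-F(V')$ with $F(p)=\max_i\{\aNEW_i p-\hNEW_i\}$, shoot on the initial slope, and read the control order off the breakpoints of $F$. The paper's trial functions $V_E(\cdot;q)$, $V_I(\cdot;q)$ are likewise ordered in the initial slope $q$. The breakpoints of your $F$ are exactly its thresholds $\xi_i$, the slopes of the lower convex hull of the points $(\aNEW_i,\hNEW_i)$. Your sign computation at the breakpoints correctly yields both orderings.

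\textbf{The gap.} The shooting step fails as written in Case (2). You restrict to $p_0>0$ and claim $V_{p_0}(\UNEW)\to 0$ as $p_0\downarrow 0$, which is false. For $p_0=0$ you have $p'(\LNEW)=-F(0)=\min_i\hNEW_i>0$, and $p$ then stays in $(0,p^*)$. So $V_0(\UNEW)=\int_{\LNEW}^{\UNEW}p\,dx$ is a fixed strictly positive number, and by monotonicity $\{V_{p_0}(\UNEW):0<p_0<p^*\}=(V_0(\UNEW),\,p^*(\UNEW-\LNEW))$. Whenever $\MNEW\le V_0(\UNEW)$, your argument produces no candidate value function at all. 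This does happen. Take a single control with $\aNEW=1$, $\hNEW=h$ and $\UNEW-\LNEW=1$. Then $p(x)=h(1-e^{-(x-\LNEW)})$ and $V_0(\UNEW)=h/e$, which exceeds $\MNEW$ as soon as $h>e\MNEW$. This instance is in Case (2), since $p^*=h>\MNEW$.

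This is not a technicality. In the fixed-boundary problem the entrepreneur cannot quit, so with high costs the true value function dips below zero near $\LNEW$ and $V'(\LNEW)<0$. The fix is to shoot over all $p_0\in\mathbb{R}$, as the paper does with $q\in(-\infty,q_{min})$. For any fixed control $k$ you have $F(p)\ge\aNEW_k p-\hNEW_k$, so comparison gives
\[
p(x)\le \hNEW_k/\aNEW_k+(p_0-\hNEW_k/\aNEW_k)e^{-\aNEW_k(x-\LNEW)} ,
\]
hence $V_{p_0}(\UNEW)\to-\infty$ as $p_0\to-\infty$. Together with continuity and strict monotonicity, this yields a unique $p_0$ for every $\MNEW>0$. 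Your envelope analysis already covers $p<0$ (where $F<0$), so the Case (2) orderings then follow unchanged.

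Note that the breakpoints at negative $p$ belong to the hull controls $\Nm,\dots,0$ to the left of the minimum-$\hNEW$ control. These are exactly the controls your restriction would have silently excluded, and they are why the paper's type-II frontier extends past control $1$.

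\textbf{What is sound.} The rest of your argument holds up:
\begin{itemize}
\item trajectories never cross $p^*$;
\item $\mathcal{C}^2$ regularity is automatic, since $V''=-F(V')$ is continuous;
\item the HJB inequality for off-hull controls is built into the maximum defining $F$;
\item the verification step is correct, using $\E T\le(\UNEW-x)/\min_i\mu_i$ under any admissible control.
\end{itemize}
On the last two points your argument is more complete than the paper's, which only asserts them.
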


\begin{figure}[h]
    \centering
    \scalebox{0.8}
    {
    \begin{tikzpicture}[xscale = 3.2]
\draw[->] (0,0) -- (5,0)
node[pos=0.22,fill=red,circle, inner sep=0pt,minimum size=4pt,label=below:\textcolor{red}{\scriptsize{$\barLNEW$}}]{}
node[pos=0.3,fill=blue,circle, inner sep=0pt,minimum size=4pt,label=below:\textcolor{blue}{\scriptsize{$\LNEW=S^*_0$}}]{}
node[pos=0.375, above] {\begin{tabular}{@{}c@{}} Safe \& Low $\frac{\mu_i}{c_i}$  \\  Ctrls\end{tabular}}
node[pos=0.45,fill=blue,circle, inner sep=0pt,minimum size=4pt,label=below:\textcolor{blue}{\scriptsize{$S^*_1$}}]{}
node[pos=0.59, above] {\begin{tabular}{@{}c@{}} Moderate \& \\ Medium $\frac{\mu_i}{c_i}$     Ctrls\end{tabular}}
node[pos=0.73,fill=blue,circle, inner sep=0pt,minimum size=4pt,label=below:\textcolor{blue}{\scriptsize{$S^*_2$}}]{}
node[pos=0.815, above] {\begin{tabular}{@{}c@{}} Risky \& High $\frac{\mu_i}{c_i}$   \\  Ctrls\end{tabular}}
node[pos=0.9,fill=blue,circle, inner sep=0pt,minimum size=4pt,label=below:\textcolor{blue}{\scriptsize{$\UNEW=S^*_{\tK}$}}]{}
node[pos=1, below] {$X(t)$};
\end{tikzpicture}
}
    \caption{High-level structure of the optimal policy for case (1): $\LNEW > \barLNEW$. In general, the optimal policy uses relatively \textit{safer} and less \textit{cost-effective} controls near $\LNEW$, uses relatively \textit{riskier} and more \textit{cost-effective} controls near $\UNEW$ and uses moderate ones in the intermediate. }
    \label{fig:PlcItptFixedLB_AtLeast1CtrlEff}
\end{figure}

\begin{figure}[h]
    \centering
    \scalebox{0.8}
    {
\begin{tikzpicture}[xscale = 3.2]
\draw[->] (0,0) -- (5,0)
node[pos=0.22,fill=red,circle, inner sep=0pt,minimum size=4pt,label=below:\textcolor{red}{\scriptsize{$\barLNEW$}}]{}
node[pos=0.1,fill=blue,circle, inner sep=0pt,minimum size=4pt,label=below:\textcolor{blue}{\scriptsize{$\LNEW=S^*_0$}}]{}
node[pos=0.25, above] {\begin{tabular}{@{}c@{}} Risky \& Low $\frac{\mu_i}{c_i}$  \\  Ctrls\end{tabular}}
node[pos=0.4,fill=blue,circle, inner sep=0pt,minimum size=4pt,label=below:\textcolor{blue}{\scriptsize{$S^*_1$}}]{}
node[pos=0.55, above] {\begin{tabular}{@{}c@{}} Moderate \& Medium $\frac{\mu_i}{c_i}$   \\  Ctrls\end{tabular}}
node[pos=0.7,fill=blue,circle, inner sep=0pt,minimum size=4pt,label=below:\textcolor{blue}{\scriptsize{$S^*_2$}}]{}
node[pos=0.8, above] {\begin{tabular}{@{}c@{}} Safe \& High $\frac{\mu_i}{c_i}$   \\  Ctrls\end{tabular}}
node[pos=0.9,fill=blue,circle, inner sep=0pt,minimum size=4pt,label=below:\textcolor{blue}{\scriptsize{$\UNEW=S^*_{\tK}$}}]{}
node[pos=1, below] {$X(t)$};
\end{tikzpicture}
}

    \caption{High-level structure of the optimal policy for case (2): $\LNEW < \barLNEW$. In general, the optimal policy uses relatively \textit{riskier} and less \textit{cost-effective} controls near $\LNEW$, uses relatively \textit{safer} and more \textit{cost-effective} controls near $\UNEW$ and uses moderate ones in the intermediate. }
    \label{fig:PlcItptFixedLB_AllCtrlsIneff}
\end{figure}

\textbf{Interpretation of Proposition \ref{prop:Kctrls_FxLB}. } Proposition \ref{prop:Kctrls_FxLB} presents the structure of the optimal policy for the fixed lower boundary case. The optimal policy is at most $N$-interval and its general structure depends on the relationship of $\LNEW$ with respect to $\barLNEW$. The general structure of the optimal policy for the case of $\LNEW > \barLNEW$ is demonstrated in Figure \ref{fig:PlcItptFixedLB_AtLeast1CtrlEff}, where the entrepreneur uses relatively \textit{safer} (higher \textit{effective-drift}) and less \textit{cost-effective} controls near the lower boundary $\LNEW$, uses relatively \textit{riskier} (lower \textit{effective-drift}) and more \textit{cost-effective} controls near the upper boundary $U$ and uses moderate ones in the intermediate. For instance, suppose there are $4$ controls (indexed $1,2,3,4$) with $\aNEW_1 < \aNEW_2 < \aNEW_3 < \aNEW_4$, $CE_1 < CE_2$ and $CE_2 > CE_3 > CE_4$. Then if $\LNEW > \barLNEW$, possible optimal candidate policies can be ``432'', ``43'', ``42'', ``32'', ``4 only'', ``3 only'' and  ``2 only'' (see Figure \ref{fig:pttlOptPlc_fixedLB}).  Note that we may be able to further rule out some of these candidate policies to be the optimal one, and we will present the details in \S \ref{subsec:EF_FxLB}.

In particular for the optimal policy in the case of $\LNEW > \barLNEW$, the controls are ordered by both their \textit{effective-drift} and \textit{cost-effectiveness}. The reader may wonder how the $2$-dimensional ordering is possible. For instance, what happens if there is a control whose \textit{effective-drift} and \textit{cost-effectiveness} are both the highest? The answer is that in such a scenario, no other controls can be ordered together with this control in the manner demonstrated in Figure \ref{fig:PlcItptFixedLB_AtLeast1CtrlEff} (i.e. Equations (\ref{ineq:orderCrt1_FixedLB_AtLst1Eff}) and (\ref{ineq:orderCrt2_FixedLB_AtLst1Eff})), and the optimal policy will be to use this control only. For example, suppose we have the $4$ controls described in Figure \ref{fig:pttlOptPlc_fixedLB} and we add in a control $5$ with $\aNEW_5 > \aNEW_4$ and $CE_5 > CE_2$, then the optimal policy must be ``5 only''.

\begin{figure}
    \captionsetup[subfigure]{labelformat=empty}
    \begin{subfigure}[b]{0.4\linewidth}
\centering
  \begin{tabular}{c|c|c}
 $\tK$ & \begin{tabular}{@{}c@{}}Possible\\Optimal Policy\end{tabular}  & Controls\\ \hline
  $3$ & ``432'' & $(i_1,i_2,i_3)=(4,3,2)$\\ \hline
    $2$ & ``43'' & $(i_1,i_2)=(4,3)$\\ \hline
$2$ & ``42'' & $(i_1,i_2)=(4,2)$\\ \hline
$2$ & ``32'' & $(i_1,i_2)=(3,2)$\\ \hline
$1$ & ``4 only'' & $i_1=4$\\ \hline
$1$ & ``3 only'' & $i_1=3$\\ \hline
$1$ & ``2 only'' & $i_1=2$\\ 
    
\end{tabular}
        \caption{\scriptsize{(a) Possible optimal policies for $\LNEW > \barLNEW$.}} \label{subfig:pttlOptPlc_atLeast1CtrlEff}
    \end{subfigure}
    \quad
    \quad
    \quad
    \quad
    \quad
    \begin{subfigure}[b]{0.4\linewidth}
      \centering
  \begin{tabular}{c|c|c}
 $\tK$ & \begin{tabular}{@{}c@{}}Possible\\Optimal Policy\end{tabular} & Controls\\ \hline
$2$ & ``12'' & $(i_1,i_2)=(1,2)$\\ \hline
$1$ & ``1 only'' & $i_1=1$\\ \hline
$1$ & ``2 only'' & $i_1=2$\\ 
    
\end{tabular}
        \caption{\scriptsize{(b) Possible optimal policies for $\LNEW < \barLNEW$.}} \label{subfig:pttlOptPlc_allCtrlsIneff}
    \end{subfigure}
\caption{Potential optimal policies when there are $4$ controls (indexed $1,2,3,4$) with $\aNEW_1 < \aNEW_2 < \aNEW_3 < \aNEW_4$, $CE_1 < CE_2$ and $CE_2 > CE_3 > CE_4$. }
\label{fig:pttlOptPlc_fixedLB}
\end{figure}

Similarly, the general structure of the optimal policy for the case of $\LNEW < \barLNEW$ is demonstrated in Figure \ref{fig:PlcItptFixedLB_AllCtrlsIneff}, where the entrepreneur uses relatively \textit{riskier} (lower \textit{effective-drift}) and less \textit{cost-effective} controls near the lower boundary $\LNEW$, uses relatively \textit{safer} (high \textit{effective-drift}) and more \textit{cost-effective} controls near the upper boundary $\UNEW$ and uses moderate ones in the intermediate. For instance, again consider the  $4$ controls in Figure \ref{fig:pttlOptPlc_fixedLB}. Then if $\LNEW < \barLNEW$, possible optimal policies can be ``12'', ``1 only'' and ``2 only''.

\textbf{Intuition of the Optimal Policy Structure. } Recall that $\barLNEW \triangleq \UNEW -\MNEW \max_i{CE_i}$. Intuitively, this quantity represents the break-even lower boundary threshold for the fluid analog of (\ref{eq:SDE}). Specifically, if the variance of the diffusion process is zero and the fixed lower boundary $\LNEW > \barLNEW$, the entrepreneur is guaranteed to have a positive total payoff anywhere between $\LNEW$ and $\UNEW$ with the most \textit{cost-effective} control.

Proposition \ref{prop:Kctrls_FxLB} essentially highlights the phase change of the optimal policy structure when the fixed lower boundary $\LNEW$ varies from above $\barLNEW$ to below it: in the case of $\LNEW > \barLNEW$, the entrepreneur uses relatively \textit{safe} controls near the lower boundary $\LNEW$ whereas in the case of $\LNEW < \barLNEW$ it is the reverse; however in both cases, the entrepreneur uses more \textit{cost-effective} controls near the upper boundary $\UNEW$. 

The intuition is as follows. In the former case where $\LNEW$ and $\UNEW$ are relatively near, the costs of the controls are essentially relatively small compared to the reward of hitting the upper boundary and the entrepreneur wants to aim for that boundary and avoid hitting the lower boundary. So when $X(t)$ is close to the lower boundary the entrepreneur wants to try as hard as possible to ``push'' the process away from it and thus uses the higher \textit{effective-drift} controls (even though they may be less cost-effective), because they are more likely to do so (see Lemma \ref{lem:HighEDHighProbHitU}). When $X(t)$ is close to the upper boundary $\UNEW$, all controls have a relatively high chance of hitting $\UNEW$, and hence the entrepreneur can afford to use a relatively more \textit{cost-effective} control that minimizes overall control costs. 

On the other hand in the latter case where $\LNEW$ and $\UNEW$ are relatively far, all controls become relatively costly compared to the reward, and the entrepreneur does not want to stay in the system for too long to avoid getting burnt by the control costs. The primary concern becomes the probability of hitting the \textit{nearest boundary}. When $X(t)$ is close to the $\LNEW$, the entrepreneur uses a relatively low \textit{effective-drift} control to achieve a higher probability of hitting the boundary. When $X(t)$ is close to the $\UNEW$, the entrepreneur uses a relatively high \textit{effective-drift} control because it not only is more \textit{cost-effective} but also achieves a higher chance of reaching the nearest boundary $\UNEW$.

We have now had an idea about the general structure of the optimal policy. To determine which controls are in the optimal policy, we will need a concept called the \textit{efficient frontier}, which specifies a subset of controls used in the optimal policy. \S \ref{subsec:EF_FxLB} presents this in further detail.

\subsection{Detailed Characterization of the Optimal Policy}
\label{subsec:EF_FxLB}

We are now in a position to present how we can obtain the exact optimal policy. Depending on whether $\LNEW > \barLNEW$ or $\LNEW < \barLNEW$, we will choose a subset of the controls and order them in terms of both \textit{effective-drift} and \textit{cost-effectiveness} according to (\ref{ineq:orderCrt1_FixedLB_AtLst1Eff}), 
(\ref{ineq:orderCrt2_FixedLB_AtLst1Eff}) or
(\ref{ineq:orderCrt1_FixedLB_AllIneff}), 
(\ref{ineq:orderCrt2_FixedLB_AllIneff}). Only controls in this subset will be used in the optimal policy, and they constitute what we call the \textit{efficient frontier}. 

A unique feature of our model is that the \textit{efficient frontier} can be of different types, depending on whether $\LNEW > \barLNEW$ or $\LNEW < \barLNEW$. Below, we characterize the type-I \textit{efficient frontier} for the case of $\LNEW > \barLNEW$ and the type-II one for the case of $\LNEW < \barLNEW$. \newline

\begin{figure}[h]
\FIGURE
{    
\begin{tikzpicture}[xscale = 1/11, yscale = 1/9]
\def\etaVec{{-3,0,10,20,30,40,50,60}}
\def\cVec{{11, 8, 9-3, 10-3, 12-3, 18-3, 28-3, 44-3}}
\def\Itcpt{6}
\draw[dashed] (-\Itcpt,0) -- ( \etaVec[4]*1.3+\Itcpt*1.3-\Itcpt, \cVec[4]*1.3);  
\foreach \i in {1,...,7}{
    \draw[-] (\etaVec[\i-1],\cVec[\i-1]) -- (\etaVec[\i],\cVec[\i]);
}
\path[ pattern= north east lines,pattern color = blue!30] ({-3,11}) -- ({0, 8}) -- ({10, 9-3}) -- ({20, 10-3}) -- ({30, 12-3}) --  ({40, 18-3}) --  ({50, 28-3}) --  ({60, 44-3}) --  ({60, 44}) --  ({-3, 44}) -- cycle;
\draw[dashed] ({-3,11}) --({-3, 44});
\draw[dashed] ({60, 44-3}) --({60, 44});
\foreach \i in {0}{
    \pgfmathtruncatemacro {\j}{\i-1}
    \filldraw[black] (\etaVec[\i],\cVec[\i]) circle (5pt) node[above right] {\scriptsize $(\aNEW_{\j},\hNEW_{\j})$};
}
\foreach \i in {1}{
    \pgfmathtruncatemacro {\j}{\i-1}
    \filldraw[black] (\etaVec[\i],\cVec[\i]) circle (5pt) node[below] {\scriptsize $(\aNEW_{\j},\hNEW_{\j})$};
}
\foreach \i in {2,...,3}{
    \pgfmathtruncatemacro {\j}{\i-1}
    \filldraw[black] (\etaVec[\i],\cVec[\i]) circle (5pt) node[above] {\scriptsize $(\aNEW_{\j},\hNEW_{\j})$};
}
\foreach \i in {4,...,7}{
    \pgfmathtruncatemacro {\j}{\i-1}
    \filldraw[black] (\etaVec[\i],\cVec[\i]) circle (5pt) node[below right] {\scriptsize $(\aNEW_{\j},\hNEW_{\j})$};
}
\def\addetaVec{{15,30.0,45}}
\def\addcVec{{25.3,21.2,31.6}}
\foreach \i in {0,...,2}{
    \pgfmathtruncatemacro {\j}{\i+7}
    \filldraw[black] (\addetaVec[\i],\addcVec[\i]) circle (5pt) node[above] {\scriptsize $(\aNEW_{\j},\hNEW_{\j})$};
}
\draw[-] (-\Itcpt,0) -- (60,0)    node[pos=1,label=below:\textcolor{black}{$\aNEW$}]{};
\draw[-] (-\Itcpt,0) -- (-\Itcpt,38) node[pos=1,label=above:\textcolor{black}{$\hNEW$}]{};
\end{tikzpicture}
}
{Control index labeling via the lower convex hull of all controls on the $(\aNEW,\hNEW)$-plane.  \label{fig:EF_FxLB_prep}}
{\scriptsize{In this example, there are a total of $11$ controls. $\Np = 6$, $\Nm = -1$, $\n = \argmax_i{CE_i} = 3$.}}
\end{figure}

\textbf{Control Index Labeling. } We arrange the controls on the $(\aNEW, \hNEW)$-plane and will label their indices as follows. First we obtain the lower convex hull (see Figure \ref{fig:EF_FxLB_prep}) induced by all the controls. Then, for the controls that reside on it, we denote control $1$ to be the one with the lowest scaled cost, i.e.
\begin{align}
    \hNEW_1 = \min_{i \in \I} \hNEW_i.
\end{align}
For the controls that are on the lower convex hull and are on the right of control $1$, we label them with index $2$, $3$, \dots, $\Np$ in increasing effective drifts (see Figure \ref{fig:EF_FxLB_prep}). Similarly, for the controls that are on the lower convex hull and are on the left of control $1$, we label them with index $0$, $-1$, \dots, $\Nm$ in decreasing effective drifts (see Figure \ref{fig:EF_FxLB_prep}). The reason of such indexing will become apparent later. Note that
\begin{align}
    \aNEW_{\Nm} = \min_{i \in \I} \aNEW_i, \text{ } \aNEW_{\Np} = \max_{i \in \I} \aNEW_i. 
\end{align}
For the remaining controls that are not on the lower convex hull, we arbitrarily label them with index $\Np+1$, $\Np+2$, \dots, etc. For the convenience of exposition, we will also denote control $\n$ to be the one with the highest \textit{cost-effectiveness}, i.e. 
\begin{align}
\n = \argmax_i{CE_i}. \label{eq:def_n_highestCE}
\end{align}
An example of the control index labeling is presented in Figure \ref{fig:EF_FxLB_prep}.

\textbf{Efficient Frontier for the case of $\LNEW > \barLNEW$. } The type-I \textit{efficient frontier}  consists of the controls $\n$, $\n+1$, ..., $\Np$ and they satisfy (see Figure \ref{fig:EF_FxLB}(a))
\begin{align}
& 0 < \aNEW_{\n} < \aNEW_{\n+1} ... < \aNEW_{\Np}, \text{ } 0 < \hNEW_{\n} < \hNEW_{\n+1} < ... < \hNEW_{\Np}, \label{ineq:EF_FxLB_1Eff1}\\
& 0 < \frac{\hNEW_{\n}}{\aNEW_{\n}} < \frac{\hNEW_{\n+1} - \hNEW_{\n}}{\aNEW_{\n+1} - \aNEW_{\n}} < \frac{\hNEW_{\n+2} - \hNEW_{\n+1}}{\aNEW_{\n+2} - \aNEW_{\n+1}} < ... < \frac{\hNEW_{\Np} - \hNEW_{\Np-1}}{\aNEW_{\Np} - \aNEW_{\Np-1}}, \label{ineq:EF_FxLB_1Eff2}\\
&\hNEW_i = \overline{\phi}(\aNEW_i) \text{, for $\n \le i \le \Np$,} \label{ineq:EF_FxLB_1Eff3}\\
&\hNEW_i \ge \overline{\phi}(\aNEW_i) \text{, for $i \le \n-1 $ or $i \ge \Np+1$,} \label{ineq:EF_FxLB_1Eff4}
\end{align}
where $\overline{\phi}(\cdot)$ is the strictly increasing, piece-wise linear and convex function that connects $(0,0), (\aNEW_{\n},\hNEW_{\n})$, $(\aNEW_{\n+1},\hNEW_{\n+1})$, ... , $(\aNEW_{\Np}, \hNEW_{\Np})$. Controls $\n$, $\n+1$, ... , $\Np$ are either the left endpoint, the right endpoint, or a point on $\overline{\phi}(\cdot)$ at which the slope changes. Denote $\overline{\Phi}$ to be the set of controls on $\overline{\phi}(\cdot)$, i.e. $\overline{\Phi} = \{\n,\n+1,...,\Np\}$, which represents the \textit{efficient frontier}  for the case of $\LNEW > \barLNEW$.

\begin{figure}[h]
    \captionsetup[subfigure]{labelformat=empty}
    \begin{subfigure}[b]{0.4\linewidth}
\centering
  \begin{tikzpicture}[xscale = 1/11, yscale = 1/9]
\def\etaVec{{-3,0,10,20,30,40,50,60}}
\def\cVec{{11, 8, 9-3, 10-3, 12-3, 18-3, 28-3, 44-3}}
\def\Itcpt{6}
\draw[-] (-\Itcpt,0) -- ( \etaVec[4], \cVec[4]);  
\foreach \i in {5,...,7}{
    \draw[-] (\etaVec[\i-1],\cVec[\i-1]) -- (\etaVec[\i],\cVec[\i]);
}
\foreach \i in {0}{
    \pgfmathtruncatemacro {\j}{\i-1}
    \filldraw[black] (\etaVec[\i],\cVec[\i]) circle (5pt) node[above right] {\scriptsize $(\aNEW_{\j},\hNEW_{\j})$};
}
\foreach \i in {1}{
    \pgfmathtruncatemacro {\j}{\i-1}
    \filldraw[black] (\etaVec[\i],\cVec[\i]) circle (5pt) node[below] {\scriptsize $(\aNEW_{\j},\hNEW_{\j})$};
}
\foreach \i in {2,...,3}{
    \pgfmathtruncatemacro {\j}{\i-1}
    \filldraw[black] (\etaVec[\i],\cVec[\i]) circle (5pt) node[above] {\scriptsize $(\aNEW_{\j},\hNEW_{\j})$};
}
\foreach \i in {4,...,7}{
    \pgfmathtruncatemacro {\j}{\i-1}
    \filldraw[black] (\etaVec[\i],\cVec[\i]) circle (5pt) node[below right] {\scriptsize $(\aNEW_{\j},\hNEW_{\j})$};
}
\def\addetaVec{{15,30.0,45}}
\def\addcVec{{25.3,21.2,31.6}}
\foreach \i in {0,...,2}{
    \pgfmathtruncatemacro {\j}{\i+7}
    \filldraw[black] (\addetaVec[\i],\addcVec[\i]) circle (5pt) node[above] {\scriptsize $(\aNEW_{\j},\hNEW_{\j})$};
}
\draw[-] (-\Itcpt,0) -- (60,0)    node[pos=1,label=below:\textcolor{black}{$\aNEW$}]{};
\draw[-] (-\Itcpt,0) -- (-\Itcpt,38) node[pos=1,label=above:\textcolor{black}{$\hNEW$}]{};
\end{tikzpicture}
        \caption{\scriptsize{(a) The \textit{efficient frontier}  (piecewise linear solid curve) for the case of $\LNEW > \barLNEW$: the piecewise linear function $\overline{\phi}(\cdot)$ that satisfies (\ref{ineq:EF_FxLB_1Eff3}) and (\ref{ineq:EF_FxLB_1Eff4}).}} \label{subfig:EF_FxLB_1Eff}
    \end{subfigure}
    \quad
    \quad
    \quad
    \quad
    \quad
    \begin{subfigure}[b]{0.4\linewidth}
      \centering
  \begin{tikzpicture}[xscale = 1/11, yscale = 1/9]
\def\etaVec{{-3,0,10,20,30,40,50,60}}
\def\cVec{{11, 8, 9-3, 10-3, 12-3, 18-3, 28-3, 44-3}}
\def\Itcpt{6}
\draw[dashed] (-\Itcpt,0) -- ( \etaVec[4], \cVec[4]);
\draw[-] (\etaVec[4], \cVec[4]) -- (\etaVec[4]*2+\Itcpt*2-\Itcpt, \cVec[4]*2); 
\foreach \i in {1,...,4}{
    \draw[-] (\etaVec[\i-1],\cVec[\i-1]) -- (\etaVec[\i],\cVec[\i]);
}
\foreach \i in {0}{
    \pgfmathtruncatemacro {\j}{\i-1}
    \filldraw[black] (\etaVec[\i],\cVec[\i]) circle (5pt) node[above right] {\scriptsize $(\aNEW_{\j},\hNEW_{\j})$};
}
\foreach \i in {1}{
    \pgfmathtruncatemacro {\j}{\i-1}
    \filldraw[black] (\etaVec[\i],\cVec[\i]) circle (5pt) node[below] {\scriptsize $(\aNEW_{\j},\hNEW_{\j})$};
}
\foreach \i in {2,...,4}{
    \pgfmathtruncatemacro {\j}{\i-1}
    \filldraw[black] (\etaVec[\i],\cVec[\i]) circle (5pt) node[above] {\scriptsize $(\aNEW_{\j},\hNEW_{\j})$};
}
\foreach \i in {5}{
    \pgfmathtruncatemacro {\j}{\i-1}
    \filldraw[black] (\etaVec[\i],\cVec[\i]) circle (5pt) node[above] {\scriptsize $(\aNEW_{\j},\hNEW_{\j})$};
}
\foreach \i in {6,...,7}{
    \pgfmathtruncatemacro {\j}{\i-1}
    \filldraw[black] (\etaVec[\i],\cVec[\i]) circle (5pt) node[below right] {\scriptsize $(\aNEW_{\j},\hNEW_{\j})$};
}
\def\addetaVec{{15,30.0,45}}
\def\addcVec{{25.3,21.2,31.6}}
\foreach \i in {0,...,2}{
    \pgfmathtruncatemacro {\j}{\i+7}
    \filldraw[black] (\addetaVec[\i],\addcVec[\i]) circle (5pt) node[above] {\scriptsize $(\aNEW_{\j},\hNEW_{\j})$};
}
\draw[-] (-\Itcpt,0) -- (60,0)    node[pos=1,label=below:\textcolor{black}{$\aNEW$}]{};
\draw[-] (-\Itcpt,0) -- (-\Itcpt,38) node[pos=1,label=above:\textcolor{black}{$\hNEW$}]{};
\end{tikzpicture}

        \caption{\scriptsize{(b) The \textit{efficient frontier}  (piecewise linear solid curve) for the case of $\LNEW < \barLNEW$: the piecewise linear function $\underline{\phi}(\cdot)$ that satisfies (\ref{ineq:EF_FxLB_AllIneff3}) and (\ref{ineq:EF_FxLB_AllIneff4}).}} \label{subfig:EF_FxLB_AllIneff}
    \end{subfigure}
\caption{The \textit{efficient frontier}  for $\LNEW > \barLNEW$ (left) and for $\LNEW < \barLNEW$ (right).}
\label{fig:EF_FxLB}
\end{figure}

\textbf{Efficient Frontier for the case of $\LNEW < \barLNEW$. } 
The type-II \textit{efficient frontier}  consists of the controls $\Nm$, ..., $1$, $2$, ..., $\n$ and they satisfy (see Figure \ref{fig:EF_FxLB}(b))
\begin{align}
&0 < \aNEW_1 < ... < \aNEW_{\n}, \text{ } 0 < \hNEW_1 < ... < \hNEW_{\n}, \label{ineq:EF_FxLB_AllIneff1}\\
& 0 < \aNEW_{\Nm} < \aNEW_{\Nm+1} < ... <  \aNEW_0 < \aNEW_1, \text{ } 0 < \hNEW_1 < \hNEW_0 < ... < \hNEW_{\Nm}, \label{ineq:EF_FxLB_AllIneff12}\\
& \frac{\hNEW_{\Nm+1} - \hNEW_{\Nm}}{\aNEW_{\Nm+1} - \aNEW_{\Nm}} ... < \frac{\hNEW_1 - \hNEW_0}{\aNEW_1 - \aNEW_0} < 0 < \frac{\hNEW_2 - \hNEW_1}{\aNEW_2 - \aNEW_1} < \frac{\hNEW_3 - \hNEW_2}{\aNEW_3 - \aNEW_2} < ... < \frac{\hNEW_{\n} - \hNEW_{\n-1}}{\aNEW_{\n} - \aNEW_{\n-1}}, \label{ineq:EF_FxLB_AllIneff2}\\
&\hNEW_i = \underline{\phi}(\aNEW_i) \text{, for $1 \le i \le \n$,} \label{ineq:EF_FxLB_AllIneff3}\\
&\hNEW_i \ge \underline{\phi}(\aNEW_i) \text{, for $i \ge  \n+1$,} \label{ineq:EF_FxLB_AllIneff4}
\end{align}
where $\underline{\phi}(\cdot)$ is the piece-wise linear and convex function that connects $(\aNEW_{\Nm}, \hNEW_{\Nm}), (\aNEW_1,\hNEW_1)$, $(\aNEW_2,\hNEW_2)$, ... , $(\aNEW_{\n},\hNEW_{\n})$. Controls $\Nm$, ... , $1$, $2$, ... , $\n$ are either the left endpoint, the right endpoint, or a point on $\underline{\phi}(\cdot)$ at which the slope changes. Denote $\underline{\Phi}$ to be the set of controls on $\underline{\phi}(\cdot)$, i.e. $\underline{\Phi} = \{\Nm, \Nm+1, ..., \n\}$.

\textbf{Statement of the Optimal Policy.} Having defined the \textit{efficient frontier}  for both cases of $\LNEW > \barLNEW$ and $\LNEW < \barLNEW$, we are now in a position to present the optimal policy. For either case, the optimal policy uses controls on the respective \textit{efficient frontier}  in a sequential and monotone manner as specified in Theorem \ref{thm:Kctrls_FxLB}. 

\begin{restatable}{theorem}{OptPlcFxLB}
\label{thm:Kctrls_FxLB}
For the fixed lower boundary problem, fix $\{c_i, \mu_i, \sigma_i\}_{i=1}^N > 0$, $\LNEW$, $\UNEW$ with $\LNEW < \UNEW$, $\MNEW > 0$, there is an optimal interval policy $\pi^* = \left\{\left(i_j^*,I_j^*=[S_{j-1}^*,S_j^*) \right)\right\}_{j=1}^\tK$, where $1 \le \tK \le \max\{\Np - \n + 1, \text{ } \n - \Nm + 1\}$. Moreover

\textbf{Case (1)} if $\LNEW > \barLNEW$ then $i_j^* \in \overline{\Phi}$ for $j = 1,2,...,\tK \le \Np - \n + 1$, and
\begin{align}
    & i_1^* \ge \n + (\tK - 1), \label{eq:FxLB_1Eff_ctrlOrder1} \\
    & i_{j+1}^* = i_1^* - j\text{, for $1 \le j \le \tK-1$,}\\
    &\aNEW(i_1^*) > \aNEW(i_2^*) > ... > \aNEW(i_{\tK}^*),\\ 
    &CE(i_1^*) < CE(i_2^*) < ... < CE(i_{\tK}^*); \label{eq:FxLB_1Eff_ctrlOrder4}
\end{align}

\textbf{Case (2)} if $\LNEW < \barLNEW$ then $i_j^* \in \underline{\Phi}$ for $j = 1,2,...,\tK \le \n - \Nm + 1$, and
\begin{align}
    & i_1^* \le \n - (\tK - 1), \label{eq:FxLB_AllIneff_ctrlOrder1}\\
    & i_{j+1}^* = i_1^* + j\text{, for $1 \le j \le \tK-1$,}\\
    &\aNEW(i_1^*) < \aNEW(i_2^*) < ... < \aNEW(i_{\tK}^*),  \\
    &CE(i_1^*) < CE(i_2^*) < ... < CE(i_{\tK}^*). \label{eq:FxLB_AllIneff_ctrlOrder4}
\end{align}
Furthermore, in both cases the value function $V(\cdot)$ is $\mathcal{C}^2$ over $[\LNEW,\UNEW)$ and satisfies
\begin{align}
&V(\LNEW) = 0, \text{ } V(\UNEW) = \MNEW, \label{eq:FxLB_suffCondOpt1}\\
& -c_i + \mu_i V'(x) + \frac{\sigma_i^2}{2}V''(x) \le 0 \text{, for all $x \in [\LNEW,\UNEW)$ and $i \in \I$,}\\
& -c_{i_j^*} + \mu_{i_j^*} V'(x) + \frac{\sigma_{i_j^*}^2}{2}V''(x) = 0 \text{, for $x \in [S_{j-1}^*,S_j^*)$, $1 \le j \le \tK$}. \label{eq:FxLB_suffCondOpt3}
\end{align}

\end{restatable}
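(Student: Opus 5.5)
The approach is a verification argument. We construct a $\mathcal{C}^2$ candidate $V$ on $[\LNEW,\UNEW]$ satisfying (\ref{eq:FxLB_suffCondOpt1})--(\ref{eq:FxLB_suffCondOpt3}), and then apply Itô's formula to $V(X(t\wedge T))$ under an arbitrary admissible policy. The HJB inequality makes $V(X(t\wedge T))+\int_0^{t\wedge T}c(\mathcal{A}(s))\,ds$ a supermartingale, and it is a martingale under $\pi^*$. Boundedness of $V$ together with $\E T<\infty$, which holds because all drifts are positive, then gives optimality of $\pi^*$.

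First rewrite the HJB in the sufficient statistics. Dividing by $\sigma_i^2/2$, the operator for control $i$ becomes $V''+\aNEW_i V'-\hNEW_i$. With $p=V'$, the condition $\max_i\{\aNEW_i p-\hNEW_i\}\cdot(\text{scaling})$ combined with $p'=\hNEW_i-\aNEW_i p$ on the active region says the following. At slope $p$, the optimal control is the one minimising $\hNEW_i-\aNEW_i p$, i.e.\ it maximises $\aNEW_i p-\hNEW_i$, and $V''(x)=\min_i(\hNEW_i-\aNEW_i V'(x))$.

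Since each control costs $\hNEW_i$ per unit of the rescaled operator, the maximiser of $\aNEW_i p-\hNEW_i$ over all controls is a vertex of the lower convex hull on the $(\aNEW,\hNEW)$-plane. Its support slope is $p$: a larger $p$ selects a control with larger $\aNEW$. This is the ``steeper slope $\Rightarrow$ safer control'' observation. Hence $V$ is characterised by the autonomous ODE
\[
p'(x)=\Gamma(p(x)), \qquad \Gamma(p)\triangleq \min_i(\hNEW_i-\aNEW_i p),
\]
where $\Gamma$ is concave, piecewise linear and decreasing, with breakpoints at the hull slopes. The zero of $\Gamma$ is $p_0=\min_i \hNEW_i/\aNEW_i=1/\max_i CE_i$, attained at control $\n$.

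Since $p'$ has the sign of $\Gamma(p)$ and $p$ moves monotonically toward the equilibrium $p_0$, we get two cases. Either $p>p_0$ throughout, so that $V$ is concave and $p$ decreases; or $p<p_0$ throughout, so that $V$ is convex and $p$ increases.

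In the first case, the active controls as $x$ increases have support slopes sweeping downward from $p(\LNEW)$ toward $p_0$. They are therefore hull vertices with slopes in $(p_0,\infty)$, which are exactly $\n,\n+1,\dots,\Np$ (the type-I frontier $\overline{\Phi}$). They appear in decreasing index, hence decreasing $\aNEW$ and increasing $CE$, giving Case (1) of the statement.

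In the second case, $p<p_0$ gives slopes below $p_0$, and these correspond to $\Nm,\dots,1,\dots,\n$ (the type-II frontier $\underline{\Phi}$). They are traversed in increasing index, giving Case (2).

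Which case occurs is determined by comparing $\UNEW-\LNEW$ with $\MNEW/p_0=\MNEW\max_i CE_i$. The linear function with slope $p_0$ hits $\MNEW$ exactly at distance $\UNEW-\barLNEW$. If $\LNEW>\barLNEW$ the average slope $\MNEW/(\UNEW-\LNEW)$ exceeds $p_0$, forcing $p>p_0$; the reverse holds if $\LNEW<\barLNEW$. This recovers Proposition \ref{prop:Kctrls_FxLB}. The count $\tK$ follows because each frontier control is used on at most one interval.

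The main obstacle is the shooting step. For the parameter $\beta=V'(\LNEW)$, let $p_\beta$ solve $p'=\Gamma(p)$, $p(\LNEW)=\beta$, and set $V_\beta(x)=\int_{\LNEW}^x p_\beta$. We must show there is a unique $\beta$ with $V_\beta(\UNEW)=\MNEW$ and that $\beta$ lies on the correct side of $p_0$.

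Since $\Gamma$ is Lipschitz, solutions are unique and ordered in $\beta$, by a comparison theorem for ODEs. Thus $\beta\mapsto V_\beta(\UNEW)$ is continuous and strictly increasing. At $\beta=p_0$ it equals $p_0(\UNEW-\LNEW)$. As $\beta\to\infty$ it tends to $\infty$, since the decay of $p$ is at most linear-rate in $p$ because $\Gamma(p)\ge -\aNEW_{\Np}p+\text{const}$. As $\beta\to 0^+$ in the convex case it stays bounded by a value below $\MNEW$; if needed, the lower range can be extended by allowing $\beta\le 0$, with $p$ increasing through the steep-left hull part.

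The intermediate value theorem then yields $\beta^*$. The $\mathcal{C}^2$ property holds because $\Gamma$ is continuous, so $V''=\Gamma(V')$ is continuous across switch points. The HJB inequality for non-hull controls follows from (\ref{ineq:EF_FxLB_1Eff4})/(\ref{ineq:EF_FxLB_AllIneff4}), i.e.\ from the minimum defining $\Gamma$. I expect most care to be needed in verifying the limits of $V_\beta(\UNEW)$ at the ends of the $\beta$ range in the convex case, and in treating the boundary case $\LNEW=\barLNEW$, which is excluded here.
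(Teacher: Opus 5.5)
Your proposal is correct and follows essentially the paper's route. Both arguments shoot on the initial slope $V'(\LNEW)$, show the trial family is ordered in that parameter, pick the member with $V(\UNEW)=\MNEW$ by the intermediate value theorem, and read off the active controls from the support slopes of the lower convex hull. Your autonomous ODE $p'=\Gamma(p)$, with $\Gamma(p)=\min_i(\hNEW_i-\aNEW_i p)$, repackages the paper's explicit gluing of the exponential pieces $W^{(i)}_q$ at the slopes $\xi_i$, and it makes the HJB inequality for off-frontier controls automatic rather than a separate dominance check.

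Two small fixes are needed. First, the supermartingale in your verification step is $V(X(t\wedge T))-\int_0^{t\wedge T}c(\mathcal{A}(s))\,ds$, with a minus sign, not a plus. Second, in Case (2) the shooting range should be $(-\infty,p_0)$ from the outset, because nothing forces $V_\beta(\UNEW)<\MNEW$ near $\beta=0$. The needed limit $V_\beta(\UNEW)\to-\infty$ as $\beta\to-\infty$ follows from $\Gamma(p)\le \hNEW_{\Nm}-\aNEW_{\Nm}p$, by the same comparison argument you used for $\beta\to\infty$.
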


\begin{figure}[h]
    \centering
    \scalebox{0.8}
    {
    \begin{tikzpicture}[xscale = 3.2]
    \def\Lbar{0.2}
    \def\L{0.25}
    \def\iOne{0.3}
    \def\SOne{0.35}
    \def\iTwo{0.435}
    \def\STwo{0.52}
    \def\DotDotDot{0.6}
    \def\SKminusOne{0.68}
    \def\iK{0.8}
    \def\U{0.92}
    \def\Xt{1}
\draw[->] (0,0) -- (5,0)
node[pos={\Lbar},fill=red,circle, inner sep=0pt,minimum size=4pt,label=below:\textcolor{red}{\scriptsize{$\barLNEW$}}]{}
node[pos=\L,fill=blue,circle, inner sep=0pt,minimum size=4pt,label=below:\textcolor{blue}{\scriptsize{$\LNEW=S^*_0$}}]{}
node[pos=\iOne, above] {use $i_1^*$}
node[pos=\SOne,fill=blue,circle, inner sep=0pt,minimum size=4pt,label=below:\textcolor{blue}{\scriptsize{$S^*_1$}}]{}
node[pos=\iTwo, above] {use $i_2^* = i_1^* - 1$}
node[pos=\STwo,fill=blue,circle, inner sep=0pt,minimum size=4pt,label=below:\textcolor{blue}{\scriptsize{$S^*_2$}}]{}
node[pos=\DotDotDot, above] {...}
node[pos=\SKminusOne,fill=blue,circle, inner sep=0pt,minimum size=4pt,label=below:\textcolor{blue}{\scriptsize{$S^*_{\tK - 1}$}}]{}
node[pos=\iK, above] {use $i_{\tK}^* = i_1^* - (\tK - 1)$}
node[pos=\U,fill=blue,circle, inner sep=0pt,minimum size=4pt,label=below:\textcolor{blue}{\scriptsize{$\UNEW=S^*_{\tK}$}}]{}
node[pos=\Xt, below] {$X(t)$};
\end{tikzpicture}
}
    \caption{Detailed characterization of the optimal policy $\pi^*$ for case (1): $\LNEW > \barLNEW$. As $X(t)$ increases, the optimal policy uses controls with decreasing \textit{effective-drift} in decreasing order in $\overline{\Phi}$.}
    \label{fig:OptPlcFxLB_1Eff}
\end{figure}

\begin{figure}[h]
    \centering
    \scalebox{0.8}
    {
\begin{tikzpicture}[xscale = 3.2]
    \def\Lbar{0.2}
    \def\L{0.1}
    \def\iOne{0.175}
    \def\SOne{0.25}
    \def\iTwo{0.355}
    \def\STwo{0.46}
    \def\DotDotDot{0.56}
    \def\SKminusOne{0.66}
    \def\iK{0.79}
    \def\U{0.92}
    \def\Xt{1}
\draw[->] (0,0) -- (5,0)
node[pos=\Lbar,fill=red,circle, inner sep=0pt,minimum size=4pt,label=below:\textcolor{red}{\scriptsize{$\barLNEW$}}]{}
node[pos=\L,fill=blue,circle, inner sep=0pt,minimum size=4pt,label=below:\textcolor{blue}{\scriptsize{$\LNEW=S^*_0$}}]{}
node[pos=\iOne, above] {use $i_1^*$}
node[pos=\SOne,fill=blue,circle, inner sep=0pt,minimum size=4pt,label=below:\textcolor{blue}{\scriptsize{$S^*_1$}}]{}
node[pos=\iTwo, above] {use $i_2^* = i_1^* + 1$}
node[pos=\STwo,fill=blue,circle, inner sep=0pt,minimum size=4pt,label=below:\textcolor{blue}{\scriptsize{$S^*_2$}}]{}
node[pos=\DotDotDot, above] {...}
node[pos=\SKminusOne,fill=blue,circle, inner sep=0pt,minimum size=4pt,label=below:\textcolor{blue}{\scriptsize{$S^*_{\tK - 1}$}}]{}
node[pos=\iK, above] {use $i_{\tK}^* = i_1^* + (\tK - 1)$}
node[pos=\U,fill=blue,circle, inner sep=0pt,minimum size=4pt,label=below:\textcolor{blue}{\scriptsize{$\UNEW=S^*_{\tK}$}}]{}
node[pos=\Xt, below] {$X(t)$};
\end{tikzpicture}
}

    \caption{Detailed characterization of the optimal policy $\pi^*$ for case (2): $\LNEW < \barLNEW$. As $X(t)$ increases, the optimal policy uses controls with increasing \textit{effective-drift} in increasing order in $\underline{\Phi}$.}
    \label{fig:OptPlcFxLB_AllIneff}
\end{figure}

Theorem \ref{thm:Kctrls_FxLB} provides a more comprehensive and detailed description of Proposition \ref{prop:Kctrls_FxLB}. Equations (\ref{eq:FxLB_1Eff_ctrlOrder1}) - (\ref{eq:FxLB_1Eff_ctrlOrder4}) present the optimal policy for case (1). The optimal action is to use control $i_1^* \ge \n + (\tK - 1)$ when the state $X(t)$ is in the interval $[S_0^*, S_1^*)$, use control $i_1^* - 1$ when $X(t)$ is in $[S_1^*, S_2^*)$, use control $i_1^* - 2$ when $X(t)$ is in $[S_2^*, S_3^*)$ and so on (see Figure \ref{fig:OptPlcFxLB_1Eff}). As $X(t)$ increases, the optimal policy uses controls with decreasing \textit{effective-drift} in decreasing order in the \textit{efficient frontier}  $\overline{\Phi}$. The controls used all have \textit{effective-drift} at least as high as control $\n$, i.e. $i_{\tK}^* \ge \n$.

Equations (\ref{eq:FxLB_AllIneff_ctrlOrder1}) - (\ref{eq:FxLB_AllIneff_ctrlOrder4}) present the optimal policy for case (2). The optimal action is to use control $i_1^* \le \n - (\tK - 1)$ when $X(t)$ is in $[S_0^*, S_1^*)$, use control $i_1^* + 1$ when $X(t)$ is in $[S_1^*, S_2^*)$, use control $i_1^* + 2$ when $X(t)$ is in $[S_2^*, S_3^*)$ and so on (see Figure \ref{fig:OptPlcFxLB_AllIneff}). As $X(t)$ increases, the optimal policy uses controls with increasing \textit{effective-drift} in increasing order in the \textit{efficient frontier}  $\underline{\Phi}$. The controls used all have \textit{effective-drift} not exceeding that of control $\n$, i.e. $i_{\tK}^* \le \n$. 

In both cases however, as $X(t)$ increases, the optimal policy uses controls with increasing \textit{cost-effectiveness} ((\ref{eq:FxLB_1Eff_ctrlOrder4}) and (\ref{eq:FxLB_AllIneff_ctrlOrder4})). This is because as $X(t)$ reaches near the upper boundary, the entrepreneur becomes increasingly less worried about hitting the lower boundary and can afford to use controls with a higher \textit{cost-effectiveness}.

We note that for either case, we can easily identify the controls on the respective \textit{efficient frontier}. To find the actual controls used in the optimal policy, or the set of switching thresholds $S_j^*$, we refer the reader to the numerical procedure in \S \ref{sec:PfThmFxLB_case1} and \S \ref{sec:PfThmFxLB_case2} in the Appendix. In the numerical procedure, an explicit policy will be constructed to satisfy equations (\ref{eq:FxLB_suffCondOpt1}) - (\ref{eq:FxLB_suffCondOpt3}), which are essentially sufficient conditions for optimality.

\subsection{Comparison with the Myopic Policy}
\label{subsec:myoPlcBad_FxLB}

In some scenarios entrepreneurs may resort to simple decisions that maximize only the short-term benefit ([\cite{debrulle2020entrepreneurs}]). The natural kind of such policy is the \textit{myopic policy}, which is the simple static policy that always uses the control with the best \textit{cost-effectiveness}, i.e. for per unit of cost this control achieves the highest drift upwards. In other words (c.f. Definition \ref{def:kItvlPlc}),
\begin{align}
    \pi_{myo} = \{(\n, [\LNEW,\UNEW)) \},
\end{align}
where $\n = \argmax_i{CE_i}$. 

While the myopic policy is desirable due to its simplicity, the entrepreneur should be cautious with it because there exist scenarios where the myopic policy can perform significantly worse than the optimal policy. This is shown in Proposition \ref{prop:FxLBMyoPlcBad}.

\begin{restatable}{proposition}{FxLBMyoPlcBad}
\label{prop:FxLBMyoPlcBad}
For the fixed lower boundary problem, let $\LNEW < \UNEW$, $\MNEW > 0$ be fixed. For any $\epsilon > 0$, there exists a problem instance such that 
\begin{align}
    \sup_{x \in (\LNEW,\UNEW)} \frac{V_{opt}(x) - V_{myo}(x)}{\MNEW} > 1 - \epsilon,
\end{align}
where $V_{opt}(\cdot)$ and $V_{myo}(\cdot)$ denote the value function of the optimal and myopic policy, respectively. 
\end{restatable}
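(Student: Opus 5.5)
We prove Proposition \ref{prop:FxLBMyoPlcBad} by an explicit two-control construction in which the myopic control is cheap but very risky, while a second, slightly less cost-effective control is nearly deterministic. Fix $\LNEW<\UNEW$ and $\MNEW>0$, and write $D=\UNEW-\LNEW$.

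\emph{Control $1$ (myopic control $\n$).} Take $\mu_1=1$, $c_1=\delta$ and a huge volatility $\sigma_1$. Its \textit{cost-effectiveness} $CE_1=1/\delta$ is then the largest.

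\emph{Control $2$.} Take $\mu_2=1$, $c_2=2\delta$ and a tiny volatility $\sigma_2$. Here $\delta>0$ is chosen small enough that $\MNEW\, CE_2=\MNEW/(2\delta)> D$.

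Under the myopic static policy started at $x$, the payoff has the closed form
\begin{align*}
V_{myo}(x)=\MNEW\, p_1(x)-\frac{c_1}{\mu_1}\bigl(\UNEW p_1(x)+\LNEW(1-p_1(x))-x\bigr),
\end{align*}
where
\begin{align*}
p_1(x)=\frac{1-e^{-\aNEW_1(x-\LNEW)}}{1-e^{-\aNEW_1 D}}
\end{align*}
is the standard hitting probability of Brownian motion with drift (Lemma \ref{lem:HighEDHighProbHitU}). The cost term follows from optional stopping applied to $X(t)-\mu_1 t$, which gives $\E[T]=(\E[X(T)]-x)/\mu_1$. The same formula with $\aNEW_2$ and $c_2$ gives the value of the static policy ``2 only'', and that value is a lower bound on $V_{opt}$.

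Evaluate both at the fixed interior point $x_0=(\LNEW+\UNEW)/2$.

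\emph{Myopic control.} As $\sigma_1\to\infty$, $\aNEW_1\to 0$ and $p_1(x_0)\to 1/2$. So the volatility alone does not make the myopic policy worthless, and we must also drive its success probability toward zero. To do this, evaluate not at the midpoint but at $x$ close to $\LNEW$: for fixed small $\aNEW_1$, $p_1(x)\approx (x-\LNEW)/D$, so choosing $x=\LNEW+\eta D$ with $\eta$ small gives $V_{myo}(x)\le \MNEW\,p_1(x)\le \MNEW(\eta+o(1))$. Here we used that the cost term is nonnegative: since $\E[X(T)]=\UNEW p_1+\LNEW(1-p_1)\ge x$ when the drift is positive, the bracket is at least $0$.

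\emph{Control $2$.} As $\sigma_2\to 0$, $\aNEW_2\to\infty$ and $p_2(x)\to 1$ for every $x>\LNEW$. The expected cost tends to $(c_2/\mu_2)(\UNEW-x)\le 2\delta D$. Hence $V_{opt}(x)\ge V_2(x)\ge \MNEW(1-o(1))-2\delta D$.

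Combining the two bounds,
\begin{align*}
\frac{V_{opt}(x)-V_{myo}(x)}{\MNEW}\ge 1-\eta-\frac{2\delta D}{\MNEW}-o(1).
\end{align*}
Given $\epsilon$, the parameters are chosen in this order: first $\delta$ small, then $\eta$ small, then $\sigma_2$ small, then $\sigma_1$ large. Note that $\sigma_1$ must be sent to infinity after $\eta$ is fixed, so that $p_1(\LNEW+\eta D)\le 2\eta$. Taking the supremum over $x$ finishes the proof.

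The main obstacle is the order of limits. The ratio $(x-\LNEW)/D$ is only the limiting value of $p_1$ as $\aNEW_1\to 0$, so $\sigma_1$ must be large relative to $\eta$; that is why $\sigma_1$ is chosen last. Positivity of $\mu_1$ ensures the cost term is harmless, and the choice $\mu_1=\mu_2$ keeps the hitting-time bound $\E[T]\le D/\mu_2$ uniform. If the mixed limits become awkward, a simpler fallback is to give control $1$ a negligible drift but an even smaller cost, so that it is still the myopic choice, and argue directly that $p_1(x)\to(x-\LNEW)/D$.
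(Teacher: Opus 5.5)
Your argument is correct and is essentially the paper's: two controls where the myopic one has cost per unit drift $\delta$ and the other has cost per unit drift $2\delta$ with effective drift sent to infinity, $V_{opt}$ bounded below by the ``2 only'' value, and the gap evaluated near $\LNEW$ (the paper takes $(\aNEW_1,\hNEW_1)=(1,\epsilon)$, $(\aNEW_2,\hNEW_2)=(1/\epsilon,2)$ with $\LNEW=0$, $\UNEW=\MNEW=1$). Your order of limits (fix the point $\LNEW+\eta D$, then tune the parameters) is the right one; the paper instead swaps $\lim_{\epsilon\to0^+}$ and $\lim_{x\to0^+}$, which is not valid as written because for each fixed instance the gap vanishes at $\LNEW$. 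Two minor points: sending $\sigma_1\to\infty$ is unnecessary, since for fixed $\aNEW_1$ we already have $p_1(\LNEW+\eta D)\to0$ as $\eta\to0$; and the hitting-probability formula is Lemma \ref{lem:BMBasicQtt}, not Lemma \ref{lem:HighEDHighProbHitU}.
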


Essentially, the difference between the optimal policy and the myopic policy can approach arbitrarily close to the milestone reward $\MNEW$. Figure \ref{fig:myoPlcBad_1Eff_Plc} provides a numerical example that illustrates when this difference can be large. In this example, $\LNEW=0$, $\UNEW=10$, $\MNEW=10$ and there are three controls $(\aNEW_1,\hNEW_1) = (0.2, 0.01)$, $(\aNEW_2,\hNEW_2) = (1, 0.06)$, $(\aNEW_3,\hNEW_3) = (4, 0.3)$. The myopic policy $\pi_{myo}$ is to always use control $1$ (highest \textit{cost-effectiveness}), and the optimal policy is ``321''. Figure \ref{fig:MyoPlcBad_1Eff_VF} plots the value function of both policies. The biggest gap between the two value functions is $7.4$, which occurs when the state $X(t)$ is at $0.08$. In particular, this is because when $X(t)$ is close to the lower boundary $\LNEW$, the optimal policy uses the extremely \textit{safe} control $3$ and when $X(t)$ moves away from $\LNEW$, it uses the other two controls with higher \textit{cost-effectiveness} (use control $2$ at $[S_1,S_2)$ and control $1$ at $[S_2,\UNEW)$). Essentially, this guarantees that even when $X(t)$ is close to $\LNEW$ (but not too close), it will still hit $\UNEW$ with a very high probability. This together with the other two highly \textit{cost-effective} controls, makes the expected payoff almost $\MNEW$ even when $X(t)$ is close to $\LNEW$ (but not too close). For the myopic policy, when $X(t)$ is close to $\LNEW$, it will have a fair chance of hitting $\LNEW$ and the expected payoff is relatively much smaller.

\begin{figure}[h]
    \centering
    \begin{subfigure}[b]{0.45\textwidth}
    \begin{tikzpicture}[xscale = 1.7]
        \def\LPos{0.1}
        \def\UPos{0.9}
        
        \draw[->] (-0.5,1) -- (2.5,1) node[pos=1, right, black] {$X(t)$};
        \draw[-, cyan] (-0.5+\LPos*3, 1) -- (-0.5+\UPos*3, 1) node[pos=0.5, above] {use $1$};
        \draw[-] (-0.5+\LPos*3, 1.1) -- (-0.5+\LPos*3, 0.9)  node[pos=1, below, black] {$\LNEW$};
        \draw[-] (-0.5+\UPos*3, 1.1) -- (-0.5+\UPos*3, 0.9)  node[pos=1, below, black] {$\UNEW$};
    \end{tikzpicture}
    \caption{Myopic policy is ``always use 1''.}
    \end{subfigure}
    \begin{subfigure}[b]{0.4\textwidth}
    \begin{tikzpicture}[xscale = 2.1]
        \def\LPos{0.1}
        \def\UPos{0.9}
        \def\FirstSPos{0.28}  
        \def\SecondSPos{0.47}  
    
        \draw[->] (-0.5,1) -- (2.5,1) node[pos=1, right, black] {$X(t)$};
        \draw[-, blue] (-0.5+\LPos*3, 1) -- (-0.5+\FirstSPos*3, 1)  node[pos=0.5, above] {use $3$};
        \draw[-, red] (-0.5+\FirstSPos*3, 1) -- (-0.5+\SecondSPos*3, 1)  node[pos=0.5, above] {use $2$};
        \draw[-, cyan] (-0.5+\SecondSPos*3, 1) -- (-0.5+\UPos*3, 1)  node[pos=0.5, above] {use $1$};
        \draw[-] (-0.5+\LPos*3, 1.1) -- (-0.5+\LPos*3, 0.9)  node[pos=1, below, black] {$\LNEW$};
        \draw[-] (-0.5+\UPos*3, 1.1) -- (-0.5+\UPos*3, 0.9)  node[pos=1, below, black] {$\UNEW$};
        \draw[-] (-0.5+\FirstSPos*3, 1.1) -- (-0.5+\FirstSPos*3, 0.9)  node[pos=1, below, black] {$S_1$};
        \draw[-] (-0.5+\SecondSPos*3, 1.1) -- (-0.5+\SecondSPos*3, 0.9)  node[pos=1, below, black] {$S_2$};
    \end{tikzpicture}
    \caption{Optimal policy is ``321''.}
    \end{subfigure}
    
\caption{A numerical example (fixed lower boundary case) that shows myopic policy can perform much worse than the optimal policy. Parameters are: $\LNEW=0$, $\UNEW=10$, $\MNEW=10$, $(\aNEW_1,\hNEW_1) = (0.2, 0.01)$, $(\aNEW_2,\hNEW_2) = (1, 0.06)$, $(\aNEW_3,\hNEW_3) = (4, 0.3)$. The switch points for the optimal policy are $S_1 = 2.2304$ and $S_2 = 4.3099$. }
\label{fig:myoPlcBad_1Eff_Plc}
\end{figure}

\begin{figure}[h]
    \centering
    \includegraphics[scale=0.65]{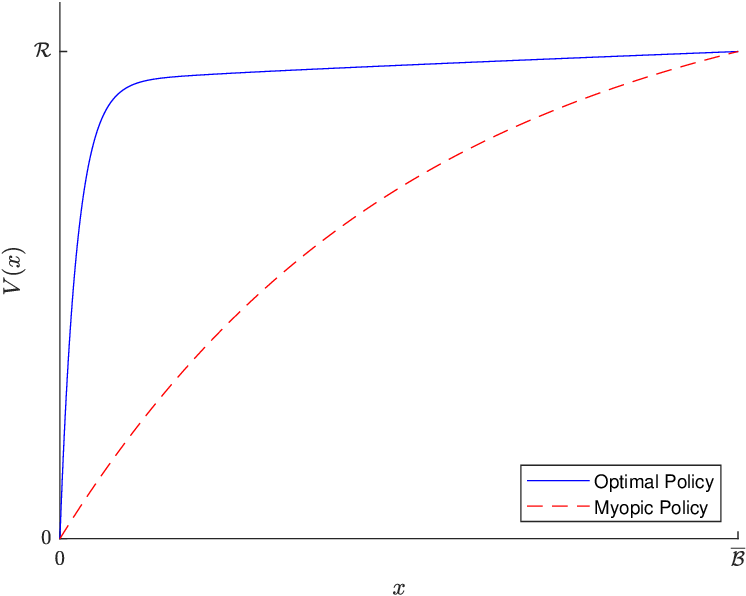}
    \caption{Value functions of the optimal (blue, solid) and myopic (red, dashed) policies. The parameters, optimal policy and myopic policy are the same as in Figure \ref{fig:myoPlcBad_1Eff_Plc}.}
    \label{fig:MyoPlcBad_1Eff_VF}
\end{figure}

To summarize, the entrepreneur should be cautious with using the myopic policy as it may result in a significant decrease in the expected payoff for the start-up firm, as compared to using the optimal policy. 

\section{Extension: Free Lower Boundary Case}
\label{sec:FrLB}

We now turn to the case where the lower boundary is not fixed but free; that is, $\LNEW$ is now determined endogenously by the entrepreneur.  We will show that there emerges yet another type of \textit{efficient frontier} curve that differs from the two types in the fixed lower boundary case, based on which we characterize the optimal policy. However, the optimal policy's control sequence has a similar (but not identical) structure as the fixed lower boundary case with $\LNEW < \barLNEW$.

We will proceed similarly as in the fixed lower boundary case. \S \ref{subsec:OptPlcStrtrFrLB} provides a high-level description of the optimal policy's general structure. \S \ref{subsec:EF_FrLB} presents the optimal policy in detail.  \S \ref{subsec:myoPlcBad_FrLB} compares the optimal policy with the myopic one and shows that similar to the fixed lower boundary case, the latter can perform arbitrarily worse than the former (with respect to the milestone reward $\MNEW$).

\subsection{High-Level: The Structure of the Optimal Policy}
\label{subsec:OptPlcStrtrFrLB}

For the free lower boundary problem, the optimal policy, at a high level, has a structure as specified by Proposition \ref{prop:Kctrls_FrLB_pvw}.
\begin{restatable}{proposition}{OptPlcStrtrFrLB}
\label{prop:Kctrls_FrLB_pvw}
For the free lower boundary problem, the optimal lower boundary $\LNEW^*$ satisfies 
\begin{align}
\LNEW^* < \barLNEW.
\end{align}
Moreover, there exists $\tK \le N $ and switch thresholds $\LNEW^* = S^*_0 < S^*_1 < ... < S^*_\tK = \UNEW$ such that the optimal policy uses control $i^*_j$ in the interval $[S^*_{j-1}, S^*_j)$, $j = 1,2,...,\tK$, and $i^*_1, i^*_2, ..., i^*_{\tK}$ satisfy
\begin{align}
    &\aNEW(i^*_1) < \aNEW(i^*_2) < ... < \aNEW(i^*_{\tK}),  \label{ineq:orderCrt1_FrLB} \\
    &CE(i^*_1) < CE(i^*_2) < ... < CE(i^*_{\tK}). \label{ineq:orderCrt2_FrLB}
\end{align}
\end{restatable}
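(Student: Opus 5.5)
The plan is to reduce the free lower boundary problem to the fixed lower boundary problem of Theorem \ref{thm:Kctrls_FxLB}. Write $V(x;\LNEW)$ for the optimal fixed-boundary value function with lower boundary $\LNEW$; the free problem is then $\sup_{\LNEW} V(x;\LNEW)$.

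\textbf{Step 1: reduce to one boundary.} First we show that this supremum is attained at a single $\LNEW^*$ that does not depend on $x$. The argument uses a smooth-fit, or optimal stopping, characterization: at the optimal free boundary, $V(\LNEW^*)=0$ and $V'(\LNEW^*)=0$. Indeed, if $V'(\LNEW)>0$, the entrepreneur would gain by continuing slightly below $\LNEW$. If $V$ were negative near $\LNEW$, stopping earlier would dominate.

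\textbf{Step 2: show $\LNEW^*<\barLNEW$.} Suppose instead that $\LNEW^*\ge\barLNEW$. Then the optimal value is positive on $(\LNEW^*,\UNEW)$ but vanishes at the boundary. Near $\LNEW^*$ the HJB equality with $V'=0$ gives $\frac{\sigma_i^2}{2}V''=c_i>0$. We must then rule out $V$ being convex with zero slope at $\LNEW^*$ in the case-(1) regime, where we expect concavity.

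A cleaner route is to compare $V'(\LNEW)$ with $0$. For any fixed $\LNEW\ge\barLNEW$, the fixed-boundary solution has $V'(\LNEW)>0$. To see this, note that the policy using control $\n$ yields positive payoff near $\LNEW$ when $\LNEW>\barLNEW$; one checks this via the explicit single-control solution $V(x)=A(1-e^{-\aNEW(x-\LNEW)})+ (x-\LNEW)/CE$-type formulas. Hence extending the boundary slightly downward strictly improves the value, so no $\LNEW\ge\barLNEW$ can be optimal. At $\LNEW=\barLNEW$ itself, the same argument applies via the strict improvement from diffusion.

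This also shows that the optimal fixed-boundary problem at $\LNEW^*$ falls into Case (2) of Theorem \ref{thm:Kctrls_FxLB}.

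\textbf{Step 3: read off the control ordering.} Since $\LNEW^*<\barLNEW$, Case (2) of Theorem \ref{thm:Kctrls_FxLB} (equivalently Case (2) of Proposition \ref{prop:Kctrls_FxLB}) applies to the fixed problem with lower boundary $\LNEW^*$. Its optimal policy is at most $N$-interval, with $\aNEW$ and $CE$ both strictly increasing along the control sequence. This gives (\ref{ineq:orderCrt1_FrLB})--(\ref{ineq:orderCrt2_FrLB}). Since the free-boundary optimum coincides with the fixed-boundary optimum at $\LNEW^*$, the result follows.

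\textbf{Main obstacle.} Step 1 and the strict inequality in Step 2 are the hard parts. They require the existence of a maximizing boundary, since $\LNEW\to-\infty$ must give a worse value because costs accumulate. They also require monotonicity or continuity of $V(x;\LNEW)$ in $\LNEW$, and a sign argument for $V'(\LNEW)$ at the boundary.

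To handle the sign, we would use the convexity/concavity dichotomy noted in the introduction: in Case (1) the value function is concave, so $V'(\LNEW)>V'(\UNEW)$. Combining this with $V(\UNEW)-V(\LNEW)=\MNEW$ over a short interval forces $V'(\LNEW)>0$, which rules out smooth fit for $\LNEW\ge\barLNEW$.
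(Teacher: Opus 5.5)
Your route differs from the paper's. You optimize the fixed-boundary value $V(x;\LNEW)$ over $\LNEW$, rule out $\LNEW\ge\barLNEW$, and then read the ordering off Case (2) of Theorem \ref{thm:Kctrls_FxLB}. As written, this has a genuine gap in exactly the steps that carry the weight.

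\textbf{Step 1 is only asserted.} You never show that the free problem reduces to a single stopping threshold, that $\sup_{\LNEW}V(x;\LNEW)$ is attained, or that the maximizer is the same for every $x$. The necessity of smooth fit $V'(\LNEW^*)=0$ presupposes all of this, plus some regularity of $V(x;\LNEW)$ in $\LNEW$.

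\textbf{Step 2 rests on that unproved necessity.} Your claim that the fixed-boundary value has $V'(\LNEW)>0$ for $\LNEW\ge\barLNEW$ is fine. (It follows, e.g., from $V\ge V_{\n}$ with equality at $\LNEW$ and $V_{\n}'(\LNEW)>0$.) But that slope says nothing by itself about the value of a problem whose lower boundary lies below $\LNEW$. The claim that ``lowering the boundary strictly improves the value'' is the first-order heuristic, not a proof. Step 3 also needs the existence of $\LNEW^*$ and the identification of the free optimum with the fixed optimum at $\LNEW^*$, so the proposal does not yet prove the proposition.

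\textbf{Repairing the strict inequality.} This part can be made rigorous along your lines without smooth fit. Take $\LNEW\ge\barLNEW$ and $\LNEW'<\LNEW$. The explicit single-control-$\n$ solution on $[\LNEW',\UNEW]$ is strictly positive at $\LNEW$; this uses $\UNEW-\LNEW\le\MNEW\max_i CE_i$. Now follow the $\LNEW$-optimal policy until the first hit of $\{\LNEW,\UNEW\}$, and then switch to control $\n$ with boundary $\LNEW'$. This strictly beats $V(\cdot;\LNEW)$ on $(\LNEW,\UNEW)$, so $V(\cdot;\LNEW')>V(\cdot;\LNEW)$ there.

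\textbf{Existence is what the paper supplies, by construction.} It does not optimize over $\LNEW$. It takes the $q=0$ member of the convex Case-(2) family, $V_{Fr}=V_I(\cdot;0)$, so the condition $V'=0$ at the boundary is built in. It sets $\LNEW^*=\UNEW-x^*$, where $V_I(x^*;0)=\MNEW$. It then verifies the sufficient conditions of Theorem \ref{thm:Kctrls_FrLB}, including $V\equiv 0$ below $\LNEW^*$. This gives existence and optimality against arbitrary stopping in one step.

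\textbf{The proposition then follows immediately.} On $(0,\infty)$, $V_I'(\cdot;0)$ takes values in $(0,\hNEW_{\n}/\aNEW_{\n})$. Hence $\MNEW<(\hNEW_{\n}/\aNEW_{\n})(\UNEW-\LNEW^*)=(\UNEW-\LNEW^*)/\max_i CE_i$, which is exactly $\LNEW^*<\barLNEW$. The controls used are $1,2,\dots,\tK$ on the type-III frontier, and these have strictly increasing $\aNEW$ and $CE$.
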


The optimal lower boundary $\LNEW^*$ is to the left of $\barLNEW$. As a result, the structure of the optimal policy for the free lower boundary problem is similar to that of the fixed boundary one with $\LNEW < \barLNEW$. The intuition is as follows. Suppose $X(t) \ge \barLNEW$, then the entrepreneur will find it worthwhile to continue the process and still make a positive payoff. To put it more rigorously, setting the lower boundary at $\barLNEW$ will make the resulting value function have a positive slope at $\barLNEW$ and the entrepreneur has an incentive to shift the lower boundary further to the left, until a threshold where shifting any distance to the left of it will result in the value function attaining a negative value. This threshold is $\LNEW^*$ and at here, the value function attains a zero derivative.

\subsection{Detailed Characterization of the Optimal Policy}
\label{subsec:EF_FrLB}

We now proceed to find the corresponding \textit{efficient frontier}  for the free lower boundary case. We still assume that the controls are numbered as in \S\ref{subsec:EF_FxLB}.

\textbf{Efficient Frontier for the free lower boundary problem. } The type-III \textit{efficient frontier}  consists of the controls $1$, $2$, ..., $\n$ (where $\n$ is defined as in (\ref{eq:def_n_highestCE})) and they satisfy (see Figure \ref{fig:EF_FrLB})
\begin{align}
& 0 < \aNEW_1 < ... < \aNEW_{\n}, \text{ } 0 < \hNEW_1 < ... < \hNEW_{\n}, \label{ineq:EF_FrLB1}\\
& 0 < \frac{\hNEW_2 - \hNEW_1}{\aNEW_2 - \aNEW_1} < \frac{\hNEW_3 - \hNEW_2}{\aNEW_3 - \aNEW_2} < ... < \frac{\hNEW_{\n} - \hNEW_{\n-1}}{\aNEW_{\n} - \aNEW_{\n-1}} < \frac{\hNEW_{\n}}{\aNEW_{\n}}, \label{ineq:EF_FrLB2}\\
&\hNEW_i = \phi_{Fr}(\aNEW_i) \text{, for $1 \le i \le \n$,} \label{ineq:EF_FrLB3}\\
&\hNEW_i \ge \phi_{Fr}(\aNEW_i) \text{, for $i \not \in \{1,2,...,\n\}$,} \label{ineq:EF_FrLB4}
\end{align}
where $\phi_{Fr}(\cdot)$ is the piece-wise linear and convex function that connects $(0, \hNEW_1), (\aNEW_1,\hNEW_1)$, $(\aNEW_2,\hNEW_2)$, ... , $(\aNEW_{\n},\hNEW_{\n})$. Controls $1$, $2$, ... , $\n$ are either the left endpoint, the right endpoint, or a point on $\phi_{Fr}(\cdot)$ at which the slope changes. Denote $\Phi_{Fr}$ to be the set of controls on $\phi_{Fr}(\cdot)$, i.e. $\Phi_{Fr} = \{1,2, ..., \n\}$. 
\begin{figure}[h]
\FIGURE
{    
\begin{tikzpicture}[xscale = 1/11, yscale = 1/9]
\def\etaVec{{-3,0,10,20,30,40,50,60}}
\def\cVec{{11, 8, 9-3, 10-3, 12-3, 18-3, 28-3, 44-3}}
\def\Itcpt{6}
\draw[dashed] (-\Itcpt,0) -- ( \etaVec[4], \cVec[4]);
\draw[-] (\etaVec[4], \cVec[4]) -- (\etaVec[4]*2+\Itcpt*2-\Itcpt, \cVec[4]*2); 
\draw[-] (-\Itcpt,\cVec[2]) -- (\etaVec[2],\cVec[2]);
\foreach \i in {3,...,4}{
    \draw[-] (\etaVec[\i-1],\cVec[\i-1]) -- (\etaVec[\i],\cVec[\i]);
}
\foreach \i in {0}{
    \pgfmathtruncatemacro {\j}{\i-1}
    \filldraw[black] (\etaVec[\i],\cVec[\i]) circle (5pt) node[above right] {\scriptsize $(\aNEW_{\j},\hNEW_{\j})$};
}
\foreach \i in {1}{
    \pgfmathtruncatemacro {\j}{\i-1}
    \filldraw[black] (\etaVec[\i],\cVec[\i]) circle (5pt) node[above right] {\scriptsize $(\aNEW_{\j},\hNEW_{\j})$};
}
\foreach \i in {2,...,4}{
    \pgfmathtruncatemacro {\j}{\i-1}
    \filldraw[black] (\etaVec[\i],\cVec[\i]) circle (5pt) node[above] {\scriptsize $(\aNEW_{\j},\hNEW_{\j})$};
}
\foreach \i in {5}{
    \pgfmathtruncatemacro {\j}{\i-1}
    \filldraw[black] (\etaVec[\i],\cVec[\i]) circle (5pt) node[above] {\scriptsize $(\aNEW_{\j},\hNEW_{\j})$};
}
\foreach \i in {6,...,7}{
    \pgfmathtruncatemacro {\j}{\i-1}
    \filldraw[black] (\etaVec[\i],\cVec[\i]) circle (5pt) node[below right] {\scriptsize $(\aNEW_{\j},\hNEW_{\j})$};
}
\def\addetaVec{{15,30.0,45}}
\def\addcVec{{25.3,21.2,31.6}}
\foreach \i in {0,...,2}{
    \pgfmathtruncatemacro {\j}{\i+7}
    \filldraw[black] (\addetaVec[\i],\addcVec[\i]) circle (5pt) node[above] {\scriptsize $(\aNEW_{\j},\hNEW_{\j})$};
}
\draw[-] (-\Itcpt,0) -- (60,0)    node[pos=1,label=below:\textcolor{black}{$\aNEW$}]{};
\draw[-] (-\Itcpt,0) -- (-\Itcpt,38) node[pos=1,label=above:\textcolor{black}{$\hNEW$}]{};
\end{tikzpicture}
}
{The \textit{efficient frontier}  (piecewise linear solid curve) for the free lower boundary case: the piecewise linear function $\phi_{Fr}(\cdot)$ that satisfies (\ref{ineq:EF_FrLB3}) and (\ref{ineq:EF_FrLB4}). \label{fig:EF_FrLB}}
{}
\end{figure}

We note that the shape of the type-III \textit{efficient frontier}  is slightly different from that of the type-II (the one associated with $\LNEW < \barLNEW$ in the fixed boundary case). In particular, the \textit{efficient frontier}  no longer contains the controls $\Nm$, $\Nm+1$, \dots, $0$. 

\textbf{Statement of the Optimal Policy.} Having defined the \textit{efficient frontier}  for the free lower boundary case, we are now in a position to present the optimal policy. We will first show that the optimal lower boundary must lie within a certain range (Proposition \ref{prop:LstarRangeFrLB}). After that, we will show that the optimal policy in the free lower boundary case uses controls on the \textit{efficient frontier}  in a sequential and increasing manner as specified in Theorem \ref{thm:Kctrls_FrLB}. 

\begin{restatable}{proposition}{LstarRangeFrLB}
\label{prop:LstarRangeFrLB}
For the free lower boundary problem, the optimal lower boundary $\LNEW^*$ satisfies
\begin{align}
    \barLNEW - \frac{1}{\aNEW_{\n}} - (\UNEW - \barLNEW)(\frac{\aNEW_{\n}}{\aNEW_1} - 1) &- O(e^{-\MNEW})\le \LNEW^* \le \barLNEW - \frac{1}{\aNEW_{\n}} - O(e^{-\MNEW}), \label{eq:FrLB_LstarRangeFrLB2}
\end{align}
where both inequalities are tight.
\end{restatable}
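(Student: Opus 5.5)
The plan is to work with the slope $p(x) \triangleq V'(x)$ of the optimal value function and compare it with explicit single-control solutions. I will use the structure from Proposition \ref{prop:Kctrls_FrLB_pvw}: an optimal interval policy on $[\LNEW^*,\UNEW)$ built from frontier controls in $\Phi_{Fr}=\{1,\dots,\n\}$. I also assume the analogue of Theorem \ref{thm:Kctrls_FxLB} for the free boundary, namely a $\mathcal{C}^2$ value function satisfying the HJB equation together with the smooth-fit conditions $V(\LNEW^*)=0$ and $V'(\LNEW^*)=0$. The text after Proposition \ref{prop:Kctrls_FrLB_pvw} says the value function has zero derivative at $\LNEW^*$, and I would prove smooth fit first if needed. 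Dividing the HJB by $\sigma_i^2/2$, it reads $p'(x)=\min_{i}\bigl(\hNEW_i-\aNEW_i\,p(x)\bigr)$ with $p(\LNEW^*)=0$. The two bounds on $\LNEW^*$ then reduce to bounds on $p$, since $\MNEW=V(\UNEW)=\int_{\LNEW^*}^{\UNEW}p$. A faster-growing $p$ forces a shorter interval $\UNEW-\LNEW^*$, i.e. a larger $\LNEW^*$.

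\emph{Upper bound.} Since $p'\le \hNEW_{\n}-\aNEW_{\n}p$ and $p(\LNEW^*)=0$, the ODE comparison principle gives $p(x)\le \frac{1}{CE_{\n}}\bigl(1-e^{-\aNEW_{\n}(x-\LNEW^*)}\bigr)$. Integrating from $\LNEW^*$ to $\UNEW$ yields
\[
\MNEW\le \frac{\UNEW-\LNEW^*}{CE_{\n}}-\frac{1}{\aNEW_{\n}CE_{\n}}\Bigl(1-e^{-\aNEW_{\n}(\UNEW-\LNEW^*)}\Bigr).
\]
Multiplying by $CE_{\n}$ and using $\barLNEW=\UNEW-\MNEW CE_{\n}$ gives $\LNEW^*\le \barLNEW-\frac{1}{\aNEW_{\n}}+\frac{1}{\aNEW_{\n}}e^{-\aNEW_{\n}(\UNEW-\LNEW^*)}$. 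Since $\UNEW-\LNEW^*\ge \MNEW CE_{\n}$, the exponential is $O(e^{-\MNEW})$. Equivalently, the policy ``$\n$ only'' with its own optimal free boundary $L_{\n}$ is feasible and yields $V_{\n}>0$ on $(L_{\n},\UNEW)$. Hence $V^*>0$ there, which gives $\LNEW^*\le L_{\n}$. Tightness comes from instances where control $\n$ alone is optimal (e.g. $\n=1$), since then equality holds.

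\emph{Lower bound.} I need a lower bound on $p'$ valid for $p\in[0,1/CE_{\n}]$; that $p$ stays in this range follows from the upper bound above. On the frontier, $\hNEW_i/\aNEW_i=1/CE_i\ge 1/CE_{\n}$, so
\[
\hNEW_i-\aNEW_i p\;\ge\;\aNEW_i\Bigl(\frac{1}{CE_{\n}}-p\Bigr)\;\ge\;\aNEW_1\Bigl(\frac{1}{CE_{\n}}-p\Bigr).
\]
This gives $p(x)\ge \frac{1}{CE_{\n}}\bigl(1-e^{-\aNEW_1(x-\LNEW^*)}\bigr)$ and therefore a first bound $\LNEW^*\ge \barLNEW-\frac{1}{\aNEW_1}-O(e^{-\MNEW})$.

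\emph{Refinement and main obstacle.} This crude bound has the wrong form compared with the stated one, and matching $\barLNEW-\frac1{\aNEW_{\n}}-(\UNEW-\barLNEW)(\frac{\aNEW_{\n}}{\aNEW_1}-1)$ is where I expect the real work. My first attempt is a two-phase split of $[\LNEW^*,\UNEW]$ at the last switch point $S^*_{\tK-1}$. On the final interval the control is $\n$, so $p$ solves the $\n$-ODE exactly, and this produces the $-1/\aNEW_{\n}$ term. On $[\LNEW^*,S^*_{\tK-1})$ the controls have $\aNEW\ge\aNEW_1$ and $\hNEW\le\hNEW_{\n}$. Comparing the time spent there against the drift-normalized time under control $\n$ costs at most a factor $\aNEW_{\n}/\aNEW_1$ in length. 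Since $\MNEW CE_{\n}=\UNEW-\barLNEW$, this inflates the distance by at most $(\UNEW-\barLNEW)(\aNEW_{\n}/\aNEW_1-1)$. Tightness of the lower bound should come from a limiting instance with two frontier controls $1,\n$ in which the first interval occupies essentially all of $[\LNEW^*,\barLNEW]$.
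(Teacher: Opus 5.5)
\textbf{Upper bound.} This part is correct and is essentially the paper's argument. Your comparison $p'\le\hNEW_{\n}-\aNEW_{\n}p$ is equivalent to comparing with the single-control policy ``$\n$ only'' and its optimal boundary $\LNEW_{\n}\ge\LNEW^*$. Tightness via $\n=1$ is also fine.

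\textbf{Lower bound: the gap.} The refinement paragraph is not an argument, and its starting point is false. The last control used is $i^*_{\tK}=\tK\le\n$, and $\tK<\n$ genuinely occurs: in the paper's own free-boundary example (Figure~\ref{fig:myoPlcBad_FrLB_Plc}) the optimal policy is ``12'' while $\n=3$. So there need not be any interval on which $p$ solves the $\n$-ODE and produces the $-1/\aNEW_{\n}$ term. The ``factor $\aNEW_{\n}/\aNEW_1$ in length'' step is likewise unsupported.

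Your crude bound $\LNEW^*\ge\barLNEW-1/\aNEW_1$ is also not of the wrong form. Since $\frac{1}{\aNEW_1}=\frac{1}{\aNEW_{\n}}+\frac{1}{\aNEW_{\n}}\bigl(\frac{\aNEW_{\n}}{\aNEW_1}-1\bigr)$, it is exactly the stated bound with $\UNEW-\barLNEW$ replaced by $1/\aNEW_{\n}$ in the last term. Hence it already implies the claim whenever $\UNEW-\barLNEW\ge 1/\aNEW_{\n}$. It is strictly weaker only in the opposite case, and that is precisely the case your sketch leaves open.

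\textbf{Missing idea.} You need a comparison function that combines the largest rate with the smallest scaled cost. The paper adds a fictitious control $(\aNEW_{\n},\hNEW_1)$ and notes that it alone is optimal in the enlarged problem. It then uses the fact that enlarging the control set lowers the boundary, so $\LNEW_+\le\LNEW^*$. In your language the argument runs as follows.
\begin{itemize}
\item For every frontier control $i\in\{1,\dots,\n\}$ and every $p\ge 0$, we have $\hNEW_i-\aNEW_i p\ge\hNEW_1-\aNEW_{\n}p$, because $\hNEW_i\ge\hNEW_1$ and $\aNEW_i\le\aNEW_{\n}$.
\item Linear comparison from $p(\LNEW^*)=0$ then gives $p(x)\ge\frac{\hNEW_1}{\aNEW_{\n}}\bigl(1-e^{-\aNEW_{\n}(x-\LNEW^*)}\bigr)$.
\item Integrating over $[\LNEW^*,\UNEW]$ gives $\MNEW\ge\frac{\hNEW_1}{\aNEW_{\n}}\bigl(\UNEW-\LNEW^*-\frac{1}{\aNEW_{\n}}\bigr)$, i.e.\ $\LNEW^*\ge\UNEW-\MNEW\frac{\aNEW_{\n}}{\hNEW_1}-\frac{1}{\aNEW_{\n}}$.
\item Finally, $CE_1\le CE_{\n}$ means $\hNEW_{\n}/\hNEW_1\le\aNEW_{\n}/\aNEW_1$. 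Therefore $\MNEW\aNEW_{\n}/\hNEW_1=(\UNEW-\barLNEW)\hNEW_{\n}/\hNEW_1\le(\UNEW-\barLNEW)\aNEW_{\n}/\aNEW_1$.
\end{itemize}
This is the stated lower bound, and no exponential correction is needed.

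This comparison curve rises faster than yours initially, with slope $\hNEW_1$ versus $\aNEW_1\hNEW_{\n}/\aNEW_{\n}$. However, it saturates at the lower level $\hNEW_1/\aNEW_{\n}$, so neither of the two comparisons dominates the other.

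\textbf{Tightness.} Your tightness claim for the lower bound is only a guess. The paper supplies an explicit two-control sequence in Lemma~\ref{lem:FrLB_LstarRange_ineq}.
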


Proposition \ref{prop:LstarRangeFrLB} essentially says that the optimal lower boundary $\LNEW^*$ can be both upper and lower bounded. We note that in (\ref{eq:FrLB_LstarRangeFrLB2}), both inequalities are tight. The proof involves constructing a sequence of problem instances that approximately achieve the inequalities (see Lemma \ref{lem:FrLB_LstarRange_ineq} in the Appendix).

The detailed description of the optimal policy for the free lower boundary case is specified in Theorem \ref{thm:Kctrls_FrLB}. 

\begin{restatable}{theorem}{OptPlcFrLB}
\label{thm:Kctrls_FrLB}
For the free lower boundary problem, fix $\{c_i, \mu_i, \sigma_i\}_{i=1}^N > 0$, $\UNEW$, $\MNEW > 0$. There exists an optimal lower boundary $\LNEW^*$ satisfying  (\ref{eq:FrLB_LstarRangeFrLB2}), and an optimal interval policy $\pi^* = \left\{\left(i_j^*,I_j^*=[S_{j-1}^*,S_j^*) \right)\right\}_{j=1}^\tK$, where $1 \le \tK \le \n$ and $i_1^*, i_2^*, ..., i_{\tK}^*$ satisfying
\begin{align}
    & i_j^* = j \in \Phi_{Fr} \text{, for $1 \le j \le \tK$,}\\
    &\aNEW(i_1^*) < \aNEW(i_2^*) < ... < \aNEW(i_{\tK}^*),  \\
    &CE(i_1^*) < CE(i_2^*) < ... < CE(i_{\tK}^*). 
\end{align}
Furthermore, the value function $V(\cdot)$ is $\mathcal{C}^2$ over $(-\infty,\UNEW) \backslash \{\LNEW^*\}$, $\mathcal{C}^1$ at $\LNEW^*$ and satisfies
\begin{align}
&V(\UNEW) = \MNEW, \label{eq:FrLB_suffCondOpt1} \\
&V(x) = 0 \text{ for all $x \le \LNEW^*$}, V(x) \ge 0 \text{ for all $x \ge \LNEW^*$}, \label{eq:FrLB_suffCondOpt2}\\
&V'(\LNEW^*) = 0, \label{eq:FrLB_suffCondOpt3}\\
&-c_i + \mu_i V'(x) + \frac{\sigma_i^2}{2}V''(x) \le 0 \text{, for all $x \in (\LNEW^*,\UNEW)$ and $i \in \I$,} \label{eq:FrLB_suffCondOpt4} \\
&-c_{i_j^*} + \mu_{i_j^*} V'(x) + \frac{\sigma_{i_j^*}^2}{2}V''(x) = 0 \text{, for $x \in (S_{j-1}^*,S_j^*)$, $1 \le j \le \tK$}. \label{eq:FrLB_suffCondOpt5}
\end{align}

\end{restatable}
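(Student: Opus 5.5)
We prove Theorem \ref{thm:Kctrls_FrLB} by a verification argument: construct a candidate value function $V$ and a boundary $\LNEW^*$ with the stated properties, then show by It\^o's formula that no admissible policy (with any lower boundary) does better. Throughout we work in the reduced variables $p = V'$. On an interval where control $i$ is used, the ODE $-c_i+\mu_i V' + \frac{\sigma_i^2}{2}V''=0$ becomes
\begin{align*}
p' = \hNEW_i - \aNEW_i p .
\end{align*}
The HJB inequality then reads $V''(x)\ge \max_i(\aNEW_i\,\cdot)$-type bounds; more precisely, for every $i$ we need $\hNEW_i-\aNEW_i V'(x)\ge V''(x)$. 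Equivalently, at each $x$ the active control minimizes $\hNEW_i-\aNEW_i p$ over $i$ with $p=V'(x)$.

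\textbf{Step 1: the pointwise minimizer.} The minimizer of $\hNEW_i-\aNEW_i p$ over $i$ is the point of the lower convex hull in the $(\aNEW,\hNEW)$-plane supported by a line of slope $p$. As $p$ increases from $0$, this point moves along the hull to the right. It starts at control $1$ (minimal $\hNEW$), which is exactly why controls $\Nm,\dots,0$ drop out. The minimum value $\hNEW_i-\aNEW_i p$ stays positive as long as $p<\hNEW_{\n}/\aNEW_{\n}=1/CE_{\n}$. Beyond that slope control $\n$ is followed by controls that are never reached. This yields the type-III frontier $\Phi_{Fr}$ and the ordering (\ref{ineq:EF_FrLB2}). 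The switching slopes are the hull slopes $p_j=(\hNEW_{j+1}-\hNEW_j)/(\aNEW_{j+1}-\aNEW_j)$.

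\textbf{Step 2: construction (shooting from the lower boundary).} Impose $V(\LNEW^*)=0$ and $V'(\LNEW^*)=0$ (smooth fit). Solve $p'=\min_i(\hNEW_i-\aNEW_i p)$ forward from $p=0$. Since the right-hand side is positive and Lipschitz, $p$ increases strictly and tends toward the fixed point $1/CE_{\n}$ without reaching it. Hence $p$ crosses $p_1<p_2<\dots$ successively, so the controls used are $1,2,\dots$ in order. The resulting $V$ is $\mathcal C^2$ away from $\LNEW^*$, because $p$ is continuous and $p'$ is continuous at the switches, where the two adjacent controls give equal values. Now let $F(\ell)=\int_\ell^{\UNEW}p(x-\ell)\,dx$, where $p(\cdot)$ is the trajectory started at $0$. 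Then $F$ is continuous and strictly increasing as $\ell$ decreases, with $F\to\infty$ because $p$ approaches a positive limit. Choose $\LNEW^*$ with $F(\LNEW^*)=\MNEW$, which gives $V(\UNEW)=\MNEW$. Uniqueness follows from monotonicity. The number of controls reached is $\tK\le\n$. Set $V\equiv0$ below $\LNEW^*$; nonnegativity follows from $p\ge0$. The bounds (\ref{eq:FrLB_LstarRangeFrLB2}) come from Proposition \ref{prop:LstarRangeFrLB}; alternatively, compare $p$ with the single-control solutions for controls $1$ and $\n$, which bracket it.

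\textbf{Step 3: verification.} On $(\LNEW^*,\UNEW)$ the HJB inequality holds by construction, since $V''=\min_i(\hNEW_i-\aNEW_i V')$. For any admissible policy and any stopping boundary, apply It\^o's formula (the generalized version, since $V$ is $\mathcal C^1$ with piecewise continuous $V''$) to $V(X_t)+\int_0^t c$ up to the exit time. Below $\LNEW^*$, stopping is optimal and continuation only costs, because $V=0$ there. This gives $V\ge$ the payoff of any policy, with equality for the constructed policy. Both exit times are finite in expectation because all drifts are positive. Note that $V\ge0$ means abandoning earlier never helps, and smooth fit means abandoning later never helps.

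\textbf{Main obstacle.} The delicate part is Step 2. We must show the shooting map $\ell\mapsto V(\UNEW)$ is monotone and continuous, that $p$ stays strictly below $1/CE_{\n}$ so that no control beyond $\n$ is ever selected, and that the HJB inequality holds at $x=\LNEW^*$ despite the kink in $V''$. At the kink, the one-sided value gives $V''(\LNEW^{*+})=\hNEW_1>0$, and from the left $-c_i\le0$ holds trivially, so this step should go through.
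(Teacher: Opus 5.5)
Your proposal is correct and follows essentially the paper's route: the paper's $V_{Fr}=V_I(\cdot;q=0)$ is exactly your trajectory of $p'=\min_i(\hNEW_i-\aNEW_i p)$ started from $p=0$, which begins with control $1=\underline{\Gamma}_{init}(0)$ and switches at the hull slopes $\xi_j$; $\LNEW^*=\UNEW-x^*$ with $V_{Fr}(x^*)=\MNEW$ is your shooting choice; the HJB check is the same supporting-line (domination) argument; and the bound (\ref{eq:FrLB_LstarRangeFrLB2}) is taken from Proposition \ref{prop:LstarRangeFrLB}. One correction to your aside: single-control trajectories do not bracket $p$ --- since $p'$ is a minimum over all controls, every single-control solution started at $0$ lies above $p$, so this comparison only yields the upper bound $\LNEW^*\le\LNEW_{\n}$, and the lower bound needs a dominating fictitious control such as $(\aNEW_{\n},\hNEW_1)$, as in the paper's proof of that proposition.
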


\begin{figure}[h]
    \centering
    \scalebox{0.8}
    {
\begin{tikzpicture}[xscale = 3.2]
    \def\Lbar{0.2}
    \def\Lstar{0.1}
    \def\iOne{0.175}
    \def\SOne{0.25}
    \def\iTwo{0.355}
    \def\STwo{0.46}
    \def\DotDotDot{0.56}
    \def\SKminusOne{0.66}
    \def\iK{0.79}
    \def\U{0.92}
    \def\Xt{1}
\draw[->] (0,0) -- (5,0)
node[pos=\Lbar,fill=red,circle, inner sep=0pt,minimum size=4pt,label=below:\textcolor{red}{\scriptsize{$\barLNEW$}}]{}
node[pos=\Lstar,fill=blue,circle, inner sep=0pt,minimum size=4pt,label=below:\textcolor{blue}{\scriptsize{$\LNEW=S^*_0$}}]{}
node[pos=\iOne, above] {use $i_1^* = 1$}
node[pos=\SOne,fill=blue,circle, inner sep=0pt,minimum size=4pt,label=below:\textcolor{blue}{\scriptsize{$S^*_1$}}]{}
node[pos=\iTwo, above] {use $i_2^* = 2$}
node[pos=\STwo,fill=blue,circle, inner sep=0pt,minimum size=4pt,label=below:\textcolor{blue}{\scriptsize{$S^*_2$}}]{}
node[pos=\DotDotDot, above] {...}
node[pos=\SKminusOne,fill=blue,circle, inner sep=0pt,minimum size=4pt,label=below:\textcolor{blue}{\scriptsize{$S^*_{\tK - 1}$}}]{}
node[pos=\iK, above] {use $i_{\tK}^* = \tK$}
node[pos=\U,fill=blue,circle, inner sep=0pt,minimum size=4pt,label=below:\textcolor{blue}{\scriptsize{$\UNEW=S^*_{\tK}$}}]{}
node[pos=\Xt, below] {$X(t)$};
\end{tikzpicture}
}

    \caption{Detailed characterization of the optimal policy $\pi_{Fr}^*$. As $X(t)$ increases, the optimal policy uses controls $1,2,...,\tK$ (increasing \textit{effective-drift} in increasing order in $\Phi_{Fr}$), where $\tK \le \n$.}
    \label{fig:OptPlcFrLB}
\end{figure}

Theorem \ref{thm:Kctrls_FrLB} provides a more comprehensive and detailed description of Proposition \ref{prop:Kctrls_FrLB_pvw}. The optimal action is to use control $j$ when the state $X(t)$ is in the interval $[S_{j-1}^*, S_j^*)$ (see Figure \ref{fig:OptPlcFrLB}). As $X(t)$ increases, the optimal policy uses controls with larger \textit{effective-drift} and \textit{cost-effectiveness}, in increasing order on the \textit{efficient frontier}  $\phi_{Fr}$. The controls used all have \textit{effective-drift} not exceeding that of control $\n$. Note that control $1$ (the \textit{riskiest} control on the \textit{efficient frontier} ) is always used near the optimal lower boundary.

Again to find the actual controls used in the optimal policy, or the set of switching thresholds $S_j^*$, we refer the reader to the numerical procedure in \S \ref{sec:PfThmFrLB} in the Appendix.

\subsection{Analysis of the Myopic Policy (Free Lower Boundary Case)}
\label{subsec:myoPlcBad_FrLB}

For the free lower boundary case, the \textit{myopic policy} can perform arbitrarily worse than the optimal policy. 
\begin{restatable}{proposition}{FrLBMyoPlcBad}
\label{prop:FrLBMyoPlcBad}
For the base model, let $\UNEW$, $\MNEW > 0$ be fixed. For any $\epsilon > 0$, there exists a problem instance such that 
\begin{align}
    \sup_{x \in (-\infty,\UNEW)} \frac{V_{opt}(x) - V_{myo}(x)}{\MNEW} > 1 - \epsilon,
\end{align}
where $V_{opt}(\cdot)$ and $V_{myo}(\cdot)$ denote the value function of the optimal and myopic policy. 
\end{restatable}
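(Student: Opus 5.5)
The plan is to build an explicit two-control instance. In it the myopic policy earns exactly zero at some state $x$, while a policy that uses only the non-myopic control earns at least $(1-\epsilon)\MNEW$ at the same $x$. Since $V_{opt}$ dominates the value of any admissible policy with any lower boundary, this gives the result. I will interpret $V_{myo}$ as the value of always using control $\n$ combined with the best lower boundary for that static policy. This is the most favorable reading for the myopic policy, so the bound also holds under any fixed lower-boundary choice.

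\textbf{Step 1: the myopic value vanishes below $\barLNEW$.} Let the myopic policy use control $\n$ with lower boundary $\LNEW$, starting from $x$. Write $p$ for the probability of hitting $\UNEW$ first. Since the drift is constant, Wald's identity gives $\E[T]=\E[X(T)-x]/\mu_{\n}$. Hence
\begin{align*}
V_{myo}(x)=p\Big(\MNEW-\tfrac{\UNEW-x}{CE_{\n}}\Big)-(1-p)\tfrac{x-\LNEW}{CE_{\n}}.
\end{align*}
If $x<\barLNEW=\UNEW-\MNEW\,CE_{\n}$, the first bracket is negative, so every choice of $\LNEW<x$ yields a negative value. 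The best myopic choice is then to abandon immediately, which gives $V_{myo}(x)=0$. I fix the state $x_0=\barLNEW-1$ and the control $\n$ with $\mu_{\n}=c_{\n}=1$, so $CE_{\n}=1$ and $\barLNEW=\UNEW-\MNEW$. The volatility $\sigma_{\n}$ can be arbitrary.

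\textbf{Step 2: a nearly free, highly volatile control.} I add a control $1$ whose parameters depend on a small $\delta>0$: $\mu_1=\delta^2$, $c_1=\delta$, and $\sigma_1=1$. Then $CE_1=\delta<1=CE_{\n}$, so control $\n$ remains the myopic choice. I evaluate the static policy ``1 only'' from $x_0$ with lower boundary $\LNEW=x_0-M$, where $M$ is large. The hitting probability is
\begin{align*}
p=\frac{1-e^{-\aNEW_1 M}}{1-e^{-\aNEW_1(M+\UNEW-x_0)}},
\end{align*}
and by Wald again the expected cost is $\frac{1}{CE_1}\big(p(\UNEW-x_0)-(1-p)M\big)$.

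\textbf{Step 3: the limit (main obstacle).} The delicate part is this limit. The cost prefactor $1/CE_1=1/\delta$ blows up, while the bracket it multiplies tends to $0$ as $\aNEW_1=2\delta^2\to0$. I will expand $p$ to second order in $\aNEW_1$. This gives
\begin{align*}
p(\UNEW-x_0)-(1-p)M=\tfrac{\aNEW_1}{2}\,M(\UNEW-x_0)+O\big(\aNEW_1^2(M+\UNEW)^3\big).
\end{align*}
So the expected cost is of order $\hNEW_1 M(\UNEW-x_0)/2=\delta M(\MNEW+1)$, using $\hNEW_1=\aNEW_1/CE_1=2\delta$. Meanwhile $p\to M/(M+\MNEW+1)$ as $\delta\to0$. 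I will first choose $M$ large enough that $M/(M+\MNEW+1)>1-\epsilon/3$. Then I will choose $\delta$ small enough that $p>1-2\epsilon/3$ and that the cost is below $\epsilon\MNEW/3$. Equivalently, one can use the explicit cost bound $\E[T]\le (\UNEW-x_0)M/\sigma_1^2$, multiplied by $c_1=\delta$, which gives the same estimate. This yields $V_{opt}(x_0)\ge(1-\epsilon)\MNEW$. Together with $V_{myo}(x_0)=0$ from Step 1, the ratio in the proposition exceeds $1-\epsilon$. Any extra controls needed to match a general $N$ can be given prohibitive costs, so they affect neither policy.
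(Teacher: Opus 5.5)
Your Step 1 rests on a sign error, and its key claim is false. In Wald's identity, on $\{X(T)=\LNEW\}$ one has $X(T)-x=-(x-\LNEW)<0$. Hence $c_{\n}\E[T]=\big(p(\UNEW-x)-(1-p)(x-\LNEW)\big)/CE_{\n}$, and
\begin{align*}
V_{\LNEW}(x)=p\Big(\MNEW-\tfrac{\UNEW-x}{CE_{\n}}\Big)+(1-p)\tfrac{x-\LNEW}{CE_{\n}},
\end{align*}
with a plus sign on the second term. So $x<\barLNEW$ does not force $V_{myo}(x)=0$; compare Proposition \ref{prop:Kctrls_FrLB_pvw}, which puts the optimal boundary strictly below $\barLNEW$.

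For control $\n$ alone, solve the smooth-fit problem $V(\LNEW_{\n})=V'(\LNEW_{\n})=0$, $V(\UNEW)=\MNEW$. It gives $D=\UNEW-\LNEW_{\n}$ with $D=\MNEW\,CE_{\n}+(1-e^{-\aNEW_{\n}D})/\aNEW_{\n}$. Hence $\LNEW_{\n}\in(\barLNEW-1/\aNEW_{\n},\barLNEW)$ and $V_{myo}>0$ on $(\LNEW_{\n},\UNEW)$. In particular, ``$\sigma_{\n}$ can be arbitrary'' is wrong, since $\LNEW_{\n}\to-\infty$ as $\sigma_{\n}\to\infty$. Concretely, take $\mu_{\n}=c_{\n}=\MNEW=1$, $\aNEW_{\n}=0.01$ and $\LNEW=x_0-10$. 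Then already $V_{myo}(x_0)\ge V_{\LNEW}(x_0)\approx 0.74$, so at your $x_0$ the ratio is at most about $0.26$, whatever control $1$ does.

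The repair is to take $x_0\le\barLNEW-1/\aNEW_{\n}$. For instance, keep $x_0=\barLNEW-1$ but impose $\sigma_{\n}^2\le 2$, so that $\aNEW_{\n}\ge 1$. Let $m=x_0-\LNEW>0$, $d=\UNEW-x_0$ and $\Delta=d-\MNEW\,CE_{\n}\ge 1/\aNEW_{\n}$. The sign of $V_{\LNEW}(x_0)$ is that of $-(1-e^{-\aNEW_{\n}m})\Delta+m e^{-\aNEW_{\n}m}(1-e^{-\aNEW_{\n}d})$. This is negative because $\aNEW_{\n}m e^{-\aNEW_{\n}m}\le 1-e^{-\aNEW_{\n}m}$, so $V_{myo}(x_0)=0$. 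Steps 2--3 then go through with $d=\UNEW-x_0$ in place of $\MNEW+1$.

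Drop the side bound $\E[T]\le(\UNEW-x_0)M/\sigma_1^2$. It is not a general fact for drifted Brownian motion: its first correction in $\aNEW_1$ has the sign of $(\UNEW-x_0)-M$, so it fails when the start is nearer the lower boundary. Your expansion suffices without it.

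Once repaired, your route differs from the paper's. The paper fixes $\UNEW=\MNEW=1$ and takes $(\aNEW_1,\hNEW_1)=(\epsilon^2,2\epsilon^3)$ and $(\aNEW_2,\hNEW_2)=(1,\epsilon)$. It writes both single-control smooth-fit value functions in closed form, with boundaries given via Lambert-W, and asserts the limit by direct computation. You avoid optimizing the non-myopic control's boundary by using the ad hoc boundary $x_0-M$ and an explicit two-stage limit, and you handle general $\UNEW,\MNEW$. The mechanism is the same in both: a nearly driftless control with vanishing scaled cost $\hNEW_1$ wins a near-fair gamble, at cost about $\hNEW_1 Md/2$, from a state where the myopic policy has already quit.
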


This conclusion is similar to the one of the fixed lower boundary case. A numerical example that illustrates when the difference can be large is presented in Appendix \ref{ec_sec:myoPlcBad_FrLB}.

\section{Discussion}
\label{sec:Implications}

In this section, we summarize the main insights from this work and its contributions relative to [\cite{wang2022new}].

The main insights are as follows. First, all controls can be characterized by two measures: \textit{riskiness} (drift-to-volatility ratio) and \textit{cost-effectiveness} (drift-to-cost ratio). Second, when there are many controls available, the entrepreneur should reduce the set of controls under consideration to be the subset specified by the corresponding \textit{efficient frontier}. This may significantly reduce the complexity of the problem. Third, the optimal strategy's structure changes qualitatively depending on the model parameters: (a) if the lower boundary is fixed and relatively near $\UNEW$ (i.e. $\LNEW > \barLNEW$), then the entrepreneur should use increasingly \textit{riskier} and more \textit{cost-effective} controls as the firm's performance metric improves; (b) if the lower boundary is fixed and relatively far away from $\UNEW$ (i.e. $\LNEW < \barLNEW$), then the entrepreneur should use increasingly \textit{safer} and more \textit{cost-effective} controls as the firm's performance metric improves; (c) if the lower boundary is free, the optimal strategy is similar to (b) and the entrepreneur should at least lower the boundary to $\barLNEW$. Lastly, whenever possible, the entrepreneur should strive to use the optimal policy and avoid using the myopic policy, as the latter may have significantly worse performance. 

The current work also represents a non-trivial generalization of [\cite{wang2022new}]. Notably, it employs an entirely distinct and novel proof techniques that involve prescribing the \textit{efficient frontier} set of optimal controls. In addition, our work demonstrates the limitation of [\cite{wang2022new}]’s results and show that their insights may be misleading in the multi-control setting. In particular, the DCCs of the controls can no longer serve as sufficient statistics to predict the optimal policy. 

Furthermore, the current work offers a new metric \textit{cost-effectiveness} to guide entrepreneurs. This metric possesses a highly intuitive interpretation that helps to determine an ordering of the controls and illustrates a clear trade-off among each control’s characteristics (namely drift, variance and cost). Lastly, our study provides a clear intuition behind why the optimal policy's structure changes qualitatively depending on the model parameters (see ``Intuition of the Optimal Policy Structure'' in \S\ref{subsec:OptPlcStrtrFxLB}). In contrast, [\cite{wang2022new}] lacks the provision of such explanatory insight. This intuition provides entrepreneurs a clear comprehension of their strategy and helps provide valuable insights into the optimal policy’s structure,

\section{Conclusion}
\label{sec:conclusion}

In this paper, we examine a model of an entrepreneurial start-up firm, where a diffusion process captures the state of the firm. The main insight is that the entrepreneur should evaluate controls in both dimensions of \textit{riskiness} and \textit{cost-effectiveness} and the optimal strategy's structure changes qualitatively depending on the lower boundary's nature and position. In solving for the optimal policy, we explicitly construct the \textit{efficient frontier}  curve that specifies the set of controls that may be used in the optimal policy. As far as we know, this is the first study that analyzes a stochastic control model which admits \textit{efficient frontier}  curves of different types.

In reality, the entrepreneurial process represents a very complex phenomenon. There is generally a lack of framework to guide the entrepreneur within a startup as it is challenging to model every aspect of the process. As such, the current study and model are not without limitations. In particular, our model assumes the existence of a single metric that can capture the state of the firm. This assumption enables the HJB equation to involve sets of ordinary differential equations, which results in clean and tractable analytical results. A more general model will be to incorporate multiple metrics to represent the state of the start-up firm, and assume each control influences all metrics. However, this will result in the HJB equation involving sets of partial differential equations and making the model intractable to analyze. 

In conclusion, this work investigates strategies for start-ups in a milestone-oriented setting. We incorporate multiple controls into the model, obtain optimal control policies and derive clean managerial insights. We believe the results from this paper provide a foundational block in the study of entrepreneurial decision-making, and help inform practice and research related to an applied theory framework to guide entrepreneurs within a start-up.

{
\SingleSpacedXI
\bibliographystyle{informs2014} 
\bibliography{reference_correlated} 
}

\newpage
\setcounter{page}{1}
\renewcommand{\thepage}{EC-\arabic{page}}

\clearpage
{\Large
\centerline{\textbf{The Electronic Companion}}
}

\begin{APPENDICES}

\section{Proof of Results for the Fixed Lower Boundary Case}
\label{ec_sec:PfRsltsFxLBCs}
Appendix \ref{ec_sec:PfRsltsFxLBCs} presents the proof of results for the fixed lower boundary case. \S\ref{sec:PfThmFxLB_case1} presents the proof of Theorem \ref{thm:Kctrls_FxLB} Case (1), \S\ref{sec:PfThmFxLB_case2} presents the proof of Theorem \ref{thm:Kctrls_FxLB} Case (2), and \S\ref{sec:PfPropFxLB} presents the proof of Propositions \ref{prop:Kctrls_FxLB} and \ref{prop:FxLBMyoPlcBad}.

\subsection{Proof of Theorem \ref{thm:Kctrls_FxLB} Case (1)}
\label{sec:PfThmFxLB_case1}


In this section, we will prove {Theorem} \ref{thm:Kctrls_FxLB} Case (1) by constructing an optimal interval policy. The proof is organized into three main steps. First, we define some relevant quantities that will be useful in the proof. Second, we construct a value function parametrized by a quantity $q$ that lies within certain ranges, and show that the set of value functions constructed in this way possess certain desired properties and are ordered. Third combining all the previous steps, we show that the constructed value function corresponds to the desired optimal policy. 

\subsubsection{Step I: Notations and Definitions}~
Let $\{\aNEW_i,\hNEW_i\}_{i=\Nm}^{\Np}$ be given. Define a sequence of thresholds $\{\xi_i\}_{i=\n}^{\Np - 1}$ as
\begin{align}
    \xi_i = \frac{\hNEW_{i+1} - \hNEW_i}{\aNEW_{i+1} - \aNEW_i}\text{ , $i = \n,\n+1...,\Np-1$.} \label{eq:xiDefFxBrCs1}
\end{align}

Define $q_{min} = \hNEW_{\n}/\aNEW_{\n}$, a function $\bar{\Gamma}_{init}(\cdot)$ that maps any value of $q > q_{min}$ to a control index,  functions $\bar{E}_i(\cdot)'s$ and quantities $\bar{D}_i's$ given by
\begin{align}
    \bar{\Gamma}_{init}(q) &= 
    \begin{cases}
    \Np \text{ if $q > \xi_{\Np - 1}$}\\
    \Np - 1 \text{ if $\xi_{\Np - 2} < q \le \xi_{\Np - 1}$}\\
    ...\\
    \n+1 \text{ if $\xi_{\n} < q \le \xi_{\n+1}$}\\
    \n \text{ if $q_{min} < q \le \xi_{\n}$}\\
    \end{cases}\\
    \bar{E}_i(q) &= \begin{cases} \frac{1}{\aNEW_i} \ln{\frac{ q-\hNEW_i/\aNEW_i}{\xi_{i-1}-\hNEW_i/\aNEW_i}} \text{, $i = \n+1, ..., \Np$, $\xi_{i-1} < q \le \xi_i$}\\
    \infty\text{, $i = \n$, $q_{min} < q \le \xi_{\n}$}\\
    \end{cases} \label{pfEq:EiqFxBrCs1}\\
    \bar{D}_i &= \frac{1}{\aNEW_i} \ln{\frac{ \xi_i - \hNEW_i/\aNEW_i}{ \xi_{i-1}-\hNEW_i/\aNEW_i}} \text{, $i = \n+1, ..., \Np-1$} \label{pfEq:DiFxBrCs1}
\end{align}
and note that: 1. $\bar{E}_i(q) > 0$ is well-defined because $\xi_{i-1} > \hNEW_i/\aNEW_i$ (c.f. Lemma \ref{suppLem:xiRltnFxBrCs1}) and $q > \xi_{i-1} > \hNEW_i/\aNEW_i$, 2. $\bar{D}_i > 0$ is well-defined because $\xi_i > \hNEW_i/\aNEW_i$ and $\xi_{i-1} > \hNEW_i/\aNEW_i$ (c.f. Lemma \ref{suppLem:xiRltnFxBrCs1}). We can also show the following property of $\bar{E}_i(q)$.

\begin{restatable}{lemma}{lemEiqMonoInqFxBrCsOne}
\label{lem:EiqMonoInqFxBrCs1}
Suppose $\xi_{i-1} < q \le \xi_i$ (or $q > \xi_{\Np-1}$), $i = \bar{\Gamma}_{init}(q)$, and Let $\bar{E}_i(q)$ be defined as in (\ref{pfEq:EiqFxBrCs1}). Then we have 
\begin{align}
    \frac{d \bar{E}_i(q)}{dq} > 0.
\end{align}
\end{restatable}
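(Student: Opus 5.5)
This is a direct calculus statement, so I will differentiate the explicit formula (\ref{pfEq:EiqFxBrCs1}) and check the sign, splitting into two cases according to which branch of the definition applies.

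\emph{Degenerate case $i=\n$.} Here $q_{min} < q \le \xi_{\n}$ and $\bar{E}_{\n}(q) = \infty$ is constant. The strict inequality in the statement is meaningless for this branch, so I would read the lemma as concerning $i = \n+1,\dots,\Np$ and note that the $i=\n$ branch is excluded by convention (or holds trivially in the extended sense).

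\emph{Main case $i\in\{\n+1,\dots,\Np\}$.} In this range $\xi_{i-1} < q \le \xi_i$, or $q > \xi_{\Np-1}$ when $i=\Np$. Write
\begin{align*}
\bar{E}_i(q) = \frac{1}{\aNEW_i}\Big[\ln\big(q - \hNEW_i/\aNEW_i\big) - \ln\big(\xi_{i-1} - \hNEW_i/\aNEW_i\big)\Big].
\end{align*}
The second logarithm does not depend on $q$. Differentiating gives
\begin{align*}
\frac{d\bar{E}_i(q)}{dq} = \frac{1}{\aNEW_i\,(q - \hNEW_i/\aNEW_i)}.
\end{align*}
Two facts make this strictly positive. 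First, $\aNEW_i = 2\mu_i/\sigma_i^2 > 0$ because $\mu_i > 0$. Second, $q - \hNEW_i/\aNEW_i > 0$, since $q > \xi_{i-1} > \hNEW_i/\aNEW_i$ by Lemma \ref{suppLem:xiRltnFxBrCs1}. The same positivity is what was used in the remarks after (\ref{pfEq:DiFxBrCs1}) to show that $\bar{E}_i(q)$ is well-defined.

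The only substantive ingredient is the inequality $\xi_{i-1} > \hNEW_i/\aNEW_i$, and I am taking it from Lemma \ref{suppLem:xiRltnFxBrCs1}. If it needed to be re-derived, it follows from the convexity relations (\ref{ineq:EF_FxLB_1Eff2}). The secant slope from the origin to $(\aNEW_i,\hNEW_i)$ is a weighted average of $\hNEW_{\n}/\aNEW_{\n}, \xi_{\n},\dots,\xi_{i-1}$, because $\overline{\phi}$ is piecewise linear starting at $(0,0)$. All of these terms are at most $\xi_{i-1}$, and at least the first is strictly smaller, so $\hNEW_i/\aNEW_i < \xi_{i-1}$. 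With that in hand, the lemma reduces to the one-line derivative computation above.
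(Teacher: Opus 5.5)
Your proof is correct and is the same straightforward computation the paper has in mind; the paper omits it. Differentiating gives $1/(\aNEW_i q - \hNEW_i)$, which is positive because $\aNEW_i>0$ and $q>\xi_{i-1}>\hNEW_i/\aNEW_i$ by Lemma~\ref{suppLem:xiRltnFxBrCs1}, and your weighted-average rederivation of that inequality is an equivalent restatement of the paper's argument via $\hNEW_{i-1}/\aNEW_{i-1}<\hNEW_i/\aNEW_i$.
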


In addition, define $W_q^{(i)} (x)$ to be the unique solution of $W(x)$
\begin{align}
\label{eq:a_x0y0q_xFxBrCs1}
    -c_i + \mu_i W'(x) + \frac{\sigma_i^2}{2} W''(x) = 0, W(0) = 0, W'(0) = q,
\end{align}
and we can show the following relationship between $W_q^{(i)} (x)$ and $\bar{D}_i$, $\bar{E}_i(q)$'s.

\begin{restatable}{lemma}{lemWDEFxBrCsOne}
\label{lem:WDEFxBrCs1}
Let $W_q^{(i)} (x)$ be defined as in (\ref{eq:a_x0y0q_xFxBrCs1}), $\bar{E}_i(q)$ and $\bar{D}_i$ be defined as in (\ref{pfEq:EiqFxBrCs1}) and (\ref{pfEq:DiFxBrCs1}). Then we have
\begin{align}
    &W_q^{(i)}{}'(0) = q, \text{ } W_q^{(i)}{}''(0) = \hNEW_i - \aNEW_i q \label{pfEq:WDE1FxBrCs1}\\
    &W_q^{(i)}{}'(\bar{E}_i(q)) = \xi_{i-1}, \text{ } W_q^{(i)}{}''(\bar{E}_i(q)) = \hNEW_i - \aNEW_i \xi_{i-1} \label{pfEq:WDE2FxBrCs1}\\
    &W_{\xi_i}^{(i)}{}'(\bar{D}_i) = \xi_{i-1}, \text{ } W_{\xi_i}^{(i)}{}''(\bar{D}_i) = \hNEW_i - \aNEW_i \xi_{i-1}. \label{pfEq:WDE3FxBrCs1}
\end{align}
In addition, if $q > \xi_{i-1}$ with $\n+1 \le i \le \Np$ (or $q > q_{min}$), then 
\begin{enumerate}[\hspace{0.5cm} a)]
    \item $W_q^{(i)} (x)$ (or $W_q^{(\n)} (x)$) is concave over $(0, \infty)$, i.e. $W_q^{(i)}{}''(x) < 0 $ (or $W_q^{(\n)}{}''(x) < 0$) for all $x \in (0, \infty)$. \label{enum:W_ccvFxBrCs1}
    \item $W_q^{(i)}{}'(x) > \xi_{i-1}$ (or $W_q^{(\n)}{}'(x) > q_{min}$) for all $x \in (0, \bar{E}_i(q))$. \label{enum:W_deri_lbFxBrCs1}
\end{enumerate}
Furthermore, for any $x \in (0, \infty)$, $\n \le i \le \Np$, we have
\begin{align}
    \lim_{q \rightarrow \left(\frac{\hNEW_i}{\aNEW_i}\right)^+} W_q^{(i)}(x) = \frac{\hNEW_i}{\aNEW_i} x, \text{ } \lim_{q \rightarrow \infty} W_q^{(i)}(x) = \infty. \label{enum:WvalueRangeFxBrCs1}
\end{align}
\end{restatable}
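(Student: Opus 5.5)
The plan is to solve the ODE (\ref{eq:a_x0y0q_xFxBrCs1}) in closed form and read every claim off the explicit formula. Dividing (\ref{eq:a_x0y0q_xFxBrCs1}) by $\sigma_i^2/2$ and using $\aNEW_i = 2\mu_i/\sigma_i^2$, $\hNEW_i = 2c_i/\sigma_i^2$, the equation becomes the first-order linear ODE
\begin{align*}
W''(x) = \hNEW_i - \aNEW_i W'(x)
\end{align*}
in $W'$. With $W'(0)=q$ and $W(0)=0$ this gives
\begin{align*}
W_q^{(i)}{}'(x) = \frac{\hNEW_i}{\aNEW_i} + \Big(q - \frac{\hNEW_i}{\aNEW_i}\Big) e^{-\aNEW_i x}, \qquad
W_q^{(i)}(x) = \frac{\hNEW_i}{\aNEW_i} x + \frac{1}{\aNEW_i}\Big(q - \frac{\hNEW_i}{\aNEW_i}\Big)\big(1 - e^{-\aNEW_i x}\big).
\end{align*}
Uniqueness follows from standard linear ODE theory.

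\textbf{Identities (\ref{pfEq:WDE1FxBrCs1})--(\ref{pfEq:WDE3FxBrCs1}).} The identity $W'' = \hNEW_i - \aNEW_i W'$ holds pointwise. Hence each second-derivative claim follows from the corresponding first-derivative claim, and (\ref{pfEq:WDE1FxBrCs1}) is immediate at $x=0$. For (\ref{pfEq:WDE2FxBrCs1}), substitute $x = \bar{E}_i(q)$ from (\ref{pfEq:EiqFxBrCs1}). This gives
\begin{align*}
e^{-\aNEW_i \bar{E}_i(q)} = \frac{\xi_{i-1} - \hNEW_i/\aNEW_i}{q - \hNEW_i/\aNEW_i},
\end{align*}
so that $W' = \hNEW_i/\aNEW_i + (\xi_{i-1} - \hNEW_i/\aNEW_i) = \xi_{i-1}$. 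The claim (\ref{pfEq:WDE3FxBrCs1}) is the same computation with $q = \xi_i$ and $\bar{D}_i$ from (\ref{pfEq:DiFxBrCs1}). The logarithms are well defined by Lemma \ref{suppLem:xiRltnFxBrCs1}, i.e.\ $\xi_{i-1} > \hNEW_i/\aNEW_i$.

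\textbf{Concavity and the derivative bound.} For $i \ge \n+1$ we have $q > \xi_{i-1} > \hNEW_i/\aNEW_i$ by Lemma \ref{suppLem:xiRltnFxBrCs1}. For $i=\n$ we have $q > q_{min} = \hNEW_{\n}/\aNEW_{\n}$. In both cases the coefficient $q - \hNEW_i/\aNEW_i$ is positive, so
\begin{align*}
W_q^{(i)}{}''(x) = -\aNEW_i\big(q - \hNEW_i/\aNEW_i\big)e^{-\aNEW_i x} < 0 \quad \text{for all } x,
\end{align*}
which is item a). Consequently $W'$ is strictly decreasing. For $i \ge \n+1$, $W'$ reaches $\xi_{i-1}$ exactly at $\bar{E}_i(q)$, so $W' > \xi_{i-1}$ on $(0, \bar{E}_i(q))$. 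For $i = \n$, $W'$ decreases to its infimum $\hNEW_{\n}/\aNEW_{\n} = q_{min}$ without attaining it, consistent with $\bar{E}_{\n} = \infty$. This gives item b).

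\textbf{Limits (\ref{enum:WvalueRangeFxBrCs1}).} Fix $x > 0$. The explicit formula for $W_q^{(i)}(x)$ is affine in $q$ with slope $(1 - e^{-\aNEW_i x})/\aNEW_i > 0$. Letting $q \to (\hNEW_i/\aNEW_i)^+$ kills the second term and leaves $\hNEW_i x/\aNEW_i$. Letting $q \to \infty$ sends the value to $\infty$.

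I do not expect a real obstacle here. The only nonroutine input is the inequality $\xi_{i-1} > \hNEW_i/\aNEW_i$ from Lemma \ref{suppLem:xiRltnFxBrCs1}, which is what makes $\bar{E}_i$ and $\bar{D}_i$ positive and fixes the sign of $W''$. The remaining care is bookkeeping: handling the case $i=\n$ separately, where $\xi_{\n-1}$ is replaced by $q_{min}$ and $\bar{E}_{\n} = \infty$.
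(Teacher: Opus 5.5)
Your proposal is correct and follows essentially the same route as the paper. You solve the ODE in closed form, substitute $\bar{E}_i(q)$ (and $\bar{D}_i$ with $q=\xi_i$) into $W'$, use Lemma~\ref{suppLem:xiRltnFxBrCs1} to get $\aNEW_i q - \hNEW_i > 0$ for concavity, and read the limits off the explicit formula. The only cosmetic difference is in item b): you use strict monotonicity of $W'$ together with $W'(\bar{E}_i(q)) = \xi_{i-1}$, whereas the paper writes out the equivalent chain $W' > \xi_{i-1} \Leftrightarrow x < \bar{E}_i(q)$.
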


\enlargethispage{\baselineskip}
\enlargethispage{\baselineskip}
\enlargethispage{\baselineskip}
\enlargethispage{\baselineskip}

\subsubsection{Step II: Construction of $V_E(x;q)$ for a given $q$}
\label{subsec:FxBrCs1PfStep2}
Given a value of $q \in (q_{min}, \infty)$, let $i = \bar{\Gamma}_{init}(q) \in \{\n,\n+1,...,\Np\}$ and we can construct the following trial value function:

{\scriptsize
\begin{align}
\label{pfDef:cstrtVFFxBrCs1}
V_E(x;q) &=
    \begin{cases}
    W_q^{(i)}(x) \text{ if $0\le x<\bar{E}_i(q)$}\\
    W_{\xi_{i-1}}^{(i-1)}(x - \bar{E}_i(q)) + W_q^{(i)}(\bar{E}_i(q)) \text{ if $\bar{E}_i(q) \le x < \bar{E}_i(q) + \bar{D}_{i-1}$}\\
    W_{\xi_{i-2}}^{(i-2)}(x - \bar{E}_i(q) - \bar{D}_{i-1}) +  W_{\xi_i}^{(i-1)}(\bar{D}_{i-1}) + W_q^{(i)}(\bar{E}_i(q)) \text{ if $\bar{E}_i(q) + \bar{D}_{i-1} \le x < \bar{E}_i(q) + \bar{D}_{i-1} + \bar{D}_{i-2}$}\\
    ...\\
    W_{\xi_{\n+1}  }^{(\n+1)}(x - \bar{E}_i(q) - \sum_{j=\n+2}^{i-1} \bar{D}_j) + \sum_{j=\n+2}^{i-1} W_{\xi_j}^{(j)}(\bar{D}_j) + W_q^{(i)}(\bar{E}_i(q))  \text{ if $\bar{E}_i(q) + \sum_{j=\n+2}^{i-1} \bar{D}_j \le x < \bar{E}_i(q) + \sum_{j=\n+1}^{i-1} \bar{D}_j$}\\
    W_{\xi_{\n}  }^{(\n)}(x - \bar{E}_i(q) - \sum_{j=\n+1}^{i-1} \bar{D}_j) + \sum_{j=\n+1}^{i-1} W_{\xi_j}^{(j)}(\bar{D}_j) + W_q^{(i)}(\bar{E}_i(q))  \text{ if $x \ge \bar{E}_i(q) + \sum_{j=\n+1}^{i-1} \bar{D}_j$}\\
    \end{cases}
\end{align}
}
and note that $V_E(x;q)$ is defined over $[0,\infty)$. Next we will prove some properties of $V_E(x;q)$ and show that the set $\{V_E(\cdot;q))\}_{q \in (q_{min}, \infty)}$ is ordered.

\begin{restatable}{lemma}{VForderedOneEff}
\label{lem:VForderedFxBrCs1}
Suppose $q \in (q_{min}, \infty)$ and $V_E(x;q)$ is constructed as in (\ref{pfDef:cstrtVFFxBrCs1}), then 
\begin{enumerate}[\hspace{0.5cm} a)]
    \item Over $(0,\infty)$, $V_E(\cdot;q)$ is $C^2$.  \label{enum:VForderedFxBrCs1_1}
    \item Over $(0,\infty)$, $V_E(\cdot;q)$ is concave, i.e. $V_E''(\cdot;q) < 0$. \label{enum:VForderedFxBrCs1_2}
    \item For $x \in (0,\infty)$, $V_E'(x;q) \in (q_{min}, q)$, $\lim_{\hat{q} \rightarrow q_{min}^+} V_E(x;\hat{q}) = q_{min} x$, $\lim_{\hat{q} \rightarrow \infty} V_E(x;\hat{q}) = \infty$. \label{enum:VForderedFxBrCs1_3}
    \item The functions $\{V_E(\cdot;q))\}_{q \in (q_{min}, \infty)}$ are ordered, with a smaller $q$ producing uniformly smaller function $V_E(\cdot;q)$. That is, for any $x \in (0,\infty)$, if $q < q'$ then $V_E(x;q) < V_E(x;q')$. \label{enum:VForderedFxBrCs1_4}
\end{enumerate}
\end{restatable}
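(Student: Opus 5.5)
The plan is to reduce everything to the first-order ODE satisfied by the slope $p(x) = V_E'(x;q)$. Each piece $W^{(j)}_{\cdot}$ in (\ref{pfDef:cstrtVFFxBrCs1}) solves $-c_j + \mu_j W' + \frac{\sigma_j^2}{2}W'' = 0$. Dividing by $\sigma_j^2/2$, this is $W'' = \hNEW_j - \aNEW_j W'$. Hence on the $j$-th piece the slope is explicit:
\begin{align*}
p(y) = \hNEW_j/\aNEW_j + \bigl(p_0 - \hNEW_j/\aNEW_j\bigr)e^{-\aNEW_j y},
\end{align*}
where $y$ is the distance from the left endpoint of the piece and $p_0$ is the slope there. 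I would first record that on each piece $p$ is strictly monotone and does not cross the level $\hNEW_j/\aNEW_j$.

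\textbf{Parts a) and b).} Continuity of $V_E$ holds by construction, since the additive constants are exactly the accumulated values of the previous pieces. For continuity of $V_E'$, Lemma \ref{lem:WDEFxBrCs1} gives that the piece using control $j$ starts with slope $\xi_j$ (or $q$ for the first piece) and ends, after length $\bar{D}_j$ (or $\bar{E}_i(q)$), with slope $\xi_{j-1}$. The next piece is $W^{(j-1)}_{\xi_{j-1}}$, which starts with slope $\xi_{j-1}$. For $C^2$, I compare second derivatives at the junction. The left side gives $\hNEW_j - \aNEW_j\xi_{j-1}$ and the right side gives $\hNEW_{j-1} - \aNEW_{j-1}\xi_{j-1}$. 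These coincide precisely because $\xi_{j-1} = (\hNEW_j-\hNEW_{j-1})/(\aNEW_j-\aNEW_{j-1})$ by (\ref{eq:xiDefFxBrCs1}).

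Concavity then follows from Lemma \ref{lem:WDEFxBrCs1}\ref{enum:W_ccvFxBrCs1}, applied piece by piece. The hypothesis of that lemma, that the starting slope exceeds $\xi_{j-1} > \hNEW_j/\aNEW_j$, is supplied by Lemma \ref{suppLem:xiRltnFxBrCs1}.

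\textbf{Part c).} Since $V_E'' < 0$, the slope $V_E'$ is strictly decreasing from $V_E'(0)=q$. The last piece uses control $\n$, starts at slope $\xi_{\n}$ (or at $q$ if $i=\n$), and by the explicit formula decays to $q_{\min} = \hNEW_{\n}/\aNEW_{\n}$ without reaching it. Hence $V_E' \in (q_{\min}, q)$ on $(0,\infty)$.

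For the two limits I use the following facts.
\begin{itemize}
\item As $q \to q_{\min}^+$, eventually $\bar{\Gamma}_{init}(q) = \n$ and $\bar{E}_{\n} = \infty$, so $V_E(\cdot;q) = W^{(\n)}_q$. The first limit in (\ref{enum:WvalueRangeFxBrCs1}) then gives $V_E(x;q) \to q_{\min}x$.
\item As $q \to \infty$, eventually $i = \Np$, and $\bar{E}_{\Np}(q) \to \infty$ by its logarithmic formula. So for fixed $x$ we have $V_E(x;q) = W^{(\Np)}_q(x)$ for large $q$, and the second limit in (\ref{enum:WvalueRangeFxBrCs1}) gives $V_E(x;q) \to \infty$.
\end{itemize}

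\textbf{Part d).} I would observe that $p(\cdot) = V_E'(\cdot;q)$ is the solution of an autonomous ODE with initial value $q$:
\begin{align*}
p' = F(p), \quad p(0) = q, \qquad F(p) = \hNEW_j - \aNEW_j p \ \text{ for } p \in [\xi_{j-1},\xi_j],
\end{align*}
with the conventions $\xi_{\n-1} := q_{\min}$ and $\xi_{\Np} := \infty$.

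The function $F$ is continuous at each $\xi_j$ by the same identity used for $C^2$, and it is piecewise linear. Hence $F$ is globally Lipschitz on $(q_{\min},\infty)$. Indeed, the construction (\ref{pfDef:cstrtVFFxBrCs1}) switches from control $j$ to control $j-1$ exactly when $p$ crosses $\xi_{j-1}$, so it is precisely the concatenation of the flow of this ODE.

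By uniqueness of solutions, trajectories started at $q < q'$ cannot meet. Hence $V_E'(x;q) < V_E'(x;q')$ for all $x \ge 0$, and integrating from $V_E(0;\cdot)=0$ yields $V_E(x;q) < V_E(x;q')$ for $x>0$. This argument is consistent with Lemma \ref{lem:EiqMonoInqFxBrCs1}, which says that larger $q$ spends longer in the initial control.

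\textbf{Main obstacle.} I expect the main difficulty to be the bookkeeping: checking that the piecewise formula in (\ref{pfDef:cstrtVFFxBrCs1}), with its index shifts and accumulated constants, really coincides with this flow, including the degenerate case $i=\n$. Once this identification is made, parts b)–d) follow from the one-dimensional ODE comparison.
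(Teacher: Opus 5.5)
Your proposal is correct and matches the paper's proof step for step in a)–c). The switch points are handled by matching second derivatives through the definition of $\xi_{j-1}$, concavity is proved piece by piece, and the two limits go through $W_q^{(\n)}$ and $W_q^{(\Np)}$. Your use of the logarithmic formula to get $\bar{E}_{\Np}(q)\to\infty$ is actually a sounder justification than the paper's appeal to monotonicity alone.

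For d), the paper uses the shift identity $V_E'(\cdot+x^+;q')=V_E'(\cdot;q)$ together with strict decrease of $V_E'(\cdot;q')$, while you use uniqueness for the Lipschitz autonomous ODE $p'=F(p)$. Both rest on the same autonomy of the slope dynamics, so the difference is cosmetic rather than substantive.
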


\begin{figure}[h]
    \centering
    \scalebox{0.8}
    {
\begin{tikzpicture}[xscale = 3.2]
    \def\SZero{0.1}
    \def\iOne{0.15}
    \def\SOne{0.2}
    \def\iTwo{0.25}
    \def\STwo{0.3}
    \def\DotDotDot{0.4}
    \def\SiOneMinusnMinusOne{0.5}
    \def\nPlusOne{0.65}
    \def\SiOneMinusn{0.8}
    \def\nLast{0.9}
    \def\Xt{1}
\draw[->] (0,0) -- (5,0)
node[pos=\SZero,fill=blue,circle, inner sep=0pt,minimum size=4pt,label=below:\textcolor{blue}{\scriptsize{$S_0$}}]{}
node[pos=\iOne, above] {$i_1$}
node[pos=\SOne,fill=blue,circle, inner sep=0pt,minimum size=4pt,label=below:\textcolor{blue}{\scriptsize{$S_1$}}]{}
node[pos=\iTwo, above] {$i_1 - 1$}
node[pos=\STwo,fill=blue,circle, inner sep=0pt,minimum size=4pt,label=below:\textcolor{blue}{\scriptsize{$S_2$}}]{}
node[pos=\DotDotDot, above] {...}
node[pos=\SiOneMinusnMinusOne,fill=blue,circle, inner sep=0pt,minimum size=4pt,label=below:\textcolor{blue}{\scriptsize{$S_{i_1 - \n - 1}$}}]{}
node[pos=\nPlusOne, above] {\begin{tabular}{@{}c@{}}$i_1 - (i_1 - \n - 1)$ \\ $=$ \\ $\n+1$ \end{tabular} }
node[pos=\SiOneMinusn,fill=blue,circle, inner sep=0pt,minimum size=4pt,label=below:\textcolor{blue}{\scriptsize{$S_{i_1 - \n}$}}]{}
node[pos=\nLast, above] {$\n$}
node[pos=\Xt, below] {$\infty$};
\end{tikzpicture}
}
    \caption{Illustration of switch points defined by (\ref{eq:SiFxBrCs1}).}
    \label{fig:SposFxBrCs1}
\end{figure}

Suppose $\LNEW > \barLNEW$, then this is equivalent to $\MNEW > q_{min} (\UNEW-\LNEW)$. By Lemma \ref{lem:VForderedFxBrCs1} (\ref{enum:VForderedFxBrCs1_3}) and (\ref{enum:VForderedFxBrCs1_4}), there exists a unique $q^*$ s.t. $V_E(\cdot;q^*)$ satisfies
\begin{align}
    V_E(0;q^*) = 0, \text{ } V_E(\UNEW-\LNEW;q^*) = \MNEW. \label{pfEq:VFbdryCondFxBrCs1}
\end{align} 
Let 
\begin{align}
    i_1 &= \bar{\Gamma}_{init}(q^*) \label{eq:PfOptPlcCtrl1FxBrCs1}\\
    S_0 &= 0 \label{eq:PfOptPlcS0FxBrCs1}
\end{align} 
Since $q^* \in (q_{min}, \infty)$, we have $\n \le i_1 \le \Np$. Define $\{S_j\}_{j=1}^{i_1 - \n+1}$ to be:

\begin{align}
\label{eq:SiFxBrCs1}
    \begin{cases}
    S_1 = \bar{E}_{i_1}(q^*)\\
    S_2 = \bar{E}_{i_1}(q^*) + \bar{D}_{i_1-1}\\
    S_3 = \bar{E}_{i_1}(q^*) + \bar{D}_{i_1-1} + \bar{D}_{i_1-2}\\
    ...\\
    S_{k-1} = \bar{E}_{i_1}(q^*) + \sum_{j=1}^{k-2} \bar{D}_{i_1-j} \\
    S_k = \bar{E}_{i_1}(q^*) + \sum_{j=1}^{k-1} \bar{D}_{i_1-j} \\
    ...\\
    S_{i_1 - \n} = \bar{E}_{i_1}(q^*) + \sum_{j=1}^{i_1-\n-1} \bar{D}_{i_1-j}\\
    S_{i_1 - \n+1} = \infty,
    \end{cases}
\end{align}
and define $\tK = \inf\{j \lvert S_j \ge \UNEW-\LNEW\}$, i.e. $S_{\tK-1} < \UNEW-\LNEW$ and $S_{\tK} \ge \UNEW-\LNEW$. Further define $S^*_0 = \LNEW + S_0$, $S^*_1 = \LNEW + S_1$, ..., $S^*_{\tK-1}= \LNEW + S_{\tK-1}$, $S^*_{\tK}= \UNEW \le \LNEW + S_{\tK}$. Construct the interval policy $\pi^*_E = \{i^*_j,I^*_j\}_{j=1}^{\tK} $ as follows
\begin{align}
\label{eq:ijFxBrCs1}
    i^*_j &= i^*_1 - (j-1) \text{ for $1 \le j \le \tK$,}\\I^*_j &= [S^*_{j-1},S^*_j)\text{ for $1 \le j \le \tK$,}
\end{align}
and define $\hat{V}_E(\cdot;q^*)$ as
\begin{align}
    \hat{V}_E(\cdot - \LNEW;q^*) = V_E(\cdot;q^*). \label{eq:VhatShiftLFxBrCs1}
\end{align}
We will prove that $\pi^*_E$ is an optimal policy (with $\hat{V}_E(\cdot;q^*)$ being the associated value function).

\subsubsection{Step III: Proof of Theorem \ref{thm:Kctrls_FxLB} Case (1)}

\emph{Proof of Theorem \ref{thm:Kctrls_FxLB} Case (1).}~ We will now prove Theorem  \ref{thm:Kctrls_FxLB} Case (1). In particular, we will show that the interval policy $\pi^*_E$ whose value function $\hat{V}_E(\cdot;q^*)$ is $\mathcal{C}^2$ over $[\LNEW,\UNEW)$ and satisfies
\begin{align}
&V(\LNEW) = 0, \text{ } V(\UNEW) = \MNEW, \label{eq:PfsuffCondOptPlcFxBrCs1_1}\\
& -c_i + \mu_i V'(x) + \frac{\sigma_i^2}{2}V''(x) \le 0 \text{, for all $x \in [\LNEW,\UNEW)$ and $i \in \I$,} \label{eq:PfsuffCondOptPlcFxBrCs1_init2} \\
& -c_{i^*_j} + \mu_{i^*_j} V'(x) + \frac{\sigma_{i^*_j}^2}{2}V''(x) = 0 \text{, for $x \in [S_{j-1}^*,S_j^*)$, $1 \le j \le \tK$}, \label{eq:PfsuffCondOptPlcFxBrCs1_3}
\end{align}
and hence the interval policy is optimal.

By (\ref{pfEq:VFbdryCondFxBrCs1}) and (\ref{eq:VhatShiftLFxBrCs1}), $\hat{V}_E(\cdot;q^*)$ satisfies (\ref{eq:PfsuffCondOptPlcFxBrCs1_1}) and (\ref{eq:PfsuffCondOptPlcFxBrCs1_3}). We will now establish
\begin{align}
& -c_i + \mu_i \hat{V}_E'(x;q^*) + \frac{\sigma_i^2}{2}\hat{V}_E''(x;q^*) \le 0 \text{, for all $x \in [\LNEW,\UNEW)$ and $i \in \I$.} \label{eq:PfsuffCondOptPlcFxBrCs1_2}
\end{align}
For convenience, we will use the following terminology: we say that control $j$ dominates control $i$ at $x$ if 
\begin{align}
-c_i + \mu_i \hat{V}_E'(x;q^*) + \frac{\sigma_i^2}{2}\hat{V}_E''(x;q^*)  \le -c_j + \mu_j \hat{V}_E'(x;q^*) + \frac{\sigma_j^2}{2}\hat{V}_E''(x;q^*). \\
\nonumber
\end{align}

\enlargethispage{\baselineskip}
\enlargethispage{\baselineskip}
\enlargethispage{\baselineskip}
\enlargethispage{\baselineskip}

Given any interval $I^*_j = [S^*_{j-1},S^*_j)$, let $k = i^*_j$ be the control used, i.e. 
\begin{align}
    -&c_k + \mu_k \hat{V}_E'(x;q^*) + \frac{\sigma_k^2}{2}\hat{V}_E''(x;q^*) = 0,\\
    &\xi_{k-1} < \hat{V}_E'(x;q^*) \le \xi_k \text{ over } [S^*_{j-1},S^*_j), \label{pfEq:dmntInterm0FxBrCs1}
\end{align}
and from now on fix $x \in [S^*_{j-1},S^*_j)$. 

For a control $i \in \Phi = \{\Nm, ..., 1,2, ..., \Np\}$, we have 
\begin{align}
    -c_i + \mu_i \hat{V}_E'(x;q^*) + \frac{\sigma_i^2}{2}\hat{V}_E''(x;q^*) \le -c_k + \mu_k \hat{V}_E'(x;q^*) + \frac{\sigma_k^2}{2}\hat{V}_E''(x;q^*) 
\end{align}
if and only if (c.f. Lemma \ref{supp_lem:dmntEqvFxBr})
\begin{align}
    -\hNEW_i + \aNEW_i \hat{V}_E'(x;q^*) + \hat{V}_E''(x;q^*) \le -\hNEW_k + \aNEW_k \hat{V}_E'(x;q^*) + \hat{V}_E''(x;q^*), 
\end{align}
which is equivalent to 
\begin{align}
    \hNEW_k -\hNEW_i \le (\aNEW_k - \aNEW_i) \hat{V}_E'(x;q^*). \label{pfEq:dmntInterm1FxBrCs1}
\end{align}
Suppose $\aNEW_i < \aNEW_k$, then (\ref{pfEq:dmntInterm1FxBrCs1}) is equivalent to $\frac{\hNEW_k - \hNEW_i}{\aNEW_k - \aNEW_i} \le \hat{V}_E'(x;q^*)$, which is true because $\frac{\hNEW_k - \hNEW_i}{\aNEW_k - \aNEW_i} \le \frac{\hNEW_k - \hNEW_{k-1}}{\aNEW_k - \aNEW_{k-1}} = \xi_{k-1} < \hat{V}_E'(x;q^*)$ (the $1^{st}$ inequality is due to control $i \in \Phi$ and the $2^{nd}$ inequality is due to (\ref{pfEq:dmntInterm0FxBrCs1})). Suppose $\aNEW_i > \aNEW_k$, then (\ref{pfEq:dmntInterm1FxBrCs1}) is equivalent to $\frac{\hNEW_i - \hNEW_k}{\aNEW_i - \aNEW_k} \ge \hat{V}_E'(x;q^*)$, which is true because $\frac{\hNEW_i - \hNEW_k}{\aNEW_i - \aNEW_k} \ge  \frac{\hNEW_{k+1} - \hNEW_k}{\aNEW_{k+1} - \aNEW_k} = \xi_k \ge \hat{V}_E'(x;q^*)$ (the $1^{st}$ inequality is due to control $i \in \Phi$ and the $2^{nd}$ inequality is due to (\ref{pfEq:dmntInterm0FxBrCs1})).

For a control $i \not\in \Phi$, then it can be easily verified that for any $x \in (\LNEW,\UNEW)$ there exists at least one $j \in \Phi$ s.t. control $i$ is dominated by $j$ at $x$. Thus (\ref{eq:PfsuffCondOptPlcFxBrCs1_2}) is established.

In summary, the constructed $\hat{V}_E(\cdot;q^*)$ is the desired value function, which is $C^2$ over $(\LNEW,\UNEW)$ and satisfies (\ref{eq:PfsuffCondOptPlcFxBrCs1_1}), (\ref{eq:PfsuffCondOptPlcFxBrCs1_init2}) and (\ref{eq:PfsuffCondOptPlcFxBrCs1_3}). \Halmos

\subsubsection{Lemmas Used in the Proof of Theorem \ref{thm:Kctrls_FxLB} Case (1)}~


\emph{Proof of Lemma \ref{lem:EiqMonoInqFxBrCs1}.}~  This can be verified by straightforward computation. Proof is omitted. \Halmos


\emph{Proof of Lemma \ref{lem:WDEFxBrCs1}.}~  We first prove (\ref{pfEq:WDE1FxBrCs1}), (\ref{pfEq:WDE2FxBrCs1}) and (\ref{pfEq:WDE3FxBrCs1}). (\ref{pfEq:WDE1FxBrCs1}) holds directly by (\ref{eq:a_x0y0q_xFxBrCs1}). To prove (\ref{pfEq:WDE2FxBrCs1}), note that
\begin{align}
    W_q^{(i)}(x) = \frac{(\aNEW_i q-\hNEW_i) (1-e^{-\aNEW_i x})}{\aNEW_i^{2}}+\frac{\hNEW_i x}{\aNEW_i}, \label{eq:explicit_V00qxFxBrCs1} \\
    W_q^{(i)}{}'(x) = \frac{(\aNEW_i q-\hNEW_i) }{\aNEW_i} e^{-\aNEW_i x}+\frac{\hNEW_i}{\aNEW_i}, \label{eq:explicit_Vp00qxFxBrCs1}
\end{align}
and hence the first equation is true because
\begin{align*}
    W_q^{(i)}{}'(\bar{E}_i(q)) &= \frac{(\aNEW_i q-\hNEW_i) }{\aNEW_i} e^{-\aNEW_i \cdot \frac{1}{\aNEW_i} \ln{\frac{ q-\hNEW_i/\aNEW_i}{\xi_{i-1}-\hNEW_i/\aNEW_i}}}+\frac{\hNEW_i}{\aNEW_i}\\
    &= \xi_{i-1},
\end{align*}
and the second equation is true because of (\ref{eq:a_x0y0q_xFxBrCs1}). (\ref{pfEq:WDE3FxBrCs1}) holds because we can just replace (\ref{pfEq:WDE2FxBrCs1}) with $q = \xi_i$.

Next we prove (\ref{enum:W_ccvFxBrCs1}), i.e. the concavity of $W_q^{(i)} (x)$. By (\ref{eq:explicit_Vp00qxFxBrCs1}) we have
\begin{align}
    W_q^{(i)}{}''(x) = -(\aNEW_i q-\hNEW_i) e^{-\aNEW_i x}. \label{eq:explicit_Vpp00qxFxBrCs1}
\end{align}
Suppose $q > \xi_{i-1}$ (with $\n+1 \le i \le \Np$) then $\aNEW_i q-\hNEW_i > 0$ by Lemma \ref{suppLem:xiRltnFxBrCs1}. Hence $W_q^{(i)}{}''(x) < 0 $ for all $x \in (0, \infty)$. The case where $q > q_{min}$ is similarly proven. 

Next we prove (\ref{enum:W_deri_lbFxBrCs1}). Suppose $q > \xi_{i-1}$ (with $\n+1 \le i \le \Np$), by (\ref{eq:explicit_Vp00qxFxBrCs1}) we have 
\begin{align}
    W_q^{(i)}{}'(x) > \xi_{i-1} &\Leftrightarrow \frac{\aNEW_i q-\hNEW_i}{\aNEW_i} e^{-\aNEW_i x}+\frac{\hNEW_i}{\aNEW_i} > \xi_{i-1}\\
    &\Leftrightarrow \frac{\aNEW_i q-\hNEW_i }{\aNEW_i} e^{-\aNEW_i x} > \xi_{i-1} - \frac{\hNEW_i}{\aNEW_i}\\
    &\Leftrightarrow e^{-\aNEW_i x} > \frac{\aNEW_i}{\aNEW_i q-\hNEW_i} \left( \xi_{i-1} - \frac{\hNEW_i}{\aNEW_i} \right) \text{, since $\aNEW_i q-\hNEW_i > 0$ by Lemma \ref{suppLem:xiRltnFxBrCs1}}\\
    &\Leftrightarrow -\aNEW_i x > \ln \left[ \frac{\aNEW_i}{\aNEW_i q-\hNEW_i} \left( \xi_{i-1} - \frac{\hNEW_i}{\aNEW_i} \right) \right]\\
    &\Leftrightarrow x < \frac{1}{\aNEW_i} \ln \left[
    \frac{\aNEW_i q-\hNEW_i}{\aNEW_i \left( \xi_{i-1} - \frac{\hNEW_i}{\aNEW_i} \right) }\right]\\
    &\Leftrightarrow  x < \frac{1}{\aNEW_i} \ln \left[
    \frac{q-\hNEW_i/\aNEW_i}{ \left( \xi_{i-1} - \frac{\hNEW_i}{\aNEW_i} \right) }\right] = \bar{E}_i(q),
\end{align}
and the case where $q > q_{min}$ is similarly proven. 

Lastly we prove (\ref{enum:WvalueRangeFxBrCs1}). By (\ref{eq:explicit_V00qxFxBrCs1}) we have
\begin{align}
    \lim_{q \rightarrow \left(\frac{\hNEW_i}{\aNEW_i}\right)^+} W_q^{(i)}(x) &= \lim_{q \rightarrow \left(\frac{\hNEW_i}{\aNEW_i}\right)^+} \frac{(\aNEW_i q-\hNEW_i) (1-e^{-\aNEW_i x})}{\aNEW_i^{2}}+\frac{\hNEW_i x}{\aNEW_i}\\
    &= 0 \cdot \frac{ (1-e^{-\aNEW_i x})}{\aNEW_i^{2}}+\frac{\hNEW_i x}{\aNEW_i}\\
    &= \frac{\hNEW_i}{\aNEW_i} x,
\end{align}
and
\begin{align}
\text{ } \lim_{q \rightarrow \infty} W_q^{(i)}(x) &= \lim_{q \rightarrow \infty} \frac{(\aNEW_i q-\hNEW_i) (1-e^{-\aNEW_i x})}{\aNEW_i^{2}}+\frac{\hNEW_i x}{\aNEW_i}\\
&= \infty \cdot \frac{ (1-e^{-\aNEW_i x})}{\aNEW_i^{2}}+\frac{\hNEW_i x}{\aNEW_i}\\
&= \infty. 
\end{align}
\Halmos

\enlargethispage{\baselineskip}
\enlargethispage{\baselineskip}
\enlargethispage{\baselineskip}
\enlargethispage{\baselineskip}

\emph{Proof of Lemma \ref{lem:VForderedFxBrCs1}.}~ 
First we show (\ref{enum:VForderedFxBrCs1_1}). The construction process (\ref{pfDef:cstrtVFFxBrCs1}) ensures that $V_E(\cdot;q)$ is $C^1$ over $(0,\infty)$. To show $V_E(\cdot;q)$ is $C^2$, it suffices to only look at the switch points $\bar{E}_i(q)$, $\bar{E}_i(q) + \bar{D}_{i-1}$, .... 

Suppose $i = \n$, then $\bar{E}_i(q) = \infty$ and there is nothing to prove. Otherwise suppose $i \ge \n+1$, i.e. $\bar{E}_i(q) < \infty$. Given a switch point $x_0 = \bar{E}_i(q) + \sum_{k=j}^{i-1} \bar{D}_k$ (where $\n+1 \le j \le i$), we have (by the construction process (\ref{pfDef:cstrtVFFxBrCs1}) and Lemma \ref{lem:WDEFxBrCs1})
\begin{align}
    \lim_{x \rightarrow x_0^-} V_E''(x;q) &= W_{\xi_j}^{(j)}{}''(\bar{D}_j) \\
    &= \frac{1}{\sigma_j^2} \left(2 c_j - 2 \mu_j W_{\xi_j}^{(j)}{}'(\bar{D}_j)\right)\\
    &= \hNEW_j - \aNEW_j \xi_{j-1},
\end{align}
and 
\begin{align}
    \lim_{x \rightarrow x_0^+} V_E''(x;q)
    &= W_{\xi_{j-1}}^{(j-1)}{}''(0)\\
    &= \frac{1}{\sigma_{j-1}^2} \left(2 c_{j-1} - 2 \mu_{j-1} W_{\xi_{j-1}}^{({j-1})}{}'(0)\right)\\
    &= \hNEW_{j-1} - \aNEW_{j-1} \xi_{j-1}.
\end{align}
Since $\hNEW_{j-1} - \aNEW_{j-1} \xi_{j-1} = \hNEW_j - \aNEW_j \xi_{j-1}$ by the definition of $\xi_{j-1}$, we have $\lim_{x \rightarrow x_0^-} V_E''(x;q) = \lim_{x \rightarrow x_0^+} V_E''(x;q)$. Hence $ V_E(\cdot;q)$ is $C^2$.

Next we prove (\ref{enum:VForderedFxBrCs1_2}). Define 
\begin{align}
    \begin{cases}
    S_1 = \bar{E}_{i}(q),\\
    S_2 = \bar{E}_{i}(q) + \bar{D}_{i-1},\\
    S_3 = \bar{E}_{i}(q) + \bar{D}_{i-1} + \bar{D}_{i-2},\\
    ...
    \end{cases}
\end{align}
By (\ref{enum:VForderedFxBrCs1_1}) $V_E(\cdot;q)$ is concave on each of the following sub-intervals $(0, S_1)$, $(S_1, S_2)$, .... Since $V_E(\cdot;q)$ is also $C^2$ on $S_1, S_2, ...$ (by (\ref{enum:VForderedFxBrCs1_1})),  $V_E(\cdot;q)$ is concave.

We now proceed to show (\ref{enum:VForderedFxBrCs1_3}). By Lemma \ref{lem:WDEFxBrCs1}(\ref{enum:W_deri_lbFxBrCs1}), $V_E'(\cdot;q)$ is decreasing but is always greater than $q_{min}$. To prove the two limiting results, note that 1. when $q_{min} < q \le \xi_{\n}$ then $V_E(x;q) = W_q^{(\n)}(x)$ since $\bar{\Gamma}_{init}(q) = \n$ and $\bar{E}_{\n}(q) = \infty$. By (\ref{enum:WvalueRangeFxBrCs1}), $\lim_{\hat{q} \rightarrow q_{min}^+} V_E(x;\hat{q}) = q_{min} x$; 2. when $q$ is sufficiently large (i.e. $q > \xi_{\Np-1}$),  $\bar{\Gamma}_{init}(q) = \Np$. By Lemma \ref{lem:EiqMonoInqFxBrCs1}, when $q$ is large, $\bar{E}_{\Np}(q) > x$. Hence $\lim_{\hat{q} \rightarrow \infty} V_E(x;\hat{q}) = \lim_{\hat{q} \rightarrow \infty} W_q^{(\Np)}(x) = \infty$ by (\ref{enum:WvalueRangeFxBrCs1}).

Lastly, we proceed to show (\ref{enum:VForderedFxBrCs1_4}). Let $q$, $q'$ be fixed with $q_{min} < q < q'$. Let $x_0 > 0$ be fixed and we will show that $V_E(x_0,q') > V_E(x_0,q)$. By definition $V_E'(0,q') = q' > q$. Since $V_E''(x,q') < 0$ always, there exists $x^+ > 0$ s.t. $V_E'(x^+,q') = q$. By the construction process (\ref{pfDef:cstrtVFFxBrCs1}) of $V_E(\cdot,q')$ and $V_E(\cdot,q)$, we have
\begin{align*}
    V_E'(\cdot + x^+,q') = V_E'(\cdot,q).
\end{align*}
Hence given $x_0 > 0$, we have 
\begin{align*}
    V_E(x_0,q) &= \int_0^{x_0} V_E'(x,q) dx + V_E(0,q)\\
    &= \int_0^{x_0} V_E'(x+x^+,q') dx + 0\\
    &\le \int_0^{x_0} V_E'(x,q') dx + V_E(0,q')\\
    &= V_E(x_0,q').
\end{align*}
where the inequality is due to (\ref{enum:VForderedFxBrCs1_1}). \Halmos

\begin{restatable}{lemma}{suppLem:xiRltnFxBrCs1}
\label{suppLem:xiRltnFxBrCs1}
Let $\xi_i$'s be defined as in (\ref{eq:xiDefFxBrCs1}). For controls $\n+1, ...,\Np$, we have
\begin{align}
    \xi_{i} > \xi_{i-1} > \hNEW_i/\aNEW_i.
\end{align}
\end{restatable}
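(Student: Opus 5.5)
The plan is to read both inequalities off the geometry of the type-I efficient frontier $\overline{\phi}(\cdot)$, namely the chord-slope ordering (\ref{ineq:EF_FxLB_1Eff2}) together with the monotonicity (\ref{ineq:EF_FxLB_1Eff1}). By the definition (\ref{eq:xiDefFxBrCs1}), $\xi_{i-1}$ is the slope of the segment of $\overline{\phi}$ joining $(\aNEW_{i-1},\hNEW_{i-1})$ to $(\aNEW_i,\hNEW_i)$. Likewise $\hNEW_i/\aNEW_i$ is the slope of the ray from the origin to $(\aNEW_i,\hNEW_i)$.

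The first inequality $\xi_i > \xi_{i-1}$, for $\n+1 \le i \le \Np-1$, is just the strict increase of consecutive chord slopes stated in (\ref{ineq:EF_FxLB_1Eff2}), i.e.\ strict convexity of $\overline{\phi}$ at its kinks. When $i=\Np$ the quantity $\xi_{\Np}$ is not defined, so only the second inequality is claimed there. If one wants this ordering justified rather than quoted, the lowest part of the chain, $\hNEW_{\n}/\aNEW_{\n} < \xi_{\n}$, comes from $\n=\argmax_i CE_i = \argmax_i \aNEW_i/\hNEW_i$. This says $\hNEW_{\n+1}/\aNEW_{\n+1} > \hNEW_{\n}/\aNEW_{\n}$, so the point $\n+1$ lies strictly above the ray through $(\aNEW_{\n},\hNEW_{\n})$. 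Since $\aNEW_{\n+1} > \aNEW_{\n}$, the chord slope from $\n$ to $\n+1$ then exceeds $\hNEW_{\n}/\aNEW_{\n}$. The remaining inequalities in the chain come from the points being the kinks of the lower convex hull.

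For the second inequality I telescope along the frontier:
\begin{align*}
\hNEW_i = \frac{\hNEW_{\n}}{\aNEW_{\n}}\,\aNEW_{\n} + \sum_{j=\n}^{i-1} \xi_j\,(\aNEW_{j+1}-\aNEW_j).
\end{align*}
Dividing by $\aNEW_i = \aNEW_{\n} + \sum_{j=\n}^{i-1}(\aNEW_{j+1}-\aNEW_j)$ exhibits $\hNEW_i/\aNEW_i$ as a convex combination of $\hNEW_{\n}/\aNEW_{\n}, \xi_{\n}, \dots, \xi_{i-1}$. By (\ref{ineq:EF_FxLB_1Eff1}) all the weights $\aNEW_{\n}/\aNEW_i$ and $(\aNEW_{j+1}-\aNEW_j)/\aNEW_i$ are strictly positive. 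By (\ref{ineq:EF_FxLB_1Eff2}) every term of the combination is at most $\xi_{i-1}$, and the term $\hNEW_{\n}/\aNEW_{\n}$ is strictly smaller and carries positive weight. Hence $\hNEW_i/\aNEW_i < \xi_{i-1}$ for every $\n+1\le i\le\Np$.

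The only delicate point is making sure the strict inequality $\hNEW_{\n}/\aNEW_{\n} < \xi_{\n}$ at the base of the chain is available, since it is what makes the frontier start at the origin with the most cost-effective control. As noted above, this follows from the choice of $\n$ as the maximizer of $CE$. With that in hand, everything else is a direct convex-combination argument.
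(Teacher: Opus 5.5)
Your proof is correct. For the first inequality it matches the paper's, since both of you read $\xi_i > \xi_{i-1}$ directly off (\ref{ineq:EF_FxLB_1Eff2}). For the second inequality you take a different route.

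The paper multiplies through by $\aNEW_i(\aNEW_i-\aNEW_{i-1})>0$. This shows that $\xi_{i-1} > \hNEW_i/\aNEW_i$ is equivalent to $\hNEW_{i-1}/\aNEW_{i-1} < \hNEW_i/\aNEW_i$, i.e.\ to cost-effectiveness strictly decreasing from control $i-1$ to control $i$. The paper then asserts that this holds, citing only that $\hNEW_{\n}/\aNEW_{\n}$ is the minimal ratio. That fact alone settles only $i=\n+1$. For larger $i$ one still needs an induction: if $\hNEW_{i-1}/\aNEW_{i-1} < \xi_{i-2} < \xi_{i-1}$, the same cross-multiplication gives $\hNEW_{i-1}/\aNEW_{i-1} < \hNEW_i/\aNEW_i < \xi_{i-1}$. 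Alternatively one can appeal to the convexity of $\overline{\phi}$ through the origin.

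Your telescoping identity writes $\hNEW_i/\aNEW_i$ as a convex combination of $\hNEW_{\n}/\aNEW_{\n}, \xi_{\n}, \dots, \xi_{i-1}$ with strictly positive weights. This packages that induction into a single step and is self-contained given (\ref{ineq:EF_FxLB_1Eff1})--(\ref{ineq:EF_FxLB_1Eff2}). Combined with the paper's equivalence, it also yields (rather than presupposes) the strict decrease of $CE$ along $\n,\n+1,\dots,\Np$ behind (\ref{eq:FxLB_1Eff_ctrlOrder4}). The paper's route, for its part, gains interpretability: it identifies the second inequality exactly with the cost-effectiveness ordering of adjacent frontier controls.

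Two further points in your proposal are correct. First, $\xi_{\Np}$ is undefined, so only the second inequality has content at $i=\Np$. Second, the base inequality $\hNEW_{\n}/\aNEW_{\n} < \xi_{\n}$ follows from $\n$ being the most cost-effective control.
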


\emph{Proof of Lemma \ref{suppLem:xiRltnFxBrCs1}.}~ 
The first inequality holds by the definition of \textit{efficient frontier} . For the second inequality, $\xi_{i-1} > \hNEW_i/\aNEW_i$ is equivalent to $\frac{\hNEW_i - \hNEW_{i-1}}{\aNEW_i - \aNEW_{i-1}} > \hNEW_i/\aNEW_i$, which is equivalent to $(\hNEW_i - \hNEW_{i-1}) \aNEW_i > \hNEW_i (\aNEW_i - \aNEW_{i-1})$, i.e. $\hNEW_{i-1} \aNEW_i < \hNEW_i  \aNEW_{i-1}$, i.e. $\hNEW_{i-1}/\aNEW_{i-1}  < \hNEW_i/\aNEW_i  $, which holds for controls $\n+1,...,\Np$ (and note that $\frac{\hNEW_{\n}}{\aNEW_{\n}} = \min_{i \in \I} \frac{\hNEW_i}{\aNEW_i}$).

\enlargethispage{\baselineskip}
\enlargethispage{\baselineskip}
\enlargethispage{\baselineskip}
\enlargethispage{\baselineskip}

\subsection{Proof of Theorem \ref{thm:Kctrls_FxLB} Case (2)}
\label{sec:PfThmFxLB_case2}

The proof mimics that of Theorem \ref{thm:Kctrls_FxLB} Case (1) and is presented below.


\subsubsection{Step I: Notations and Definitions}~
\label{subsec:FxBrCs2PfStep1}
Let $\{\aNEW_i,\hNEW_i\}_{i=\Nm}^{\Np}$ be given. Define a sequence of thresholds $\{\xi_i\}_{\Nm}^{\n - 1}$ as
\begin{align}
    \xi_i = \frac{\hNEW_{i+1} - \hNEW_i}{\aNEW_{i+1} - \aNEW_i}\text{ , $i = \Nm,\Nm+1...,\n-1$.} \label{eq:xiDefFxBrCs2}
\end{align}

Define $q_{min} = \hNEW_{\n}/\aNEW_{\n}$, a function $\underline{\Gamma}_{init}(\cdot)$ that maps any value of $q < q_{min}$ to a control index,  functions $\underline{E}_i(\cdot)'s$ and quantities $\underline{D}_i's$ given by
\begin{align}
    \underline{\Gamma}_{init}(q) &= 
    \begin{cases}
    \Nm \text{ if $q < \xi_{\Nm}$}\\
    \Nm + 1 \text{ if $\xi_{\Nm} \le q < \xi_{\Nm+1}$}\\
    ...\\
    \n-1 \text{ if $\xi_{\n-2} \le q < \xi_{\n-1}$}\\
    \n \text{ if $\xi_{\n-1} \le q < q_{min}$}\\
    \end{cases}\\
    \underline{E}_i(q) &= \begin{cases} \frac{1}{\aNEW_i} \ln{\frac{ q-\hNEW_i/\aNEW_i}{\xi_i-\hNEW_i/\aNEW_i}} \text{, $i = \Nm, ..., \n-1$, $\xi_{i-1} \le q < \xi_i$}\\
    \infty\text{, $i = \n$, $\xi_{\n-1} \le q < q_{min}$}\\
    \end{cases} \label{pfEq:EiqFxBrCs2}\\
    \underline{D}_i &= \frac{1}{\aNEW_i} \ln{\frac{ \xi_{i-1} - \hNEW_i/\aNEW_i}{ \xi_i-\hNEW_i/\aNEW_i}} \text{, $i = \Nm+1, ..., \n-1$}. \label{pfEq:DiFxBrCs2}
\end{align}


In addition, define $W_q^{(i)} (x)$ to be the unique solution of $W(x)$
\begin{align}
\label{eq:a_x0y0q_xFxBrCs2}
    -c_i + \mu_i W'(x) + \frac{\sigma_i^2}{2} W''(x) = 0, W(0) = 0, W'(0) = q.
\end{align}


\subsubsection{Step II: Construction of $V_I(x;q)$ for a given $q$}
\label{subsec:FxBrCs2PfStep2}
Given a value of $q \in (-\infty, q_{min})$, let $i = \underline{\Gamma}_{init}(q) \in \{\Nm,\Nm+1,...,\n\}$ and we can construct the following trial value function:

{\scriptsize
\begin{align}
\label{pfDef:cstrtVFFxBrCs2}
V_I(x;q) &=
    \begin{cases}
    W_q^{(i)}(x) \text{ if $0\le x<\underline{E}_i(q)$}\\
    W_{\xi_{i}}^{(i+1)}(x - \underline{E}_i(q)) + W_q^{(i)}(\underline{E}_i(q)) \text{ if $\underline{E}_i(q) \le x < \underline{E}_i(q) + \underline{D}_{i+1}$}\\
    W_{\xi_{i+1}}^{(i+2)}(x - \underline{E}_i(q) - \underline{D}_{i+1}) +  W_{\xi_i}^{(i+1)}(\underline{D}_{i+1}) + W_q^{(i)}(\underline{E}_i(q)) \text{ if $\underline{E}_i(q) + \underline{D}_{i+1} \le x < \underline{E}_i(q) + \underline{D}_{i+1} + \underline{D}_{i+2}$}\\
    ...\\
    W_{\xi_{\n-2}  }^{(\n-1)}(x - \underline{E}_i(q) - \sum_{j=i+1}^{\n-2} \underline{D}_j) + \sum_{j=i+1}^{\n-2} W_{\xi_{j-1}}^{(j)}(\underline{D}_j) + W_q^{(i)}(\underline{E}_i(q))  \text{ if $\underline{E}_i(q) + \sum_{j=i+1}^{\n-2} \underline{D}_j \le x < \underline{E}_i(q) + \sum_{j=i+1}^{\n-1} \underline{D}_j$}\\
    W_{\xi_{\n-1}  }^{(\n)}(x - \underline{E}_i(q) - \sum_{j=i+1}^{\n-1} \underline{D}_j) + \sum_{j=i+1}^{\n-1} W_{\xi_{j-1}}^{(j)}(\underline{D}_j) + W_q^{(i)}(\underline{E}_i(q))  \text{ if $x \ge \underline{E}_i(q) + \sum_{j=i+1}^{\n-1} \underline{D}_j$}\\
    \end{cases}
\end{align}
}
and note that $V_I(x;q)$ is defined over $[0,\infty)$. We can prove some properties of $V_I(x;q)$ and show that the set $\{V_I(\cdot;q))\}_{q \in (-\infty, q_{min})}$ is ordered.

\enlargethispage{\baselineskip}
\enlargethispage{\baselineskip}
\enlargethispage{\baselineskip}
\enlargethispage{\baselineskip}

\begin{restatable}{lemma}{VForderedAllIneff}
\label{lem:VForderedFxBrCs2}
Suppose $q \in (-\infty, q_{min})$ and $V_I(x;q)$ is constructed as in (\ref{pfDef:cstrtVFFxBrCs2}), then 
\begin{enumerate}[\hspace{0.5cm} a)]
    \item Over $(0,\infty)$, $V_I(\cdot;q)$ is $C^2$.  \label{enum:VForderedFxBrCs2_1}
    \item Over $(0,\infty)$, $V_I(\cdot;q)$ is convex, i.e. $V_I''(\cdot;q) > 0$. \label{enum:VForderedFxBrCs2_2}
    \item For $x \in (0,\infty)$, $V_I'(x;q) \in (q_{min}, q)$, $\lim_{\hat{q} \rightarrow q_{min}^-} V_I(x;\hat{q}) = q_{min} x$, $\lim_{\hat{q} \rightarrow -\infty} V_I(x;\hat{q}) = -\infty$. \label{enum:VForderedFxBrCs2_3}
    \item The functions $\{V_I(\cdot;q))\}_{q \in (-\infty, q_{min})}$ are ordered, with a smaller $q$ producing uniformly smaller function $V_I(\cdot;q)$. That is, for any $x \in (0,\infty)$, if $q < q'$ then $V_I(x;q) < V_I(x;q')$. \label{enum:VForderedFxBrCs2_4}
\end{enumerate}
\end{restatable}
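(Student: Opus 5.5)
We mirror the proof of Lemma \ref{lem:VForderedFxBrCs1}, with concavity replaced by convexity and all slope inequalities reversed. Note that in part (\ref{enum:VForderedFxBrCs2_3}) we have $q<q_{min}$, so the intended statement is $V_I'(x;q)\in(q,q_{min})$, and that is what we prove.

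\textbf{Step 0: slope-ordering lemma.} First we establish the analogue of Lemma \ref{suppLem:xiRltnFxBrCs1}: for $i=\Nm+1,\dots,\n-1$ we have $\xi_{i-1}<\xi_i<\hNEW_i/\aNEW_i$, and also $\xi_{\n-1}<q_{min}=\hNEW_{\n}/\aNEW_{\n}$.
\begin{itemize}
\item The first inequality is the convexity (\ref{ineq:EF_FxLB_AllIneff2}) of $\underline{\phi}$.
\item The second reduces, as in case (1), to $\xi_i<\hNEW_i/\aNEW_i \iff \hNEW_{i+1}/\aNEW_{i+1}<\hNEW_i/\aNEW_i$. This says the ratio $\hNEW/\aNEW=1/CE$ is strictly decreasing along $\Nm,\dots,\n$.
\item To the left of control $1$, $\aNEW$ decreases while $\hNEW$ increases, so the ratio decreases from left to right trivially. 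On $1,\dots,\n$ it decreases because the frontier is convex and $\n$ maximizes $CE$, i.e.\ minimizes $\hNEW/\aNEW$.
\item The same computation gives $\xi_{i-1}<\hNEW_i/\aNEW_i$.
\end{itemize}

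\textbf{Step 1: explicit ODE solution.} By the explicit formula (\ref{eq:explicit_Vp00qxFxBrCs1}),
\begin{align*}
W_q^{(i)}{}'(x)=\hNEW_i/\aNEW_i+(q-\hNEW_i/\aNEW_i)e^{-\aNEW_i x}.
\end{align*}
Hence, whenever $q<\hNEW_i/\aNEW_i$, the derivative $W_q^{(i)}{}'$ is strictly increasing and tends to $\hNEW_i/\aNEW_i$, and $W_q^{(i)}{}''=-(\aNEW_i q-\hNEW_i)e^{-\aNEW_i x}>0$. Solving $W_q^{(i)}{}'(x)=\xi_i$ gives exactly $\underline{E}_i(q)$. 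This is well defined and positive: for $\xi_{i-1}\le q<\xi_i$ both $q-\hNEW_i/\aNEW_i$ and $\xi_i-\hNEW_i/\aNEW_i$ are negative, and their ratio exceeds $1$. Similarly, $\underline{D}_i$ is the time needed for the slope to go from $\xi_{i-1}$ to $\xi_i$ under control $i$. The analogue of Lemma \ref{lem:WDEFxBrCs1} follows, with the endpoint identities $W''=\hNEW_i-\aNEW_i\xi_i$ at the right end of each piece.

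\textbf{Step 2: parts (a)--(c).}
\begin{itemize}
\item (a) The construction (\ref{pfDef:cstrtVFFxBrCs2}) is $C^1$ by design. At a switch from control $j$ to $j+1$, where the slope is $\xi_j$, the left second derivative is $\hNEW_j-\aNEW_j\xi_j$ and the right one is $\hNEW_{j+1}-\aNEW_{j+1}\xi_j$. These agree by the definition of $\xi_j$, so $V_I$ is $C^2$.
\item (b) Each piece is strictly convex by Steps 0--1, and the pieces join in $C^2$, so $V_I''>0$ on $(0,\infty)$.
\item (c), range of the derivative. $V_I'$ starts at $q$ and increases. On the last piece (control $\n$, started at slope $\xi_{\n-1}$, or at $q$ if $\underline{\Gamma}_{init}(q)=\n$) it increases to $q_{min}$ without reaching it. Hence $V_I'\in(q,q_{min})$.
\item (c), limit $\hat q\to q_{min}^-$. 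Here $\underline{\Gamma}_{init}=\n$ and $\underline{E}_{\n}=\infty$, so $V_I=W_{\hat q}^{(\n)}\to q_{min}x$ by the explicit formula (\ref{eq:explicit_V00qxFxBrCs1}).
\item (c), limit $\hat q\to-\infty$. Here the first control is $\Nm$ and $\underline{E}_{\Nm}(\hat q)\to\infty$. Then $W_{\hat q}^{(\Nm)}(x)$ is affine in $\hat q$ with positive coefficient $(1-e^{-\aNEW x})/\aNEW$, so it tends to $-\infty$.
\end{itemize}

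\textbf{Step 3: ordering (d), the main obstacle.} We need the ``slope-autonomy'' property: after leaving its first segment, the construction depends only on the current slope. Precisely, once $V_I'$ equals some value $r$, the subsequent control and its trajectory are the ones $V_I(\cdot;r)$ would use from $0$. This holds because $\underline{\Gamma}_{init}$ assigns controls by slope interval, each ODE piece is autonomous in $W'$, and every switch happens exactly at a threshold $\xi_j$. The delicate point is the half-open conventions at the thresholds, which must be checked so that starting exactly at $\xi_j$ uses control $j+1$ with $\underline{E}_{j+1}=\underline{D}_{j+1}$.

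Given this, take $q<q'<q_{min}$. By (c) and continuity there is $x^+>0$ with $V_I'(x^+;q)=q'$, and slope-autonomy gives $V_I'(\cdot+x^+;q)=V_I'(\cdot;q')$. By (b), $V_I'(x;q)<V_I'(x+x^+;q)=V_I'(x;q')$. Integrating from $0$, with $V_I(0;\cdot)=0$, yields $V_I(x;q)<V_I(x;q')$ for every $x>0$.
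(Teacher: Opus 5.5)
Your proposal is correct and takes the same route the paper intends (the paper omits this proof, saying it mirrors Lemma \ref{lem:VForderedFxBrCs1}). That route is explicit ODE pieces, $C^2$ matching via the definition of $\xi_j$, strict convexity from $\xi_i<\hNEW_i/\aNEW_i$, and the shift identity $V_I'(\cdot+x^+;q)=V_I'(\cdot;q')$ for the ordering. You were also right to read part (c) as $V_I'(x;q)\in(q,q_{min})$, since the stated interval $(q_{min},q)$ is empty when $q<q_{min}$.
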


\bproof The Proof follows similar steps as Lemma \ref{lem:VForderedFxBrCs1} and is omitted. \Halmos

Suppose $\LNEW < \barLNEW$, then this is equivalent to $\MNEW < q_{min} (\UNEW-\LNEW)$. By Lemma \ref{lem:VForderedFxBrCs2} (\ref{enum:VForderedFxBrCs2_3}) and (\ref{enum:VForderedFxBrCs2_4}), there exists a unique $q^*$ s.t. $V_I(\cdot;q^*)$ satisfies
\begin{align}
    V_I(0;q^*) = 0, \text{ } V_I(\UNEW-\LNEW;q^*) = \MNEW. \label{pfEq:VFbdryCondFxBrCs2}
\end{align} 
Let 
\begin{align}
    i_1 &= \underline{\Gamma}_{init}(q^*) \label{eq:PfOptPlcCtrl1FxBrCs2}\\
    S_0 &= 0 \label{eq:PfOptPlcS0FxBrCs2}
\end{align} 
Since $q^* \in (-\infty, q_{min})$, we have $\Nm \le i_1 \le \n$. Define $\{S_j\}_{j=1}^{\n - i_1 + 1}$ to be:


\begin{align}
\label{eq:SiFxBrCs2}
    \begin{cases}
    S_1 = \underline{E}_{i_1}(q^*)\\
    S_2 = \underline{E}_{i_1}(q^*) + \underline{D}_{i_1+1}\\
    S_3 = \underline{E}_{i_1}(q^*) + \underline{D}_{i_1+1} + \underline{D}_{i_1+2}\\
    ...\\
    S_{k-1} = \underline{E}_{i_1}(q^*) + \sum_{j=1}^{k-2} \underline{D}_{i_1+j}\\
    S_k = \underline{E}_{i_1}(q^*) + \sum_{j=1}^{k-1} \underline{D}_{i_1+j}\\
    ...\\
    S_{\n-i_1} = \underline{E}_{i_1}(q^*) + \sum_{j=1}^{\n-i_1-1} \underline{D}_{i_1+j}\\
    S_{\n-i_1+1} = \infty,
    \end{cases}
\end{align}
and define $\tK = \inf\{j \lvert S_j \ge \UNEW-\LNEW\}$, i.e. $S_{\tK-1} < \UNEW-\LNEW$ and $S_{\tK} \ge \UNEW-\LNEW$. Further define $S^*_0 = \LNEW + S_0$, $S^*_1 = \LNEW + S_1$, ..., $S^*_{\tK-1}= \LNEW + S_{\tK-1}$, $S^*_{\tK}= \UNEW \le \LNEW + S_{\tK}$. Construct the interval policy $\pi^*_I = \{i^*_j,I^*_j\}_{j=1}^{\tK} $ as follows
\begin{align}
\label{eq:ijFxBrCs2}
    i^*_j &= i^*_1 + (j-1) \text{ for $1 \le j \le \tK$,}\\I^*_j &= [S^*_{j-1},S^*_j)\text{ for $1 \le j \le \tK$,}
\end{align}
and define $\hat{V}_I(\cdot;q^*)$ as
\begin{align}
    \hat{V}_I(\cdot - \LNEW;q^*) = V_I(\cdot;q^*). \label{eq:VhatShiftLFxBrCs2}
\end{align}
We will prove that $\pi^*_I$ is an optimal policy (with $\hat{V}_I(\cdot;q^*)$ being the associated value function). 

\subsubsection{Step III: Proof of Theorem \ref{thm:Kctrls_FxLB} Case (2)}
\label{subsec:FxBrCs2PfStep3}

\enlargethispage{\baselineskip}
\enlargethispage{\baselineskip}
\enlargethispage{\baselineskip}
\enlargethispage{\baselineskip}

\emph{Proof of Theorem \ref{thm:Kctrls_FxLB} Case (2).}~ We will now prove Theorem  \ref{thm:Kctrls_FxLB} Case (2). In particular, we will show that the interval policy $\pi^*_I$ whose value function $\hat{V}_I(\cdot;q^*)$ is $\mathcal{C}^2$ over $[\LNEW,\UNEW)$ and satisfies
\begin{align}
&V(\LNEW) = 0, \text{ } V(\UNEW) = \MNEW, \label{eq:PfsuffCondOptPlcFxBrCs2_1}\\
& -c_i + \mu_i V'(x) + \frac{\sigma_i^2}{2}V''(x) \le 0 \text{, for all $x \in [\LNEW,\UNEW)$ and $i \in \I$,} \label{eq:PfsuffCondOptPlcFxBrCs2_init2} \\
& -c_{i^*_j} + \mu_{i^*_j} V'(x) + \frac{\sigma_{i^*_j}^2}{2}V''(x) = 0 \text{, for $x \in [S_{j-1}^*,S_j^*)$, $1 \le j \le \tK$}, \label{eq:PfsuffCondOptPlcFxBrCs2_3}
\end{align}
and hence the interval policy is optimal.

By (\ref{pfEq:VFbdryCondFxBrCs2}) and (\ref{eq:VhatShiftLFxBrCs2}), $\hat{V}_I(\cdot;q^*)$ satisfies (\ref{eq:PfsuffCondOptPlcFxBrCs2_1}) and (\ref{eq:PfsuffCondOptPlcFxBrCs2_3}). We will now establish
\begin{align}
& -c_i + \mu_i \hat{V}_I'(x;q^*) + \frac{\sigma_i^2}{2}\hat{V}_I''(x;q^*) \le 0 \text{, for all $x \in [\LNEW,\UNEW)$ and $i \in \I$.} \label{eq:PfsuffCondOptPlcFxBrCs2_2}
\end{align}
For convenience, we will use the following terminology: we say that control $j$ dominates control $i$ at $x$ if 
\begin{align}
-c_i + \mu_i \hat{V}_I'(x;q^*) + \frac{\sigma_i^2}{2}\hat{V}_I''(x;q^*)  \le -c_j + \mu_j \hat{V}_I'(x;q^*) + \frac{\sigma_j^2}{2}\hat{V}_I''(x;q^*). \\
\nonumber
\end{align}

Given any interval $I^*_j = [S^*_{j-1},S^*_j)$, let $k = i^*_j$ be the control used, i.e. 
\begin{align}
    -&c_k + \mu_k \hat{V}_I'(x;q^*) + \frac{\sigma_k^2}{2}\hat{V}_I''(x;q^*) = 0,\\
    &\xi_{k-1} \le \hat{V}_I'(x;q^*) < \xi_k \text{ over } [S^*_{j-1},S^*_j), \label{pfEq:dmntInterm0FxBrCs2}
\end{align}
and from now on fix $x \in [S^*_{j-1},S^*_j)$. 

For a control $i \in \Phi = \{\Nm, ..., 1,2, ..., \Np\}$, we have 
\begin{align}
    -c_i + \mu_i \hat{V}_I'(x;q^*) + \frac{\sigma_i^2}{2}\hat{V}_I''(x;q^*) \le -c_k + \mu_k \hat{V}_I'(x;q^*) + \frac{\sigma_k^2}{2}\hat{V}_I''(x;q^*) 
\end{align}
if and only if (c.f. Lemma \ref{supp_lem:dmntEqvFxBr})
\begin{align}
    -\hNEW_i + \aNEW_i \hat{V}_I'(x;q^*) + \hat{V}_I''(x;q^*) \le -\hNEW_k + \aNEW_k \hat{V}_I'(x;q^*) + \hat{V}_I''(x;q^*), 
\end{align}
which is equivalent to 
\begin{align}
    \hNEW_k -\hNEW_i \le (\aNEW_k - \aNEW_i) \hat{V}_I'(x;q^*). \label{pfEq:dmntInterm1FxBrCs2}
\end{align}
Suppose $\aNEW_i < \aNEW_k$, then (\ref{pfEq:dmntInterm1FxBrCs2}) is equivalent to $\frac{\hNEW_k - \hNEW_i}{\aNEW_k - \aNEW_i} \le \hat{V}_I'(x;q^*)$, which is true because $\frac{\hNEW_k - \hNEW_i}{\aNEW_k - \aNEW_i} \le \frac{\hNEW_k - \hNEW_{k-1}}{\aNEW_k - \aNEW_{k-1}} = \xi_{k-1} \le \hat{V}_I'(x;q^*)$ (the $1^{st}$ inequality is due to control $i \in \Phi$ and the $2^{nd}$ inequality is due to (\ref{pfEq:dmntInterm0FxBrCs2})). Suppose $\aNEW_i > \aNEW_k$, then (\ref{pfEq:dmntInterm1FxBrCs2}) is equivalent to $\frac{\hNEW_i - \hNEW_k}{\aNEW_i - \aNEW_k} \ge \hat{V}_I'(x;q^*)$, which is true because $\frac{\hNEW_i - \hNEW_k}{\aNEW_i - \aNEW_k} \ge  \frac{\hNEW_{k+1} - \hNEW_k}{\aNEW_{k+1} - \aNEW_k} = \xi_k > \hat{V}_I'(x;q^*)$ (the $1^{st}$ inequality is due to control $i \in \Phi$ and the $2^{nd}$ inequality is due to (\ref{pfEq:dmntInterm0FxBrCs2})).

For a control $i \not\in \Phi$, then it can be easily verified that for any $x \in (\LNEW,\UNEW)$ there exists at least one $j \in \Phi$ s.t. control $i$ is dominated by $j$ at $x$. Thus (\ref{eq:PfsuffCondOptPlcFxBrCs2_2}) is established.

In summary, the constructed $\hat{V}_I(\cdot;q^*)$ is the desired value function, which is $C^2$ over $(\LNEW,\UNEW)$ and satisfies (\ref{eq:PfsuffCondOptPlcFxBrCs2_1}), (\ref{eq:PfsuffCondOptPlcFxBrCs2_init2}) and (\ref{eq:PfsuffCondOptPlcFxBrCs2_3}). \Halmos

\subsection{Proof of Propositions \ref{prop:Kctrls_FxLB} and 
\ref{prop:FxLBMyoPlcBad}}
\label{sec:PfPropFxLB}~
\emph{Proof of Proposition \ref{prop:Kctrls_FxLB}.}~ This is just a direct consequence of Theorem \ref{thm:Kctrls_FxLB}. \Halmos

\emph{Proof of Proposition \ref{prop:FxLBMyoPlcBad}.}~ 
Fix $\UNEW=1$, $\MNEW=1$, $\LNEW=0$. We will construct a sequence of problem instances parametrized by $\epsilon > 0$ such that when $\epsilon$ goes to $0$, we have
\begin{align}
    \lim_{\epsilon \rightarrow 0^+}\sup_{x \in (\LNEW,\UNEW)} V_{opt}(x;\epsilon) - V_{myo}(x;\epsilon) = 1, 
\end{align}
where $V_{opt}(\cdot;\epsilon)$ and $V_{myo}(\cdot;\epsilon)$ denote the value function of the optimal and myopic policy, respectively, under the problem instance parametrized by $\epsilon$.

In problem instance parametrized by $\epsilon > 0$, we define two controls with $(\aNEW_1,\hNEW_1) = (1,\epsilon)$, $(\aNEW_2,\hNEW_2) = (1/\epsilon,2)$. Then $\aNEW_1 / \hNEW_1 > \aNEW_2 / \hNEW_2$, i.e. control $1$ has a higher \textit{cost-effectiveness}. Denote $V_1(x;\epsilon)$, $V_2(x;\epsilon)$, $V_{opt}(x;\epsilon)$, $V_{myo}(x;\epsilon)$ as the value function of using $1$ only, using $2$ only, the optimal policy and the myopic policy (for the fixed lower boundary problem) respectively, then we have 
\begin{align}
    V_1(x;\epsilon) &= V_{myo}(x;\epsilon) = -\frac{e^{-x} (\epsilon-1)}{e^{-1}-1}+\epsilon x+\frac{\epsilon-1}{e^{-1}-1},\\
    V_2(x;\epsilon) &=  -\frac{e^{-\frac{x}{\epsilon}} (2 \epsilon-1)}{e^{-\frac{1}{\epsilon}}-1}+2 \epsilon x+\frac{2 \epsilon-1}{e^{-\frac{1}{\epsilon}}-1},
\end{align}

We have
\begin{align}
    V_{opt}(x;\epsilon) - V_{myo}(x;\epsilon) &\ge V_2(x;\epsilon) - V_{opt}(x;\epsilon)\\
    &= V_2(x;\epsilon) - V_1(x;\epsilon)\\
    &= -\frac{e^{-\frac{x}{\epsilon}} (2 \epsilon-1)}{e^{-\frac{1}{\epsilon}}-1}+\epsilon x+\frac{2 \epsilon-1}{e^{-\frac{1}{\epsilon}}-1}+\frac{e^{-x} (\epsilon-1)}{e^{-1}-1}-\frac{\epsilon-1}{e^{-1}-1}.
\end{align}

By Lemma \ref{ec_lem:fxLBMyoPlcBadEpsLmt} we have
\begin{align}
    \lim_{\epsilon \rightarrow 0^+} V_{opt}(x;\epsilon) - V_{myo}(x;\epsilon) &\ge \frac{e^{x} e^{-1}-1}{e^{x} e^{-1}-e^{x}}, 
\end{align}
and consequently,
\begin{align}
    \lim_{\epsilon \rightarrow 0^+}\sup_{x \in (\LNEW,\UNEW)} V_{opt}(x;\epsilon) - V_{myo}(x;\epsilon) &\ge \lim_{\epsilon \rightarrow 0^+}\lim_{x \rightarrow 0^+} V_{opt}(x;\epsilon) - V_{myo}(x;\epsilon) \\
    &= \lim_{x \rightarrow 0^+}\lim_{\epsilon \rightarrow 0^+} V_{opt}(x;\epsilon) - V_{myo}(x;\epsilon) \\
    &\ge \lim_{x \rightarrow 0^+} \frac{e^{x} e^{-1}-1}{e^{x} e^{-1}-e^{x}} \\
    &= 1,
\end{align}
where the first equality is due to Bounded Convergence Theorem. 
\Halmos

\enlargethispage{\baselineskip}
\enlargethispage{\baselineskip}
\enlargethispage{\baselineskip}
\enlargethispage{\baselineskip}

\section{Proof of Results in the Free Lower Boundary Extension}
\label{ec_sec:PfRsltsFrLBCs}

Appendix \ref{ec_sec:PfRsltsFrLBCs} presents the proof of results for the free lower boundary case. \S\ref{sec:PfThmFrLB} presents the proof of Theorem \ref{thm:Kctrls_FrLB}, \S\ref{sec:PfPropFrLB} presents the proof of Propositions 
\ref{prop:Kctrls_FrLB_pvw}, 
\ref{prop:LstarRangeFrLB} and \ref{prop:FrLBMyoPlcBad}.

\subsection{Proof of Theorem \ref{thm:Kctrls_FrLB}}
\label{sec:PfThmFrLB}


In this section, we will prove {Theorem} \ref{thm:Kctrls_FrLB} by constructing an optimal interval policy. We follow similar proof steps. We use the same quantities defined as in \S\ref{subsec:FxBrCs2PfStep1} and construct a value function. After that, we show that the constructed value function corresponds to the desired optimal policy. 

\subsubsection{Step I: Notations and Definitions}~
Let $\{\aNEW_i,\hNEW_i\}_{i=\Nm}^{\Np}$ be given. We define $\xi_i$, $\underline{\Gamma}_{init}$, $\underline{D}_i$, $W_q^{(i)} (x)$ as in \S\ref{subsec:FxBrCs2PfStep1}.

\subsubsection{Step II: Construction of $V_{Fr}(x)$}
We construct the following trial value function:
\label{subsec:FrBrPfStep2}
\begin{align}
\label{pfDef:cstrtVFFrBr}
V_{Fr}(x) &=
    \begin{cases}
    V_I(x;q=0) \text{, if $x \ge 0$,}\\
    0 \text{, if $x < 0$,}
    \end{cases}
\end{align}
where $V_I(x;q)$ is defined as in (\ref{pfDef:cstrtVFFxBrCs2}). Note that $V_{Fr}(x)$ is defined over $(-\infty,\infty)$ and possess the following properties.

\begin{restatable}{lemma}{VFpropertyFrBr}
\label{lem:VFpropertyFrBr}
Let $V_{Fr}(x)$ be constructed as in (\ref{pfDef:cstrtVFFrBr}), then 
\begin{enumerate}[\hspace{0.5cm} a)]
    \item Over $(-\infty,0) \cup (0,\infty)$, $V_{Fr}(\cdot)$ is $C^2$. At $0$,  $V_{Fr}(\cdot)$ is $C^1$. \label{enum:VForderedFrBr_1}
    \item Over $(0,\infty)$, $V_{Fr}(\cdot)$ is convex, i.e. $V_{Fr}''(\cdot) > 0$. \label{enum:VForderedFrBr_2}
    \item $V_{Fr}'(0) = 0$. For $x \in (0,\infty)$, $V_{Fr}'(x) > 0$, $\lim_{x \rightarrow 0} V_{Fr}(x) = 0$, $\lim_{x \rightarrow \infty} V_{Fr}(x) = \infty$. \label{enum:VForderedFrBr_3}
\end{enumerate}
\end{restatable}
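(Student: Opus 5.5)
The plan is to reduce every claim to the properties of the type-II trial functions $V_I(\cdot;q)$ in Lemma \ref{lem:VForderedFxBrCs2}, evaluated at the single parameter value $q=0$, plus the explicit form of $W_q^{(i)}$ in (\ref{eq:explicit_V00qxFxBrCs1}).

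\textbf{Step 1: the construction is admissible and starts at control $1$.} I first check that $q=0$ lies in $(-\infty,q_{min})$. This holds because $q_{min}=\hNEW_{\n}/\aNEW_{\n}>0$. Hence $V_I(\cdot;0)$ is well defined. I then identify the initial control. By (\ref{ineq:EF_FxLB_AllIneff2}) we have $\xi_0<0<\xi_1$, or $0<\xi_1$ if $\Nm=1$, so $\underline{\Gamma}_{init}(0)=1$. This is not needed for the lemma itself, but it fixes the explicit first piece
\[
V_{Fr}(x)=W^{(1)}_0(x)=\frac{\hNEW_1}{\aNEW_1^2}\bigl(e^{-\aNEW_1 x}-1\bigr)+\frac{\hNEW_1}{\aNEW_1}x
\]
near $0^+$.

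\textbf{Step 2: part (a).} On $(-\infty,0)$ the function is identically zero, hence $C^2$. On $(0,\infty)$ smoothness is Lemma \ref{lem:VForderedFxBrCs2}(a), which I would reprove via the matching $\hNEW_{j}-\aNEW_{j}\xi_{j-1}=\hNEW_{j-1}-\aNEW_{j-1}\xi_{j-1}$ at the switch points, as in Lemma \ref{lem:VForderedFxBrCs1}. At $0$ I use $W_0^{(1)}(0)=0$ and $W_0^{(1)}{}'(0)=0$ from (\ref{eq:a_x0y0q_xFxBrCs2}). These agree with the value and derivative of the zero function from the left, which gives $C^1$ at $0$. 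I would remark that $C^2$ fails at $0$, since $V_{Fr}''(0^+)=\hNEW_1>0$; this matches the theorem's claim of only $\mathcal{C}^1$ at $\LNEW^*$.

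\textbf{Step 3: parts (b) and (c).} Convexity on $(0,\infty)$ is Lemma \ref{lem:VForderedFxBrCs2}(b). Concretely, on each piece $W^{(j)}_{p}{}''(x)=(\hNEW_j-\aNEW_j p)e^{-\aNEW_j x}$, and it is positive because the starting slope $p\in\{0,\xi_{1},\dots\}$ stays below $\hNEW_j/\aNEW_j$. This needs the analogue of Lemma \ref{suppLem:xiRltnFxBrCs1} for the type-II frontier, namely $\xi_{j-1}<\xi_j<\hNEW_{j+1}/\aNEW_{j+1}$, or more simply $\xi_{j-1}<\hNEW_j/\aNEW_j$ for $1\le j\le \n$. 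I would prove it by the same cross-multiplication argument, using that $\hNEW_j/\aNEW_j$ is decreasing in $j$ along $1,\dots,\n$, since $CE$ increases up to $\n$.

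For (c), $V_{Fr}'(0)=0$ comes from Step 2. Strict convexity then gives $V_{Fr}'(x)>V_{Fr}'(0)=0$ for $x>0$. The limit $V_{Fr}(x)\to0$ as $x\to0$ is continuity. For $x\to\infty$, convexity gives
\[
V_{Fr}(x)\ge V_{Fr}(1)+V_{Fr}'(1)(x-1)\quad\text{with } V_{Fr}'(1)>0,
\]
so $V_{Fr}(x)\to\infty$. Alternatively, the last piece $W^{(\n)}_{\xi_{\n-1}}$ has slope tending to $q_{min}>0$.

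\textbf{Expected main obstacle.} The only delicate point is the sign bookkeeping in the convexity step. Lemma \ref{lem:VForderedFxBrCs2} is stated with its omitted proof, and its range statement $(q_{min},q)$ is evidently meant as $(q,q_{min})$. So I would verify directly the inequality $\xi_{j-1}<\hNEW_j/\aNEW_j$ along the type-II frontier, rather than rely on the stated lemma.
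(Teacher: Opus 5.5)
Your proposal is correct and follows the paper's route: the paper's proof specializes Lemma~\ref{lem:VForderedFxBrCs2} to $q=0$ (with the zero extension on $(-\infty,0)$ for part (a)), which is exactly your reduction. Your extra direct checks are worthwhile: deriving $\xi_{j-1}<\hNEW_j/\aNEW_j$ from the increasing \textit{cost-effectiveness} along $1,\dots,\n$, and getting $V_{Fr}'>0$ on $(0,\infty)$ and $V_{Fr}(x)\to\infty$ from $V_{Fr}'(0)=0$ plus strict convexity, patch the paper's appeal to part (c) of that lemma, which misstates the slope range as $(q_{min},q)$ and gives its limits in $\hat q$ rather than in $x$.
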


\emph{Proof of Lemma \ref{lem:VFpropertyFrBr}.}~ Part (\ref{enum:VForderedFrBr_1}) holds by (\ref{pfDef:cstrtVFFrBr}) and Lemma \ref{lem:VForderedFxBrCs2}(\ref{enum:VForderedFxBrCs2_1}). Part  (\ref{enum:VForderedFrBr_2}) and (\ref{enum:VForderedFrBr_3}) hold by Lemma \ref{lem:VForderedFxBrCs2}(\ref{enum:VForderedFxBrCs2_2}) and (\ref{enum:VForderedFxBrCs2_3}). \Halmos


Since $\MNEW > 0$, by Lemma \ref{lem:VFpropertyFrBr} (\ref{enum:VForderedFrBr_3}) there exists a unique $x^*$ s.t. 
\begin{align}
    V_{Fr}(x^*) = \MNEW. \label{pfEq:VFbdryCondFrBr}
\end{align} 
Let 
\begin{align}
    i_1 &= \underline{\Gamma}_{init}(0) \label{eq:PfOptPlcCtrl1FrBr}\\
    S_0 &= 0 \label{eq:PfOptPlcS0FrBr}
\end{align} 
and define $\LNEW^* = \UNEW - x^*$ and $\{S_j\}_{j=1}^{\n}$ to be:

\enlargethispage{\baselineskip}
\enlargethispage{\baselineskip}
\enlargethispage{\baselineskip}
\enlargethispage{\baselineskip}

\begin{align}
\label{eq:SiFrBr}
    \begin{cases}
    S_1 = \underline{E}_{1}(0)\\
    S_2 = \underline{E}_{1}(0) + \underline{D}_{2}\\
    S_3 = \underline{E}_{1}(0) + \underline{D}_{2} + \underline{D}_{3}\\
    ...\\
    S_{k-1} = \underline{E}_{1}(0) + \sum_{j=1}^{k-1} \underline{D}_{j}\\
    S_k = \underline{E}_{1}(0) + \sum_{j=1}^{k} \underline{D}_{j}\\
    ...\\
    S_{\n-1} = \underline{E}_{1}(0) + \sum_{j=1}^{\n-1} \underline{D}_{j}\\
    S_{\n} = \infty.
    \end{cases}
\end{align}
Define $\tK = \inf\{j \lvert S_j \ge \UNEW-\LNEW^*\}$, i.e. $S_{\tK-1} < \UNEW-\LNEW^*$ and $S_{\tK} \ge \UNEW-\LNEW^*$. Further define $S^*_0 = \LNEW^* + S_0$, $S^*_1 = \LNEW^* + S_1$, ..., $S^*_{\tK-1}= \LNEW^* + S_{\tK-1}$, $S^*_{\tK}= \UNEW \le \LNEW^* + S_{\tK}$. Construct the interval policy $\pi^*_{Fr} = \{i^*_j,I^*_j\}_{j=1}^{\tK} $ as follows
\begin{align}
\label{eq:ijFrBr}
    i^*_j &= j \text{ for $1 \le j \le \tK$,}\\I^*_j &= [S^*_{j-1},S^*_j)\text{ for $1 \le j \le \tK$,}
\end{align}
and define $\hat{V}_{Fr}(\cdot)$ as
\begin{align}
    \hat{V}_{Fr}(\cdot - \LNEW^*) = V_{Fr}(\cdot). \label{eq:VhatShiftLFrBr}
\end{align}
We will prove that $\pi^*_{Fr}$ is an optimal policy (with $\hat{V}_{Fr}(\cdot)$ being the associated value function). 

\subsubsection{Step III: Proof of Theorem \ref{thm:Kctrls_FrLB}}

\enlargethispage{\baselineskip}
\enlargethispage{\baselineskip}
\enlargethispage{\baselineskip}
\enlargethispage{\baselineskip}

\emph{Proof of Theorem \ref{thm:Kctrls_FrLB}.}~ We will now prove Theorem  \ref{thm:Kctrls_FrLB}. In particular, we will show that the interval policy $\pi^*_{Fr}$ whose value function $\hat{V}_{Fr}(\cdot)$ is $\mathcal{C}^2$ over $[\LNEW,\UNEW)$ and satisfies (\ref{eq:FrLB_suffCondOpt1}) - (\ref{eq:FrLB_suffCondOpt5}) and hence the interval policy is optimal.

By (\ref{pfEq:VFbdryCondFrBr}) and (\ref{eq:VhatShiftLFrBr}), $\hat{V}_{Fr}(\cdot)$ satisfies (\ref{eq:FrLB_suffCondOpt1}),  (\ref{eq:FrLB_suffCondOpt2}) and (\ref{eq:FrLB_suffCondOpt5}). By Lemma \ref{lem:VFpropertyFrBr}(\ref{enum:VForderedFrBr_1}), $\hat{V}_{Fr}(\cdot)$ satisfies (\ref{eq:FrLB_suffCondOpt3}). The establishment of (\ref{eq:FrLB_suffCondOpt4}) involves proving
\begin{align}
& -c_i + \mu_i \hat{V}_{Fr}'(x) + \frac{\sigma_i^2}{2}\hat{V}_{Fr}''(x) \le 0 \text{, for all $x \in [\LNEW,\UNEW)$ and $i \in \I$,} \label{eq:FrLB_suffCondOpt2_restate}
\end{align}
which follows the identical procedure of \S\ref{subsec:FxBrCs2PfStep3} and is omitted. \Halmos

\subsection{Proof of Propositions 
\ref{prop:Kctrls_FrLB_pvw}, 
\ref{prop:LstarRangeFrLB} and \ref{prop:FrLBMyoPlcBad}}
\label{sec:PfPropFrLB}~
\emph{Proof of Proposition \ref{prop:Kctrls_FrLB_pvw}.}~ This is just a direct consequence of Theorem \ref{thm:Kctrls_FrLB}. \Halmos

\emph{Proof of Proposition \ref{prop:LstarRangeFrLB}.} First we prove the right inequality. Let $\LNEW_{\n}$ be the optimal lower boundary if we use $\n$ alone. Then we have (c.f. Lemma \ref{lem:Lstar1Ctrl})
\begin{align}
    \UNEW - \LNEW_{\n} = \MNEW \left[\frac{\mu_i}{c_i} (1+G(\MNEW \aNEW_i \frac{\mu_i}{c_i})) \right] \lvert_{i = \n},
\end{align}
where $G(y) = \frac{1}{y}\left[1+LW(-e^{-1-y})\right] > 0$. Moreover since $\UNEW - \barLNEW = \MNEW\frac{\mu_{\n}}{c_{\n}}$ we have
\begin{align}
     \barLNEW - \LNEW_{\n} &= \MNEW \frac{\mu_{\n}}{c_{\n}} G(\MNEW \aNEW_{\n} \frac{\mu_{\n}}{c_{\n}}) \\
     &= \frac{1 + LW(-e^{-1-\MNEW \aNEW_{\n} \frac{\mu_{\n}}{c_{\n}}})}{\aNEW_{\n}}\\
     &= \frac{1}{\aNEW_{\n}} + O(e^{-\MNEW}),
\end{align}
where the last equality is obtained by Taylor expansion at $A = +\infty$, $LW(-e^{-A}) = O(e^{-A})$. Since $\LNEW_{\n} \ge \LNEW^*$, we have
\begin{align}
     \barLNEW - \LNEW^* \ge \barLNEW - \LNEW_{\n} = \frac{1}{a_{\n}} + O(e^{-\MNEW}),
\end{align}
which proves the right inequality.

Next we prove the left inequality. We add in a control with $(\aNEW_{\n}, \hNEW_1)$. Then the optimal policy with all previous controls plus this new one, is to use this new control only. Denote $\LNEW_+$ to be the optimal lower boundary with the new control, then
\begin{align}
    \UNEW - \LNEW_+ = \MNEW \left[\frac{\aNEW_{\n}}{\hNEW_1} (1+G(\MNEW \aNEW_{\n} \frac{\aNEW_{\n}}{\hNEW_1})) \right],
\end{align}
where $G(y) = \frac{1}{y}\left[1+LW(-e^{-1-y})\right] > 0$. Moreover since $\UNEW - \barLNEW = \MNEW\frac{\mu_{\n}}{c_{\n}} = \MNEW\frac{\aNEW_{\n}}{\hNEW_{\n}}$ we have
\begin{align}
    \barLNEW - \LNEW_+ &= \MNEW \left[\frac{\aNEW_{\n}}{\hNEW_1} (1+G(\MNEW \aNEW_{\n} \frac{\aNEW_{\n}}{\hNEW_1})) \right] - \MNEW\frac{\aNEW_{\n}}{\hNEW_{\n}} \\
    &= \MNEW\frac{\aNEW_{\n}}{\hNEW_1} - \MNEW\frac{\aNEW_{\n}}{\hNEW_{\n}} + \MNEW \frac{\aNEW_{\n}}{\hNEW_1} G(\MNEW \aNEW_{\n} \frac{\aNEW_{\n}}{\hNEW_1}) \\
    &=\MNEW\frac{\aNEW_{\n}}{\hNEW_{\n}} (\frac{\hNEW_{\n}}{\hNEW_1} - 1) + \frac{1 + LW(-e^{-1-\MNEW \aNEW_{\n} \frac{\aNEW_{\n}}{\hNEW_1}})}{\aNEW_{\n}}\\
    &=\MNEW\frac{\aNEW_{\n}}{\hNEW_{\n}} (\frac{\hNEW_{\n}}{\hNEW_1} - 1) + \frac{1}{\aNEW_{\n}} + O(e^{-\MNEW})\\
    &\le \MNEW\frac{\aNEW_{\n}}{\hNEW_{\n}} (\frac{\aNEW_{\n}}{\aNEW_1} - 1) + \frac{1}{\aNEW_{\n}} + O(e^{-\MNEW}),
\end{align}
where the last inequality comes from $\frac{\hNEW_{\n}}{\hNEW_1} \le \frac{\aNEW_{\n}}{\aNEW_1}$, and this is true because control $\n$ has the highest \textit{CE}, i.e. $\frac{\aNEW_1}{\hNEW_1} \le \frac{\aNEW_{\n}}{\hNEW_{\n}}$. Since $\MNEW\frac{\aNEW_{\n}}{\hNEW_{\n}} = (\UNEW - \barLNEW)$ and $\LNEW_+ \le \LNEW^*$, we have
\begin{align}
     \barLNEW - \LNEW^* \le \barLNEW - \LNEW_+ \le  (\UNEW - \barLNEW)(\frac{\aNEW_{\n}}{\aNEW_1} - 1) + \frac{1}{\aNEW_{\n}} + O(e^{-\MNEW}),
\end{align}
which proves the left inequality. 

Note that in (\ref{eq:FrLB_LstarRangeFrLB2}) both inequalities are tight. This is shown in Lemma \ref{lem:FrLB_LstarRange_ineq}. \Halmos

\emph{Proof of Proposition \ref{prop:FrLBMyoPlcBad}.}~ Fix $\UNEW=1$, $\MNEW=1$. We will construct a sequence of problem instances parametrized by $\epsilon > 0$ such that when $\epsilon$ goes to $0$, we have
\begin{align}
    \lim_{\epsilon \rightarrow 0^+}\sup_{x \in (-\infty,\UNEW)} V_{opt}(x;\epsilon) - V_{myo}(x;\epsilon) = 1, \label{eq:prop:FrLBMyoPlcBad}
\end{align}
where $V_{opt}(\cdot;\epsilon)$ and $V_{myo}(\cdot;\epsilon)$ denote the value function of the optimal and myopic policy, respectively, under the problem instance parametrized by $\epsilon$.

In problem instance parametrized by $\epsilon > 0$, we define two controls with $(\aNEW_1,\hNEW_1) = (\epsilon^2, 2\epsilon^3)$, $(\aNEW_2,\hNEW_2) = (1, \epsilon)$. Then $\aNEW_1 / \hNEW_1 < \aNEW_2 / \hNEW_2$, i.e. control $2$ has a higher \textit{cost-effectiveness}. Denote $V_1(x;\epsilon)$, $V_2(x;\epsilon)$, $V_{opt}(x;\epsilon)$, $V_{myo}(x;\epsilon)$ as the value function of using $1$ only, using $2$ only, the optimal policy and the myopic policy (for the free lower boundary problem) respectively (without loss of generality we shift the $x$-coordinate of all value functions such that $V_1(0) = 0$). 
\begin{align}
    V_1(x;\epsilon) &= \frac{2 e^{-\epsilon^{2} x}}{\epsilon}+2 \epsilon x-\frac{2}{\epsilon},\\
    V_2(x;\epsilon) &= V_{myo}(x;\epsilon) = \tilde{V}_2(x - (G_1-G_2);\epsilon), \label{lem_eq:FrLBMyoBadV2Def}
\end{align}
where 
\begin{align}
  \tilde{V}_2(x;\epsilon) = e^{-x} \epsilon+\epsilon x-\epsilon,
\end{align}
and
\begin{align}
G_1(\epsilon) =\frac{2 LW(-e^{-(\frac{1}{2}) \epsilon-1})+\epsilon+2}{2 \epsilon^{2}}, \text{ } G_2(\epsilon) =LW(-e^{-\frac{\epsilon+1}{\epsilon}})+\frac{\epsilon+1}{\epsilon}.
\end{align}
We can show that by direct computation, \eqref{eq:prop:FrLBMyoPlcBad} holds. \Halmos

\section{Numerical Example for the Free Lower Boundary Case}
\label{ec_sec:myoPlcBad_FrLB}


Similarly as in the fixed lower boundary case, for the free lower boundary problem the myopic policy can perform arbitrarily worse than the optimal policy. 


Figure \ref{fig:myoPlcBad_FrLB_Plc} provides a numerical example that illustrate when this difference can be large. In this example, $\UNEW=90$, $\MNEW=10$ and there are three controls $(\aNEW_1,\hNEW_1) = (0.003, 0.0033)$, $(\aNEW_2,\hNEW_2) = (0.006, 0.0036)$, $(\aNEW_3,\hNEW_3) = (1, 0.54)$. The myopic policy $\pi_{myo}$ is to always use control $3$ (highest \textit{cost-effectiveness}), with an associated lower boundary $\LNEW_3 = 71.7$. The optimal policy is ``12'' with an associate lower boundary $\LNEW^* = 9.2$. Figure \ref{fig:MyoPlcBad_FrLB_VF} plots the value function of both policies. The biggest gap  between the two value functions is $6.1$, which occurs when the state $X(t)$ is at $72.2$. Contrary to the numerical example of the fixed lower boundary case, the difference can be large even when $X(t)$ is close to $\UNEW$ (but not too close). 

\begin{figure}[h]
    \centering
    \begin{subfigure}[b]{0.45\textwidth}
    \begin{tikzpicture}[xscale = 1.7]
        \def\LPos{0.8}
        \def\UPos{0.9}
        \def\SPos{0.85}  
    
        \draw[->] (-0.5,1) -- (2.5,1) node[pos=1, right, black] {$X(t)$};
        \draw[-, blue] (-0.5+\LPos*3, 1) -- (-0.5+\UPos*3, 1)  node[pos=0.5, above] {use $3$};
        \draw[-] (-0.5+\LPos*3, 1.1) -- (-0.5+\LPos*3, 0.9)  node[pos=1, below, black] {$\LNEW_3$};
        \draw[-] (-0.5+\UPos*3, 1.1) -- (-0.5+\UPos*3, 0.9)  node[pos=1, below, black] {$\UNEW$};
    \end{tikzpicture}
    \caption{Myopic policy is ``always use 3''}
    \end{subfigure}
    \begin{subfigure}[b]{0.45\textwidth}
    \begin{tikzpicture}[xscale = 1.7]
        \def\LPos{0.1}
        \def\UPos{0.9}
        \def\SPos{0.45}  
    
        \draw[->] (-0.5,1) -- (2.5,1) node[pos=1, right, black] {$X(t)$};
        \draw[-, cyan] (-0.5+\LPos*3, 1) -- (-0.5+\SPos*3, 1)  node[pos=0.5, above] {use $1$};
        \draw[-, red] (-0.5+\SPos*3, 1) -- (-0.5+\UPos*3, 1)  node[pos=0.5, above] {use $2$};
        \draw[-] (-0.5+\LPos*3, 1.1) -- (-0.5+\LPos*3, 0.9)  node[pos=1, below, black] {$\LNEW^*$};
        \draw[-] (-0.5+\UPos*3, 1.1) -- (-0.5+\UPos*3, 0.9)  node[pos=1, below, black] {$\UNEW$};
        \draw[-] (-0.5+\SPos*3, 1.1) -- (-0.5+\SPos*3, 0.9)  node[pos=1, below, black] {$S_1$};
    \end{tikzpicture}
    \caption{Optimal policy is ``12''.}
    \end{subfigure}
\caption{A numerical example (free lower boundary case) that shows myopic policy can perform much worse than the optimal policy. Parameters are:
$\UNEW=90$, $\MNEW=10$, $(\aNEW_1,\hNEW_1) = (0.003, 0.0033)$, $(\aNEW_2,\hNEW_2) = (0.006, 0.0036)$, $(\aNEW_3,\hNEW_3) = (1, 0.54)$. The myopic policy $\pi_{myo}$ is ``$3$'' with $\LNEW_3 = 71.7$. The optimal policy is ``$12$'' with $\LNEW^* = 9.2$ and $S_1 = 47.3$. }
\label{fig:myoPlcBad_FrLB_Plc}
\end{figure}



\begin{figure}[h]
    \centering
    \includegraphics[scale=0.5]{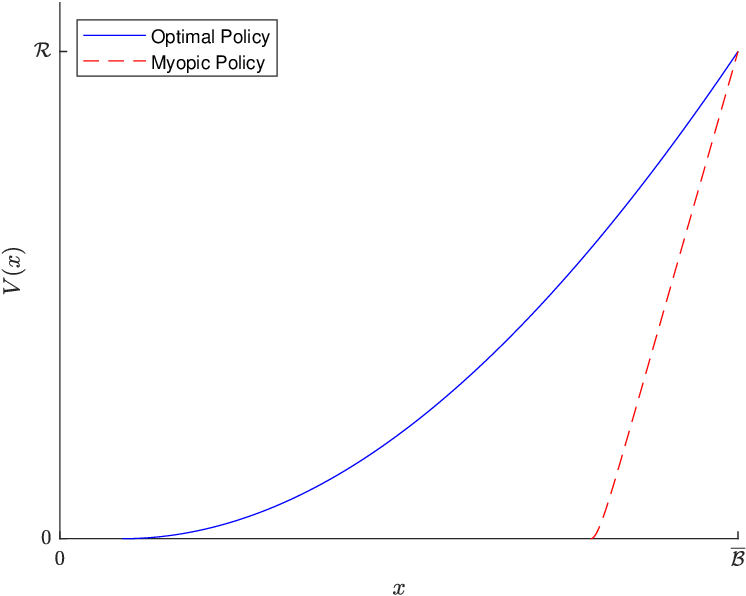}
    \caption{Value functions of the optimal (blue, solid) and myopic (red, dashed) policies. The parameters, optimal policy and myopic policy are the same as in Figure \ref{fig:myoPlcBad_FrLB_Plc}.}
    \label{fig:MyoPlcBad_FrLB_VF}
\end{figure}

\enlargethispage{\baselineskip}
\enlargethispage{\baselineskip}
\enlargethispage{\baselineskip}
\enlargethispage{\baselineskip}

\section{Supplementary Lemmas}

\begin{restatable}{lemma}{FrLBLstarRangeIneq}
\label{lem:FrLB_LstarRange_ineq}
There exists a sequence of problem instances $\left\{ \{c_{i,k}, \mu_{i,k}, \sigma_{i,k}\}_{i=1,2} \right\}_{k = 1,2,3,...}$, $\UNEW$, $M_k$ such that
\begin{align}
    \lim_{k \rightarrow \infty} \frac{\barLNEW_k - \frac{1}{\aNEW_{\n,k}} - (\UNEW - \barLNEW_k)(\frac{\aNEW_{\n,k}}{\aNEW_{1,k}} - 1)}{\LNEW^*_k} = 1,
\end{align}
where $\barLNEW_k = \UNEW - \MNEW_k \max_i CE_{i,k}$ and subscript $k$ denotes the $k$-th problem instance.
\end{restatable}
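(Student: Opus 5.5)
The idea is to pick instances in which the two bounds of Proposition \ref{prop:LstarRangeFrLB} nearly coincide, while $|\LNEW^*_k|$ diverges. Then the sandwich forces the ratio in Lemma \ref{lem:FrLB_LstarRange_ineq} to tend to $1$. I would not compute $\LNEW^*_k$ exactly. Instead I would use both inequalities in (\ref{eq:FrLB_LstarRangeFrLB2}) as already established, with the gap between them as the only quantity to control.

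\textbf{Construction.} Fix $\UNEW$ and a base control $2$ with $(\aNEW_2,\hNEW_2)=(1,1)$, so $CE_2=1$. For $k\ge 2$ let control $1$ have
\[
\aNEW_{1,k}=\frac{1}{1+k^{-2}}, \qquad \hNEW_{1,k}=\frac{1-k^{-3}}{1+k^{-2}}, \qquad \MNEW_k=k .
\]
This gives $\aNEW_{1,k}<\aNEW_2$, $\hNEW_{1,k}<\hNEW_2$ and $CE_{1,k}=1/(1-k^{-3})>1$. That is the wrong way: I need control $2$ to be the most cost-effective, so control $1$ must get $\hNEW_{1,k}$ slightly larger than $\aNEW_{1,k}$ instead. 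The correct choice is $\hNEW_{1,k}=\aNEW_{1,k}(1+k^{-3})$, provided this stays below $\hNEW_2=1$, which holds for large $k$. Then the labelling of \S\ref{subsec:EF_FxLB} gives control $1$ as the minimal scaled cost, $\n=2$, and all three quantities ED, CE and scaled cost are distinct. Each instance is admissible, for instance with $\sigma_i=1$, $\mu_i=\aNEW_i/2$, $c_i=\hNEW_i/2$.

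\textbf{Sandwich.} By Proposition \ref{prop:LstarRangeFrLB},
\[
0\le \LNEW^*_k-\Big[\barLNEW_k-\tfrac{1}{\aNEW_{2}}-(\UNEW-\barLNEW_k)\big(\tfrac{\aNEW_{2}}{\aNEW_{1,k}}-1\big)\Big] \le (\UNEW-\barLNEW_k)\big(\tfrac{\aNEW_{2}}{\aNEW_{1,k}}-1\big)+O(e^{-\MNEW_k}).
\]
Here $\UNEW-\barLNEW_k=\MNEW_k CE_2=k$ and $\aNEW_2/\aNEW_{1,k}-1=k^{-2}$, so the right-hand side is $k^{-1}+O(e^{-k})\to 0$. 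Meanwhile $\LNEW^*_k\le \barLNEW_k-1=\UNEW-k-1\to-\infty$. Dividing the bounded gap by $|\LNEW^*_k|\to\infty$ shows that the bound divided by $\LNEW^*_k$ tends to $1$.

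\textbf{Main obstacle.} The delicate point is the uniformity of the $O(e^{-\MNEW})$ terms across the sequence. In the proof of Proposition \ref{prop:LstarRangeFrLB} they come from $LW(-e^{-1-\MNEW\aNEW\cdot CE})/\aNEW$. Here the $\aNEW$'s stay in $[1/2,1]$ and the CE's stay near $1$, so the argument of $LW$ tends to $0$ uniformly and the terms are uniformly bounded, which is all the sandwich needs. I would verify this directly from $|LW(-y)|\le 2y$ for small $y$.

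If a cleaner statement is wanted, one could instead cite the explicit one-control formula of Lemma \ref{lem:Lstar1Ctrl} for both the fictitious control $(\aNEW_2,\hNEW_{1,k})$ and control $2$. The two resulting lower boundaries differ by $O(k^{-1})$, and $\LNEW^*_k$ lies between them.
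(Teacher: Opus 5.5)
Your argument is correct, but it takes a different route from the paper's.

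\textbf{The paper's route.} The paper fixes an explicit instance: $\MNEW_k=1/\epsilon_k$, $(\aNEW_2,\hNEW_2)=(1,1)$, $\aNEW_{1,k}=1-\epsilon_k$, and $\hNEW_{1,k}$ calibrated through a Lambert-W expression. It then asserts the limit by computing $\LNEW^*_k$ directly. One can check that this calibration puts the $1\to2$ switch point of the construction in \S\ref{sec:PfThmFrLB} exactly at $\UNEW$. This gives the closed form $\UNEW-\LNEW^*_k=(1/\epsilon_k+w_k)/(1-\epsilon_k)$, with $w_k=LW((\epsilon_k-2)e^{-\MNEW_k})$.

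\textbf{Your route.} You never solve for $\LNEW^*_k$. You use only the two inequalities of Proposition~\ref{prop:LstarRangeFrLB}. These are proved without this lemma, so there is no circularity, even though the proposition's tightness clause refers back here. You then pick instances where the gap between the two bounds, $k\cdot k^{-2}$, vanishes while $\LNEW^*_k<\barLNEW_k=\UNEW-k\to-\infty$.

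\textbf{What each buys.} The paper's version is self-contained and exactly solvable. Yours is shorter and needs no ODE solution. It even yields the additive statement that $\LNEW^*_k$ minus the lower expression tends to $0$.

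Both rest on the same mechanism, though. In the paper's instance the bound gap stays at $1/(1-\epsilon_k)$, and $\LNEW^*_k$ lies within about $\epsilon_k$ of the \emph{upper} expression. So the ratio tends to $1$ only because numerator and denominator both diverge like $-1/\epsilon_k$. In yours the two bounds collapse, so the same sequence witnesses the upper bound equally well.

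\textbf{Write-up points.}
\begin{itemize}
\item Drop the abandoned first choice of $\hNEW_{1,k}$.
\item The uniformity concern is moot. The Lambert-W values in the proof of Proposition~\ref{prop:LstarRangeFrLB} are principal-branch values in $(-1,0)$, so the upper-bound correction is less than $1/\aNEW_{\n}=1$, and the lower-bound correction has the favorable sign.
\item Strictly, the upper bound reads $\LNEW^*_k\le\barLNEW_k-1+|\omega_k|$ with $|\omega_k|<1$, not $\LNEW^*_k\le\barLNEW_k-1$. This is harmless for the argument.
\end{itemize}
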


\emph{Proof of Lemma \ref{lem:FrLB_LstarRange_ineq}.}~ 
Define the following sequences
\begin{align}
    \MNEW_k &= \frac{1}{\epsilon_k}, \\
    \aNEW_{2,k} &= 1, \hNEW_{2,k} = 1, \aNEW_{1,k} = 1-\epsilon_k, \\
    \hNEW_{1,k} &= \frac{(-2+\epsilon_k)(1-\epsilon_k)}{-\epsilon_k LW\left((-2+\epsilon_k)e^{-\MNEW_k}\right) + (-2+\epsilon_k) }, 
\end{align}
where $\epsilon_k = \frac{1}{10^k}$. For any integer $k\ge 1$, note $\hNEW_{1,k} > \aNEW_{1,k}$ always, and hence control $2$ always has the highest \textit{CE}, i.e. $\n = 2$. Note also that $\hNEW_{1,k} = 1-\epsilon_k + O(\epsilon_k^2)$ and $\MNEW_k \epsilon_k = 1$. We can show that Lemma \ref{lem:FrLB_LstarRange_ineq} holds by direct computation. \Halmos




\begin{restatable}{lemma}{supp_lem:dmntEqvFxBr}
\label{supp_lem:dmntEqvFxBr}
Suppose $V(\cdot)$ is $C^2$ over $(\LNEW,\UNEW)$, and $-c_k + \mu_k V'(x) + \frac{\sigma_k^2}{2}V''(x) = 0$, then
\begin{align}
    -c_i + \mu_i V'(x) + \frac{\sigma_i^2}{2}V''(x) \le -c_k + \mu_k V'(x) + \frac{\sigma_k^2}{2}V''(x) \label{supp_lem_eq:dmntEqv1FxBrCs1}
\end{align}
if and only if
\begin{align}
    -\hNEW_i + \aNEW_i V'(x) + V''(x) \le -\hNEW_k + \aNEW_k V'(x) + V''(x). \label{supp_lem_eq:dmntEqv2FxBrCs1}
\end{align}
\end{restatable}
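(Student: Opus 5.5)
The statement is Lemma \ref{supp_lem:dmntEqvFxBr}. It is a purely pointwise algebraic statement, so no property of $V$ beyond existence of $V'(x)$, $V''(x)$ at the fixed point $x$ is needed. The plan is to divide each generator expression by the corresponding $\sigma^2/2$, rewriting everything in the scaled quantities $\aNEW_i = 2\mu_i/\sigma_i^2$ and $\hNEW_i = 2c_i/\sigma_i^2$. The one thing to watch is that the two sides of (\ref{supp_lem_eq:dmntEqv1FxBrCs1}) carry different scalings, $\sigma_i^2/2$ and $\sigma_k^2/2$. The hypothesis that control $k$'s expression vanishes at $x$ is exactly what removes this mismatch.

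\textbf{Step 1 (rewrite).} Set $L_j \triangleq -c_j + \mu_j V'(x) + \frac{\sigma_j^2}{2}V''(x)$ and $\ell_j \triangleq -\hNEW_j + \aNEW_j V'(x) + V''(x)$. Then
\begin{align*}
L_j = \frac{\sigma_j^2}{2}\,\ell_j \quad \text{for every } j \in \I,
\end{align*}
which follows directly from the definitions of $\aNEW_j$ and $\hNEW_j$.

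\textbf{Step 2 (use the hypothesis).} By assumption $L_k = 0$, so $\ell_k = 2L_k/\sigma_k^2 = 0$ as well. Consequently (\ref{supp_lem_eq:dmntEqv1FxBrCs1}) reads $L_i \le 0$, and (\ref{supp_lem_eq:dmntEqv2FxBrCs1}) reads $\ell_i \le 0$.

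\textbf{Step 3 (conclude).} Since $\sigma_i^2/2 > 0$, we have $L_i \le 0$ if and only if $\ell_i \le 0$. Chaining Steps 1 and 2 then gives the claimed equivalence.

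There is no real obstacle. The only point to flag explicitly is Step 2: without $L_k = 0$, the two inequalities would not be comparable, because they are scaled by different positive factors.
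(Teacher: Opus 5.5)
Your proposal is correct and follows essentially the same approach as the paper: both use that $L_k=0$ if and only if $\ell_k=0$ to reduce each inequality to a sign condition on $L_i$ or $\ell_i$. These two sign conditions agree because $L_i=\frac{\sigma_i^2}{2}\ell_i$ with $\sigma_i^2/2>0$.
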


\emph{Proof of Lemma \ref{supp_lem:dmntEqvFxBr}.}~ Note that $-c_k + \mu_k V'(x) + \frac{\sigma_k^2}{2}V''(x) = 0$ is equivalent to $-\hNEW_k + \aNEW_k V'(x) + V''(x) = 0$. 

Suppose (\ref{supp_lem_eq:dmntEqv1FxBrCs1}) holds, then we have 
\begin{align*}
    -\hNEW_i + \aNEW_i V'(x) + V''(x) \le 0 = -\hNEW_k + \aNEW_k V'(x) + V''(x),
\end{align*}
i.e. (\ref{supp_lem_eq:dmntEqv2FxBrCs1}) holds.

Suppose (\ref{supp_lem_eq:dmntEqv2FxBrCs1}) holds, then we have
\begin{align*}
    -c_i + \mu_i V'(x) + \frac{\sigma_i^2}{2}V''(x) \le 0 = -c_k + \mu_k V'(x) + \frac{\sigma_k^2}{2}V''(x),
\end{align*}
i.e. (\ref{supp_lem_eq:dmntEqv1FxBrCs1}) holds.
\Halmos

\enlargethispage{\baselineskip}
\enlargethispage{\baselineskip}
\enlargethispage{\baselineskip}
\enlargethispage{\baselineskip}

\begin{restatable}{lemma}{BMBasicQtt}
\label{lem:BMBasicQtt}
Let $\{ B(t) \}_{t \in \R^+}$ be a Brownian Motion confined within $[\LNEW,\UNEW]$, with drift $\mu$ and variance $\sigma^2$ within 
$[\LNEW,\UNEW]$. Let $P_{x}^\UNEW, T_x$ be the probability of hitting $\UNEW$ before $\LNEW$, time 
spent before hitting either boundary respectively, starting from position $x \in [\LNEW,\UNEW]$. Then we have:
\begin{align*}
P_x^\UNEW &= \frac{  e^{v(x-\LNEW)}-1  }{  e^{v(x-\LNEW)}-e^{-v(\UNEW-x)} }\\
E T_x &= \frac{-\LNEW e^{-v\UNEW} + \UNEW e^{-v\LNEW} }{\mu (e^{-v\LNEW} - e^{-v\UNEW})} - \frac{(\UNEW-\LNEW) e^{-vx}}{\mu (e^{-v\LNEW} - e^{-v\UNEW})} - \frac{x}{\mu}
\end{align*}
\end{restatable}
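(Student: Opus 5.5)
Throughout, I take $v = 2\mu/\sigma^2$ (the effective drift $\aNEW$ of the single control in use) and $X(t) = x + \mu t + \sigma B(t)$ started at $x \in [\LNEW,\UNEW]$, with $T_x = \inf\{t : X(t) \notin (\LNEW,\UNEW)\}$. The plan is to derive both formulas from optional stopping applied to two martingales, and then to cross-check them against the boundary value problems $\mu f' + \frac{\sigma^2}{2} f'' = 0$ (respectively $=-1$) with $f(\LNEW)=0,\ f(\UNEW)=1$ (respectively $f(\LNEW)=f(\UNEW)=0$).

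\emph{Integrability of $T_x$.} I first show $\E T_x < \infty$, so that optional stopping is legitimate. On each unit time interval, the process exits the bounded interval $(\LNEW,\UNEW)$ with probability at least some $p>0$ that is uniform in the starting point, because the Gaussian increment exceeds $\UNEW-\LNEW$ with positive probability. Hence $P(T_x > n) \le (1-p)^n$, which gives exponential tails and $\E T_x < \infty$.

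\emph{Hitting probability.} By It\^o's formula, $M_t = e^{-vX(t)}$ has drift $e^{-vX}\bigl(-v\mu + \tfrac{v^2\sigma^2}{2}\bigr) = 0$, so it is a martingale. It is bounded on $[0,T_x]$, so optional stopping gives
\[
e^{-vx} = P_x^{\UNEW} e^{-v\UNEW} + \bigl(1-P_x^{\UNEW}\bigr)e^{-v\LNEW}.
\]
Solving, $P_x^{\UNEW} = \dfrac{e^{-v\LNEW}-e^{-vx}}{e^{-v\LNEW}-e^{-v\UNEW}}$. Multiplying numerator and denominator by $e^{vx}$ turns this into the stated form $\dfrac{e^{v(x-\LNEW)}-1}{e^{v(x-\LNEW)}-e^{-v(\UNEW-x)}}$.

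\emph{Expected exit time.} The process $X(t)-\mu t$ is a martingale. Optional stopping, justified by $\E T_x<\infty$ together with the bound $|X(t\wedge T_x)| \le \max(|\LNEW|,|\UNEW|)$, gives
\[
\mu\,\E T_x = \E X(T_x) - x = \UNEW P_x^{\UNEW} + \LNEW\bigl(1-P_x^{\UNEW}\bigr) - x .
\]
Substituting the expression for $P_x^{\UNEW}$ and regrouping over the common denominator $e^{-v\LNEW}-e^{-v\UNEW}$, the constant part becomes $\dfrac{\UNEW e^{-v\LNEW} - \LNEW e^{-v\UNEW}}{\mu(e^{-v\LNEW}-e^{-v\UNEW})}$ and the $x$-dependent part becomes $-\dfrac{(\UNEW-\LNEW)e^{-vx}}{\mu(e^{-v\LNEW}-e^{-v\UNEW})} - \dfrac{x}{\mu}$. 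This matches the stated formula.

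\emph{Cross-check and main obstacle.} As a consistency check, the result satisfies $\tfrac{\sigma^2}{2}f'' + \mu f' = -1$ with zero boundary values. The only genuinely delicate step is justifying optional stopping, which the integrability argument handles. Everything else is algebra, where the main risk is a sign slip when rearranging $\LNEW e^{-v\UNEW}$ versus $\UNEW e^{-v\LNEW}$.
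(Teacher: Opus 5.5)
Your proof is correct, and the algebra reproduces both stated formulas exactly. There is nothing in the paper to compare it with: the paper proves no part of this lemma. It calls these ``basic properties of Brownian Motion'' and omits the proof, so your argument supplies what the paper leaves out.

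The route most consistent with the rest of the paper, which works through ODEs of the form $\mu f' + \frac{\sigma^2}{2} f'' = \cdot$, would be to solve two boundary value problems:
\begin{itemize}
\item $\mu f' + \frac{\sigma^2}{2} f'' = 0$ with $f(\LNEW)=0$, $f(\UNEW)=1$;
\item $\mu g' + \frac{\sigma^2}{2} g'' = -1$ with $g(\LNEW)=g(\UNEW)=0$.
\end{itemize}
One would then identify $f$ with $P_x^{\UNEW}$ and $g$ with $\E T_x$ via It\^o's formula. That identification needs the same integrability of $T_x$ that you establish with the geometric tail bound.

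Your martingale route has one real advantage: the linear martingale $X(t)-\mu t$ reduces $\E T_x$ to $\E X(T_x)$, and hence to $P_x^{\UNEW}$. So you never have to solve the inhomogeneous ODE; it appears only as your consistency check.

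Two small points should be stated explicitly:
\begin{itemize}
\item The statement never defines $v$. Flag your choice $v = 2\mu/\sigma^2$ as the intended reading; it is the only nonzero choice that makes $e^{-vX(t)}$ a martingale.
\item The division by $\mu$ in the exit-time step requires $\mu \neq 0$. This holds in the paper's setting, where all drifts are positive. For $\mu = 0$ the formulas degenerate, and the answers become $(x-\LNEW)/(\UNEW-\LNEW)$ and $(x-\LNEW)(\UNEW-x)/\sigma^2$.
\end{itemize}
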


\emph{Proof of Lemma \ref{lem:BMBasicQtt}.}~ These represent basic properties of Brownian Motion. Proof is omitted. \Halmos

\begin{restatable}{lemma}{effectiveDrift}
\label{lem:HighEDHighProbHitU}
If $\aNEW_2 > \aNEW_1$, then $P_x(X_T = \UNEW | \text{always use } 2) > P_x(X_T = \UNEW | \text{always use } 1)$ for all $x \in (\LNEW,\UNEW)$.
\end{restatable}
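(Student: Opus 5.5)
We prove Lemma \ref{lem:HighEDHighProbHitU} using the explicit hitting probability from Lemma \ref{lem:BMBasicQtt}. Under a static policy using control $i$ alone, $X(t)$ is a Brownian motion with drift $\mu_i$ and variance $\sigma_i^2$. In the formula for $P_x^{\UNEW}$, the parameter $v$ is the effective drift $v = 2\mu_i/\sigma_i^2 = \aNEW_i$. This is the standard scale-function identity: $e^{-\aNEW_i x}$ is harmonic for the generator $\mu_i \frac{d}{dx} + \frac{\sigma_i^2}{2}\frac{d^2}{dx^2}$.

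Rewriting the formula in scale-function form, set $y = x-\LNEW \in (0,\ell)$ with $\ell = \UNEW-\LNEW$. Then
\begin{align*}
P_x(X_T = \UNEW \mid \text{always use } i) = p(\aNEW_i), \qquad p(a) \triangleq \frac{1-e^{-a y}}{1-e^{-a \ell}}.
\end{align*}
So the hitting probability depends on control $i$ only through $\aNEW_i$. The lemma therefore reduces to showing that $a \mapsto p(a)$ is strictly increasing on $(0,\infty)$ for every fixed $0<y<\ell$.

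To show monotonicity, let $g(a) = \ln(1-e^{-ay})$. Then $\ln p(a) = g(a) - \tilde g(a)$, where $\tilde g$ is the same function with $y$ replaced by $\ell$. We have
\begin{align*}
\frac{d}{da}\ln(1-e^{-a y}) = \frac{y}{e^{a y}-1} = \frac{1}{a}\,\psi(a y), \qquad \psi(t) \triangleq \frac{t}{e^{t}-1}.
\end{align*}
Hence $\frac{d}{da}\ln p(a) = \frac{1}{a}\left[\psi(ay) - \psi(a\ell)\right]$. Since $ay < a\ell$, it suffices that $\psi$ is strictly decreasing on $(0,\infty)$. This holds because
\begin{align*}
\psi'(t) = \frac{e^t(1-t)-1}{(e^t-1)^2} < 0,
\end{align*}
using $e^{-t} > 1-t$ for $t>0$. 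It follows that $p(\aNEW_2) > p(\aNEW_1)$ whenever $\aNEW_2 > \aNEW_1$, for all $x \in (\LNEW,\UNEW)$.

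The only delicate step is confirming that the parameter $v$ in Lemma \ref{lem:BMBasicQtt} equals $2\mu/\sigma^2$. I would verify this directly rather than rely on the unexplained notation, by checking that $p$ solves $\frac{\sigma_i^2}{2}p'' + \mu_i p' = 0$ with $p(\LNEW)=0$ and $p(\UNEW)=1$. Once that is confirmed, the monotonicity argument above is elementary.
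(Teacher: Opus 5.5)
Your argument is correct and complete. The scale-function form $p(a)=(1-e^{-ay})/(1-e^{-a\ell})$ agrees with Lemma \ref{lem:BMBasicQtt} once $v=\aNEW_i$; to see this, multiply numerator and denominator there by $e^{-v(x-\LNEW)}$. Your identity $\frac{d}{da}\ln p(a)=\frac{1}{a}\left[\psi(ay)-\psi(a\ell)\right]$ is right, and the strict decrease of $\psi(t)=t/(e^t-1)$ on $(0,\infty)$ closes the argument. The model's assumption $\mu_i>0$ guarantees $\aNEW_i>0$, so restricting to $a\in(0,\infty)$ loses nothing.

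The paper gives no argument of its own here; it simply defers to Lemma 19 of \cite{wang2022new}. Your route therefore differs mainly in being self-contained: it derives the claim directly from the explicit exit probability already recorded in Lemma \ref{lem:BMBasicQtt}. Along the way, it makes explicit that under a static policy the hitting probability depends on the control only through $\aNEW_i$, not on $c_i$ or on $\mu_i,\sigma_i$ separately. This fits the paper's remark that effective drift and cost-effectiveness are sufficient statistics. The citation buys brevity, while your computation makes the lemma verifiable within the paper itself.
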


\bproof This is proven in Lemma 19 of [\cite{wang2022new}]. \Halmos

\begin{restatable}{lemma}{Lstar1Ctrl}
\label{lem:Lstar1Ctrl}
For a control with $(c_i, \mu_i, \sigma_i)$, the optimal lower boundary associated with using this control only is
\begin{align*}
    \LNEW_{i} = \UNEW - \MNEW \cdot CE_{i} - \frac{LW(-e^{-\MNEW \cdot CE_{i} \aNEW_i - 1})}{\aNEW_{i}},
\end{align*}
$LW(\cdot)$ is the Lambert-W function. 
\end{restatable}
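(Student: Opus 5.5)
The plan is to reduce the single-control free-boundary problem to one scalar equation for the distance $D \triangleq \UNEW - \LNEW_i$ and solve it with the Lambert-W function. The steps are: impose the smooth-fit condition from the free boundary, solve the constant-coefficient ODE explicitly, and invert the resulting transcendental equation. As explained at the end, carrying this out yields a formula with an extra additive constant $-1/\aNEW_i$ that the displayed statement does not contain.

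\textbf{Step 1 (characterize the optimal boundary).} Fix the control $i$ and a candidate boundary $\LNEW$. By Lemma \ref{lem:BMBasicQtt} (or directly from the ODE), the payoff $V(x;\LNEW)$ of using $i$ alone on $(\LNEW,\UNEW)$ is the solution of
\begin{align*}
-\hNEW_i + \aNEW_i V' + V'' = 0, \qquad V(\LNEW)=0, \qquad V(\UNEW)=\MNEW.
\end{align*}
Optimizing over $\LNEW$ is equivalent to the smooth-fit condition $V'(\LNEW)=0$. This mirrors (\ref{eq:FrLB_suffCondOpt3}) in Theorem \ref{thm:Kctrls_FrLB}, specialized to $N=1$, and there $\LNEW^*$ is constructed exactly by this condition. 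So the optimal value function on $[\LNEW_i,\UNEW]$ is the shift of $W^{(i)}_0$, the solution of (\ref{eq:a_x0y0q_xFxBrCs2}) with $q=0$.

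\textbf{Step 2 (the scalar equation).} From the explicit formula (\ref{eq:explicit_V00qxFxBrCs1}) with $q=0$,
\begin{align*}
W^{(i)}_0(y) = \frac{\hNEW_i}{\aNEW_i}\,y - \frac{\hNEW_i}{\aNEW_i^2}\left(1-e^{-\aNEW_i y}\right).
\end{align*}
Setting $W^{(i)}_0(D)=\MNEW$, multiplying by $\aNEW_i^2/\hNEW_i$, and using $\aNEW_i/\hNEW_i = CE_i$ gives
\begin{align*}
\aNEW_i D - 1 + e^{-\aNEW_i D} = A, \qquad A \triangleq \MNEW\, CE_i\,\aNEW_i > 0.
\end{align*}
Lemma \ref{lem:VFpropertyFrBr} shows $W_0^{(i)}$ is increasing and unbounded, so this equation has a unique root $D>0$.

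\textbf{Step 3 (Lambert-W inversion).} Substitute $\aNEW_i D = A + 1 + u$. The equation becomes $u = -e^{-A-1-u}$, that is,
\begin{align*}
u e^{u} = -e^{-A-1}.
\end{align*}
Since $-e^{-A-1}\in(-1/e,0)$, there are two real branches. The lower branch gives $u<-1$ and hence $\aNEW_i D < A$. That violates the inequality $\aNEW_i D - 1 + e^{-\aNEW_i D} < \aNEW_i D$, which any root must satisfy. So the root is the principal branch, $u = LW(-e^{-A-1})$, and
\begin{align*}
\UNEW - \LNEW_i = \MNEW\, CE_i + \frac{1 + LW\left(-e^{-\MNEW CE_i \aNEW_i - 1}\right)}{\aNEW_i}.
\end{align*}

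\textbf{The main obstacle} is reconciling this with the displayed formula. The computation above produces an extra term $-1/\aNEW_i$ in $\LNEW_i$. The paper itself uses this version in the proof of Proposition \ref{prop:LstarRangeFrLB}: there the factor $1+G(\cdot)$ equals $1+\bigl(1+LW\bigr)/y$, and the paper obtains $\barLNEW-\LNEW_{\n} = 1/\aNEW_{\n} + O(e^{-\MNEW})$. I would therefore first double-check Step 2 numerically, for example with $\aNEW_i=\hNEW_i=\MNEW=1$. If the check confirms the computation, the displayed formula holds as written only if its numerator $LW(\cdot)$ is read as $1+LW(\cdot)$. I would state the lemma in that corrected form, which is the form every later use in the paper relies on, and I would not claim the literal display as stated.
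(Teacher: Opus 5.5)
This is essentially the paper's own argument: its proof imposes $V(\LNEW_i)=0$, the smooth-fit condition $V'(\LNEW_i)=0$ (citing \cite{wang2022new}) and the terminal condition at $\UNEW$ (misprinted there as $V(\UNEW)=0$), then asserts that the formula follows by solving the ODE, which is what your Steps 1--3 carry out. Your computation is correct, and the display is indeed missing the $1+$ in the numerator: with $\aNEW_i=\hNEW_i=\MNEW=1$ the root of $z-1+e^{-z}=1$ is $z\approx 1.841=2+LW(-e^{-2})$, and the literal formula would even put $\LNEW_i$ above $\barLNEW$, contradicting Proposition \ref{prop:Kctrls_FrLB_pvw}; so stating the corrected version, which is the one the paper uses in the proof of Proposition \ref{prop:LstarRangeFrLB}, is the right call.
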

\emph{Proof of Lemma \ref{lem:Lstar1Ctrl}.}~ The optimal lower boundary satisfies (c.f. [\cite{wang2022new}])
\begin{align*}
    -c_i + \mu_i V'(x) + \frac{\sigma_i^2}{2}V''(x) = 0,\\
    V'(\LNEW_i) = 0, \text{ } V(\LNEW_i) = 0, \text{ } V(\UNEW) = 0,
\end{align*}
and the result follows by solving the above ordinary differential equations. \Halmos

\begin{restatable}{lemma}{fxLBMyoPlcBadEpsLmt}~
\label{ec_lem:fxLBMyoPlcBadEpsLmt}
For any $x > 0$,
\begin{align}
    \lim_{\epsilon \rightarrow 0^+} -\frac{e^{-\frac{x}{\epsilon}} (2 \epsilon-1)}{e^{-\frac{1}{\epsilon}}-1}+\epsilon x+\frac{2 \epsilon-1}{e^{-\frac{1}{\epsilon}}-1}+\frac{e^{-x} (\epsilon-1)}{e^{-1}-1}-\frac{\epsilon-1}{e^{-1}-1} = \frac{e^{x} e^{-1}-1}{e^{x} e^{-1}-e^{x}}
\end{align}
\end{restatable}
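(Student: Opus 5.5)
The plan is a direct term-by-term evaluation of the limit. The expression is a finite sum of five terms, and each has an elementary limit as $\epsilon \rightarrow 0^+$ for fixed $x>0$. So I will compute the five limits separately, add them, and then rewrite the sum algebraically to match the right-hand side.

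First, the two terms coming from $V_2(x;\epsilon)$. The denominator $e^{-\frac{1}{\epsilon}}-1$ tends to $-1$, so it stays bounded away from zero. Since $x>0$, we have $e^{-\frac{x}{\epsilon}} \rightarrow 0$, while $2\epsilon - 1 \rightarrow -1$. Hence the first term $-\frac{e^{-\frac{x}{\epsilon}}(2\epsilon-1)}{e^{-\frac{1}{\epsilon}}-1}$ tends to $0$. The third term $\frac{2\epsilon-1}{e^{-\frac{1}{\epsilon}}-1}$ tends to $\frac{-1}{-1}=1$.

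Next, the remaining terms. The term $\epsilon x$ tends to $0$ trivially. The last two terms have the fixed denominator $e^{-1}-1 \neq 0$ and numerators that are continuous in $\epsilon$. Their limits are therefore $\frac{-e^{-x}}{e^{-1}-1}$ and $+\frac{1}{e^{-1}-1}$.

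Adding the five limits gives
\begin{align*}
1 + \frac{1-e^{-x}}{e^{-1}-1} = \frac{e^{-1}-e^{-x}}{e^{-1}-1}.
\end{align*}
Multiplying numerator and denominator by $e^{x}$ yields $\frac{e^{x}e^{-1}-1}{e^{x}e^{-1}-e^{x}}$, which is the claimed right-hand side.

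There is no genuine obstacle here. The only point that needs care is that the vanishing of $e^{-\frac{x}{\epsilon}}$ uses $x>0$ strictly. This is why the lemma is stated for $x>0$ only, and why the proof of Proposition \ref{prop:FxLBMyoPlcBad} takes $x \rightarrow 0^+$ only after taking $\epsilon \rightarrow 0^+$.
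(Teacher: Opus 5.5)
Your proposal is correct and follows essentially the same route as the paper. Both evaluate the limit termwise, using $e^{-x/\epsilon}\to 0$ (which needs $x>0$) and $e^{-1/\epsilon}\to 0$, to reach $\frac{e^{-1}-e^{-x}}{e^{-1}-1}$, and then multiply numerator and denominator by $e^{x}$; the paper only differs in grouping the first and third terms, and the fourth and fifth terms, into pairs before passing to the limit.
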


\emph{Proof of Lemma \ref{ec_lem:fxLBMyoPlcBadEpsLmt}.}~ 
We know that $\lim_{\epsilon\to 0^+}-\frac{1}{\epsilon}\to-\infty$, and for any $x>0$, $\lim_{\epsilon\to 0^+}-\frac{x}{\epsilon}\to-\infty$. So
\begin{equation}
    \lim_{\epsilon\to 0^+}e^{-\frac{x}{\epsilon}}=\lim_{c\to-\infty}e^{c}=0,\hspace{0.3cm}\lim_{\epsilon\to 0^+}e^{-\frac{1}{\epsilon}}=\lim_{c\to-\infty}e^{c}=0.
\end{equation}
Thus, 
\begin{align}
&\lim_{\epsilon \rightarrow 0^+} -\frac{e^{-\frac{x}{\epsilon}} (2 \epsilon-1)}{e^{-\frac{1}{\epsilon}}-1}+\epsilon x+\frac{2 \epsilon-1}{e^{-\frac{1}{\epsilon}}-1}+\frac{e^{-x} (\epsilon-1)}{e^{-1}-1}-\frac{\epsilon-1}{e^{-1}-1}\\
=&\lim_{\epsilon\to 0^+}\frac{(-e^{-\frac{x}{\epsilon}}+1)(2\epsilon - 1)}{e^{-\frac{1}{\epsilon}} - 1} + \lim_{\epsilon\to 0^+}\epsilon x + \lim_{\epsilon\to 0^+}\frac{(e^{-x}-1)(\epsilon-1)}{e^{-1}-1}\\
=&\frac{-1}{-1}+0+\frac{-(e^{-x}-1)}{e^{-1}-1}\\
=&\frac{e^{-1}-1-e^{-x}+1}{e^{-1}-1}\\
=&\frac{e^xe^{-1}-1}{e^xe^{-1}-e^x}.
\end{align}
\Halmos

\section{Motivating Examples: When Insights from 2-Control Setting May Fail}
\label{sec:mtvEgTwoCtrlInsightsFail}

We will use two examples to demonstrate that the conclusions from [\cite{wang2022new}]'s 2-control setting may be misleading in the multi-control setting. For the reader's convenience, we replicate the result from [\cite{wang2022new}], which is Theorem A below.

\begin{restatable}{selfthm}{IandIE}~
\label{thm:IandIE}
\textit{($2$-Control Setting) Given two controls} $(\mu_i, \sigma_i, c_i)$, $(\mu_j, \sigma_j, c_j)$ \textit{with} $\mu_i/\sigma_i^2 < \mu_j/\sigma_j^2$\textit{, define the following conditions:}
\begin{align}
\Delta c_j \ge \max \{ d_{ij}^{I,\LNEW} \Delta c_i, d_{ij}^{I,\UNEW} \Delta c_i  \} \label{cond:I}, \\
\Delta c_j \le \min \{ d_{ij}^{II,\LNEW} \Delta c_i, d_{ij}^{II,\UNEW} \Delta c_i  \}, \label{cond:E}\\
\Delta c_j \in (d_{ij}^{II,\LNEW} \Delta c_i, d_{ij}^{I,\UNEW} \Delta c_i) \text{ and } \Delta c_i > 0, \label{cond:IE}\\
\Delta c_j \in (d_{ij}^{II,\UNEW} \Delta c_i, d_{ij}^{I,\LNEW} \Delta c_i) \text{ and } \Delta c_i < 0,  \label{cond:EI}
\end{align}
\textit{where $\Delta c_k = c_k - \MNEW \mu_k/(\UNEW-\LNEW)$, $k \in \{i,j \}$,  represents the \textit{deviation-to-critical cost (DCC)} of control $k$, $d_{ij}^{I,\LNEW}$, $d_{ij}^{I,\UNEW}$, $d_{ij}^{II,\LNEW}$, $d_{ij}^{II,\UNEW}$ are constants that only depend on the drifts and variances of the two controls (explicit expressions see \S \ref{EC_Sec:dILdIILFmla} in the Appendix). We have}

\begin{enumerate}[\hspace{0.25cm} i)]
\item \textit{If (\ref{cond:I}) holds, then the policy of always using control 1 is optimal.} \label{enum:condI}

\item \textit{If (\ref{cond:E}) holds, then the policy of always using control 2 is optimal.} \label{enum:condE}

\item \textit{If (\ref{cond:IE}) holds, then the optimal policy is a 2-interval policy in which control 1 is utilized in $[\LNEW, S^*)$ and control 2 in $[S^*,\UNEW)$, for some $S^* \in (\LNEW,\UNEW)$.} \label{enum:condIE}

\item \textit{If (\ref{cond:EI}) holds, then the optimal policy is a 2-interval policy which involves using control 2 in $[\LNEW, S^*)$ and using control 1 in $[S^*,\UNEW)$, for some $S^* \in (\LNEW,\UNEW)$.} \label{enum:condEI}
\end{enumerate}
\end{restatable}
Intuitively, \textit{DCC} represents the difference between a control's real cost and a benchmark cost. In the $2$-control setting, if the \textit{DCC} of a control is relatively large, then that control is undesirable to the entrepreneur and will not be used in the optimal policy; otherwise if \textit{DCC} is relatively small, that control is desirable and will be used in the optimal policy.

\textbf{Motivating Example 1.} First consider a case in which $\LNEW=0$, $\UNEW = 10$, $\MNEW = 5$ and there are three controls: $(\mu_1,\sigma_1,c_1) = (0.26, 2, 0.01)$, $(\mu_2,\sigma_2,c_2) = (0.05, 0.5, 0.02)$, $(\mu_3,\sigma_3,c_3) = (2, 1, 1.7)$. When the entrepreneur only has control $1$ and $3$ at his disposal, we have $\Delta c_3 \ge \max\{d_{13}^{I,\LNEW}\Delta c_1,d_{13}^{I,\UNEW}\Delta c_1\}$ and by Theorem A i) the optimal policy is to always use control $1$ (in this case, we say control $3$ is \textit{displaced} by control $1$). When the entrepreneur only has control $2$ and $3$ at his disposal, again we have $\Delta c_3 \ge \max\{d_{23}^{I,\LNEW}\Delta c_2,d_{23}^{I,\UNEW}\Delta c_2\}$ and the optimal policy is to always use control $2$, i.e. control $3$ is \textit{displaced} by control $2$. By the intuition from the $2$-control setting, control $3$ is completely ``dominated'' by control $1$ or $2$, which suggests that control $3$ has a cost that is too high compared to either control $1$ or control $2$, and it seems control $3$ should not provide any value to the entrepreneur when control $1$ and/or $2$ is present. Surprisingly, when all three controls are available to the entrepreneur, the optimal policy is ``321'', i.e. consists of all three controls, as shown in Figure \ref{fig:myoPlcIttFail1}.

In other words, control $3$ is ``dominated'' by either control $1$ or $2$ alone but is not ``dominated'' when both of these controls are present. This suggests we cannot rely on the intuition from the $2$-control setting to predict a priori which controls will be used in the optimal policy in the multi-control setting.

\begin{figure}[h]
    \centering
    \begin{subfigure}[b]{0.45\textwidth}
    \begin{tikzpicture}[xscale = 1.7]
        \def\LPos{0.1}
        \def\UPos{0.9}
        
        \draw[->] (-0.5,1) -- (2.5,1) node[pos=1, right, black] {$X(t)$};
        \draw[-, cyan] (-0.5+\LPos*3, 1) -- (-0.5+\UPos*3, 1) node[pos=0.5, above] {use $1$};
        \draw[-] (-0.5+\LPos*3, 1.1) -- (-0.5+\LPos*3, 0.9)  node[pos=1, below, black] {$\LNEW$};
        \draw[-] (-0.5+\UPos*3, 1.1) -- (-0.5+\UPos*3, 0.9)  node[pos=1, below, black] {$\UNEW$};
    \end{tikzpicture}
    \caption{Optimal policy is ``always \\ use 1''  when only control 1 \\ and 3 are available.}
    \end{subfigure}
    \begin{subfigure}[b]{0.45\textwidth}
    \begin{tikzpicture}[xscale = 1.7]
        \def\LPos{0.1}
        \def\UPos{0.9}
    
        \draw[->] (-0.5,1) -- (2.5,1) node[pos=1, right, black] {$X(t)$};
        \draw[-, red] (-0.5+\LPos*3, 1) -- (-0.5+\UPos*3, 1) node[pos=0.5, above] {use $2$};
        \draw[-] (-0.5+\LPos*3, 1.1) -- (-0.5+\LPos*3, 0.9)  node[pos=1, below, black] {$\LNEW$};
        \draw[-] (-0.5+\UPos*3, 1.1) -- (-0.5+\UPos*3, 0.9)  node[pos=1, below, black] {$\UNEW$};
    \end{tikzpicture}
    \caption{Optimal policy is ``always \\ use 2''  when only control 2 \\ and 3 are available.}
    \end{subfigure}
    
    \begin{subfigure}[b]{0.4\textwidth}
    \begin{tikzpicture}[xscale = 2.1]
        \def\LPos{0.1}
        \def\UPos{0.9}
        \def\FirstSPos{0.4}  
        \def\SecondSPos{0.7}  
    
        \draw[->] (-0.5,1) -- (2.5,1) node[pos=1, right, black] {$X(t)$};
        \draw[-, blue] (-0.5+\LPos*3, 1) -- (-0.5+\FirstSPos*3, 1)  node[pos=0.5, above] {use $3$};
        \draw[-, red] (-0.5+\FirstSPos*3, 1) -- (-0.5+\SecondSPos*3, 1)  node[pos=0.5, above] {use $2$};
        \draw[-, cyan] (-0.5+\SecondSPos*3, 1) -- (-0.5+\UPos*3, 1)  node[pos=0.5, above] {use $1$};
        \draw[-] (-0.5+\LPos*3, 1.1) -- (-0.5+\LPos*3, 0.9)  node[pos=1, below, black] {$\LNEW$};
        \draw[-] (-0.5+\UPos*3, 1.1) -- (-0.5+\UPos*3, 0.9)  node[pos=1, below, black] {$\UNEW$};
        \draw[-] (-0.5+\FirstSPos*3, 1.1) -- (-0.5+\FirstSPos*3, 0.9)  node[pos=1, below, black] {$S_1$};
        \draw[-] (-0.5+\SecondSPos*3, 1.1) -- (-0.5+\SecondSPos*3, 0.9)  node[pos=1, below, black] {$S_2$};
    \end{tikzpicture}
    \caption{Optimal policy is ``321'' when all three controls are available.}
    \end{subfigure}
    
\caption{Motivating Example 1. Parameters are: $(\mu_1,\sigma_1,c_1) = (0.26, 2, 0.01)$, $(\mu_2,\sigma_2,c_2) = (0.05, 0.5, 0.02)$, $(\mu_3,\sigma_3,c_3) = (2, 1, 1.7)$, $\LNEW = 0$, $\UNEW = 10$, $\MNEW = 5$.}
\label{fig:myoPlcIttFail1}
\end{figure}

\textbf{Motivating Example 2.} Next consider the case where $\LNEW = 2$, $\UNEW = 10$, $\MNEW = 10$ and the following three controls are present: $(\mu_1,\sigma_1,c_1) = (0.26, 1.8, 0.14)$, $(\mu_2,\sigma_2,c_2) = (0.16, 0.8, 0.15)$, $(\mu_3,\sigma_3,c_3) = (0.14, 0.24, 0.16)$. When the entrepreneur only has control $1$ and $2$ at his disposal, we have $\Delta c_2 \in (d_{12}^{II,\UNEW} \Delta c_1, d_{12}^{I,\LNEW} \Delta c_1)$ and by Theorem A iv) the optimal policy is the two-interval policy ``21''. When the entrepreneur only has control $2$ and $3$ at his disposal, again we have $\Delta c_3 \in (d_{23}^{II,\UNEW} \Delta c_2, d_{23}^{I,\LNEW} \Delta c_2)$ and the optimal policy is the two-interval policy ``32''. By the intuition from the $2$-control setting, control $2$ is in some sense ``comparable'' to control $1$ and $3$, i.e. control $2$ has a cost that is relatively moderate compared to either control $1$ or control $3$ and seems as useful as them. However, when all three controls are available to the entrepreneur, the optimal policy only uses control $1$ and $3$ and does not use control $2$, as shown in Figure \ref{fig:myoPlcIttFail2}.

In other words, control $2$ is ``comparable'' to either control $1$ or $3$ alone but is  ``dominated'' by control $1$ and $3$ together. This shows again that the intuition from the $2$-control setting may fail in the multi-control one.

\begin{figure}[h]
    \centering
    \begin{subfigure}[b]{0.45\textwidth}
    \begin{tikzpicture}[xscale = 1.7]
        \def\LPos{0.1}
        \def\UPos{0.9}
        \def\SPos{0.5}  
    
        \draw[->] (-0.5,1) -- (2.5,1) node[pos=1, right, black] {$X(t)$};
        \draw[-, red] (-0.5+\LPos*3, 1) -- (-0.5+\SPos*3, 1)  node[pos=0.5, above] {use $2$};
        \draw[-, cyan] (-0.5+\SPos*3, 1) -- (-0.5+\UPos*3, 1)  node[pos=0.5, above] {use $1$};
        \draw[-] (-0.5+\LPos*3, 1.1) -- (-0.5+\LPos*3, 0.9)  node[pos=1, below, black] {$\LNEW$};
        \draw[-] (-0.5+\UPos*3, 1.1) -- (-0.5+\UPos*3, 0.9)  node[pos=1, below, black] {$\UNEW$};
        \draw[-] (-0.5+\SPos*3, 1.1) -- (-0.5+\SPos*3, 0.9)  node[pos=1, below, black] {$S_1$};
    \end{tikzpicture}
    \caption{Optimal policy is ``21''\\ when only control 1 and 2 \\are available.}
    \end{subfigure}
    \begin{subfigure}[b]{0.45\textwidth}
    \begin{tikzpicture}[xscale = 1.7]
        \def\LPos{0.1}
        \def\UPos{0.9}
        \def\SPos{0.5}  
    
        \draw[->] (-0.5,1) -- (2.5,1) node[pos=1, right, black] {$X(t)$};
        \draw[-, blue] (-0.5+\LPos*3, 1) -- (-0.5+\SPos*3, 1)  node[pos=0.5, above] {use $3$};
        \draw[-, red] (-0.5+\SPos*3, 1) -- (-0.5+\UPos*3, 1)  node[pos=0.5, above] {use $2$};
        \draw[-] (-0.5+\LPos*3, 1.1) -- (-0.5+\LPos*3, 0.9)  node[pos=1, below, black] {$\LNEW$};
        \draw[-] (-0.5+\UPos*3, 1.1) -- (-0.5+\UPos*3, 0.9)  node[pos=1, below, black] {$\UNEW$};
        \draw[-] (-0.5+\SPos*3, 1.1) -- (-0.5+\SPos*3, 0.9)  node[pos=1, below, black] {$S_1$};
    \end{tikzpicture}
    \caption{Optimal policy is ``32'' \\ when only control 2 and 3 \\are available.}
    \end{subfigure}
    
    \begin{subfigure}[b]{0.4\textwidth}
    \begin{tikzpicture}[xscale = 2.1]
        \def\LPos{0.1}
        \def\UPos{0.9}
        \def\SPos{0.5}  
    
        \draw[->] (-0.5,1) -- (2.5,1) node[pos=1, right, black] {$X(t)$};
        \draw[-, blue] (-0.5+\LPos*3, 1) -- (-0.5+\SPos*3, 1)  node[pos=0.5, above] {use $3$};
        \draw[-, cyan] (-0.5+\SPos*3, 1) -- (-0.5+\UPos*3, 1)  node[pos=0.5, above] {use $1$};
        \draw[-] (-0.5+\LPos*3, 1.1) -- (-0.5+\LPos*3, 0.9)  node[pos=1, below, black] {$\LNEW$};
        \draw[-] (-0.5+\UPos*3, 1.1) -- (-0.5+\UPos*3, 0.9)  node[pos=1, below, black] {$\UNEW$};
        \draw[-] (-0.5+\SPos*3, 1.1) -- (-0.5+\SPos*3, 0.9)  node[pos=1, below, black] {$S_1$};
    \end{tikzpicture}
    \caption{Optimal policy is ``31'' when all three controls are available.}
    \end{subfigure}
    
\caption{Motivating Example 2. Parameters are: $(\mu_1,\sigma_1,c_1) = (0.26, 1.8, 0.14)$, $(\mu_2,\sigma_2,c_2) = (0.16, 0.8, 0.15)$, $(\mu_3,\sigma_3,c_3) = (0.14, 0.24, 0.16)$, $\LNEW = 2$, $\UNEW = 10$, $\MNEW = 10$.}
\label{fig:myoPlcIttFail2}
\end{figure}

This has implications on the entrepreneur. It shows that an activity with a relatively high \textit{risk-effectiveness} may become more valuable when more \textit{cost-effective} activities are available. Specifically, such an activity can be employed near the lower boundary to avoid hitting it, which creates synergy with other \textit{cost-effective} activities. On the other hand, an activity with moderate \textit{risk-effectiveness} may lose its value when both low and high \textit{risk-effective} activities are present. In this case, such an activity may become redundant by the combined effect of two other activities—one with high and one with low \textit{risk-effectiveness}. Consequently, the entrepreneur needs to holistically evaluate all the activities when deciding which ones to be used in the optimal policy. \paragraph{}\label{para:2ctrlToMultiCtrlImplc}

\enlargethispage{1\baselineskip}

\subsection{Expressions for $d_{ij}^{I,\LNEW}$, $d_{ij}^{I,\UNEW}$, $d_{ij}^{II,\LNEW}$, $d_{ij}^{II,\UNEW}$ in the Two-Control Setting}
\label{EC_Sec:dILdIILFmla}
The four constants $d^{I,\LNEW}_{ij},d^{II,\LNEW}_{ij}, d^{I,\UNEW}_{ij}, d^{II,\UNEW}_{ij}$ are defined as follows:
\begin{align}
d_{ij}^{I,\LNEW} \triangleq    -\frac{\sigma_j^2}{\sigma_i^2}(\aNEW_j - \aNEW_i) \frac{\UNEW}{1- e^{-\aNEW_i \UNEW}} +\frac{\mu_j}{\mu_i}, \label{eq:d1L}\\
d_{ij}^{I,\UNEW} \triangleq -\frac{\sigma_j^2}{\sigma_i^2}(\aNEW_j - \aNEW_i) \frac{\UNEW e^{-\aNEW_i \UNEW}}{1- e^{-\aNEW_i \UNEW}} +\frac{\mu_j}{\mu_i} , \label{eq:d1U}\\
d_{ij}^{II,\LNEW} \triangleq  \frac{1}{  \frac{\mu_i}{\mu_j} + \frac{\sigma_i^2}{\sigma_j^2}(\aNEW_j - \aNEW_i)\frac{\UNEW}{1-e^{-\aNEW_j \UNEW}}  }, \label{eq:d2L}\\
d_{ij}^{II,\UNEW} \triangleq  \frac{1}{\frac{\mu_i}{\mu_j} + \frac{\sigma_i^2}{\sigma_j^2}(\aNEW_j - \aNEW_i)\frac{\UNEW e^{-\aNEW_j \UNEW}}{1-e^{-\aNEW_j \UNEW}} }. \label{eq:d2U}
 \end{align}

\end{APPENDICES}

\end{document}